\newcommand*{\mydoi}[1]{\href{http://dx.doi.org/#1}{\includegraphics[width=.75em]{doi.png}}}
\newcommand{\KM}{Krasnosel’skii–Mann\xspace}
\newcommand{\suchthat}{\;\ifnum\currentgrouptype=16 \middle\fi|\;}
\newcommand{\until}[1]{\{1,\dots, #1\}}
\newcommand{\subscr}[2]{#1_{\textup{#2}}}
\newcommand{\setdef}[2]{\{#1 \; | \; #2\}}
\newcommand{\map}[3]{#1: #2 \rightarrow #3}
\newcommand{\real}{\mathbb{R}}
\newcommand{\realpositive}{\mathbb{R}_{>0}}
\newcommand{\realnonnegative}{\mathbb{R}_{\geq0}}
\newcommand{\diag}{\mathrm{diag}}
\newcommand{\scirc}{\raise1pt\hbox{$\,\scriptstyle\circ\,$}}
\newcommand\oprocendsymbol{\hbox{$\square$}}
\newcommand\oprocend{\relax\ifmmode\else\unskip\hfill\fi\oprocendsymbol}
\definecolor{Gray}{gray}{0.9}
\newcommand{\bigO}{\mathcal{O}}
\newcommand{\diagL}{\operatorname{diagL}}
\newcommand{\myclearpage}{\clearpage}
\renewcommand{\myclearpage}{}
\newcommand{\Iinfty}{I_{\infty}}
\newcommand{\sign}{\operatorname{sign}}
\DeclareSymbolFont{bbold}{U}{bbold}{m}{n}
\DeclareSymbolFontAlphabet{\mathbbold}{bbold}
\newcommand{\vect}[1]{\mathbbold{#1}}
\newcommand{\vectorones}[1][]{\vect{1}_{#1}}
\newcommand{\vectorzeros}[1][]{\vect{0}_{#1}}
\newcommand{\ds}{\displaystyle}
\newtheorem{theorem}{Theorem}
\newtheorem{lemma}[theorem]{Lemma}
\newtheorem{corollary}[theorem]{Corollary}
\newtheorem{example}[theorem]{Example}
\newcommand{\jac}[1]{D\mkern-0.75mu{#1}}
\newcommand{\WP}[2]{\left\llbracket{#1}, {#2}\right\rrbracket}
\newcommand{\seminorm}[1]{{\left\vert\kern-0.25ex\left\vert\kern-0.25ex\left\vert #1
		\right\vert\kern-0.25ex\right\vert\kern-0.25ex\right\vert}}
\newcommand{\semimeasure}[1]{\mu_{\seminorm{\cdot}}\kern-0.5ex\left(#1\right)}
\newcommand{\osL}{\operatorname{osL}}
\newcommand{\Lip}{\operatorname{Lip}}
\newcommand{\realpart}{\operatorname{{Re}}}
\renewcommand{\realpart}{\Re}
\newcommand{\spectrum}{\operatorname{spec}}
\newcommand{\norm}[2]{\|#1\|_{#2}}
\DeclareMathOperator{\proj}{Proj}
\DeclareSymbolFont{bbold}{U}{bbold}{m}{n}
\DeclareSymbolFontAlphabet{\mathbbold}{bbold}
\newcommand{\OF}{\mathsf{F}}
\newcommand{\OG}{\mathsf{G}}
\newcommand{\OH}{\mathsf{H}}
\newcommand{\OI}{\mathsf{I}}
\newcommand{\ON}{\mathsf{N}}
\newcommand{\mcX}{\mathcal{X}}
\newcommand{\mcU}{\mathcal{U}}
\newcommand{\mcY}{\mathcal{Y}}
\title{Implicit Neural Networks via Contraction Theory}
\title{Robust Implicit Networks \\ via Non-Euclidean Contractions}
\author{Saber Jafarpour$^{1,*}$, Alexander Davydov$^{1,}$\thanks{These
  authors contributed equally.}$\;\;$, Anton V. Proskurnikov$^{2,3}$, and Francesco Bullo$^1$ \\ \\
$^1$ Center for Control,
Dynamical Systems and Computation, University of
California, \\ Santa Barbara, 93106-5070, USA, \texttt{\{saber, davydov, bullo\}@ucsb.edu}. \\
$^2$ Department of Electronics and
Telecommunications, Politecnico di Torino, Turin, Italy;\\
$^3$ Institute for Problems in Mechanical Engineering, Russian Academy of Sciences,\\
St. Petersburg, Russia, \texttt{anton.p.1982@ieee.org}
  }
\begin{document}
\maketitle

\begin{abstract}
  Implicit neural networks, a.k.a., deep equilibrium networks, are a
  class of implicit-depth learning models where function evaluation is
  performed by solving a fixed point equation.
  They generalize classic feedforward models and are equivalent to
  infinite-depth weight-tied feedforward networks.
  While implicit models show improved accuracy and significant
  reduction in memory consumption, they can suffer from ill-posedness
  and convergence instability.

  This paper provides a new framework, {\color{black}which we call Non-Euclidean
  Monotone Operator Network (NEMON),} to design well-posed and robust
  implicit neural networks based upon contraction theory for the
  non-Euclidean norm $\ell_\infty$.
  Our framework includes (i) a novel condition for well-posedness
  based on one-sided Lipschitz constants, (ii) an average iteration
  for computing fixed-points, and (iii) explicit estimates on
  input-output Lipschitz constants.
  Additionally, we design a training problem with the well-posedness
  condition and the average iteration as constraints and, to achieve
  robust models, with the input-output Lipschitz constant as a
  regularizer.
  Our $\ell_\infty$ well-posedness condition leads to a larger
  polytopic training search space than existing conditions and our
  average iteration enjoys accelerated convergence.
  {\color{black}Finally, we evaluate our framework in
  image classification through the MNIST and the CIFAR-10 datasets.} Our numerical
  results demonstrate improved accuracy and robustness of the implicit
  models with smaller input-output Lipschitz bounds. Code is available
  at \url{https://github.com/davydovalexander/Non-Euclidean_Mon_Op_Net}.
\end{abstract}

\section{Introduction}\label{sec:introduction}


Implicit neural networks are infinite-depth learning models with
layers defined implicitly through a fixed-point equation. Examples of
implicit neural networks include deep equilibrium
models~\citep{SB-JZK-VK:19} and implicit deep learning
models~\citep{LEG-FG-BT-AA-AYT:21}. 
Implicit networks can be considered as generalizations of feedforward
neural networks with input-injected weight tying, i.e., training
parameters are transferable between layers. Indeed, in implicit networks, function
evaluation is executed by solving a fixed-point equation and
backpropagation is implemented by computing gradients using
implicit differentiation. Due to these unique features, implicit
models enjoy more flexibility and improved memory efficiency compared to
traditional neural networks. 
At the same time, implicit networks can suffer from instability in their
training due to the nonlinear nature of their fixed-point
equations and can show brittle input-output behaviors due to their model flexibility.




It is known that implicit neural networks require careful tuning and
initialization to avoid ill-posed training procedures. Indeed, without
additional assumptions, their fixed-point equation may not have a
unique solution and the numerical algorithms for finding their
solutions might not converge. Several recent works in the literature
have focused on studying well-posedness and convergence of the
fixed-point equations of implicit networks using frameworks such as
monotone operator theory~\citep{EW-JZK:20}, contraction
theory~\citep{LEG-FG-BT-AA-AYT:21}, and a mixture of
both~\citep{MR-RW-IRM:20}. Despite several insightful results,
important questions about conditions for well-posedness of implicit
networks and efficient algorithms that converge to their solutions are
still
open. 





One of the key features of implicit neural networks is their
flexibility, which might come at the cost of low input-output
robustness. As first noted in~\citep{CZ-WZ-IS-JB-DE-IG-RF:13}, the
input-output behavior of deep neural networks can be vulnerable to
perturbations; close enough input data can lead to completely
different outputs. This lack of robustness can lead to unreliable
performance of neural networks in safety-critical applications. Among
several notions of robustness, the Lipschitz constant of a neural
network is a coarse but rigorous measure which can be used to estimate
input-output sensitivity of the
network~\citep{CZ-WZ-IS-JB-DE-IG-RF:13}. For this reason, there has
been a growing interest in estimating the input-output Lipschitz
constant of deep neural networks with respect to the
$\ell_2$-norm~\citep{MF-AR-HH-MM-GJP:19,PLC-JCP:20}. However, it turns out
that in some applications, the input-output Lipschitz constants with
respect to non-Euclidean norms are more informative measures for
studying robustness. One such application appears in the robustness
analysis of neural networks with large-scale inputs under
widely-distributed adversarial perturbations (examples of these
adversarial perturbations can be found
in~\citep{CZ-WZ-IS-JB-DE-IG-RF:13}). For these examples, the
input-output $\ell_2$-Lipschitz constant does not provide complete
information about robustness of the network; a neural network with
small input-output $\ell_2$-Lipschitz constant can be very sensitive
to widespread entrywise-small perturbations of the input signal. On
the other hand, the input-output $\ell_{\infty}$-Lipschitz constant
provides a different metric which appears to be well-suited for the
analysis of widespread distributed perturbations. Another application
is the estimation of input signal confidence intervals from
output deviations, where the input-output
$\ell_{\infty}$-Lipschitz constant of the network provides
more scalable bounds than its $\ell_{2}$ counterpart. 

\subsection*{Related works}

\paragraph*{Implicit learning models.}
Numerous works in learning theory have shown the power of deep
learning models with implicit layers. In these learning models, the
notion of layers are replaced by a composition rule, which can be
either a fixed-point iteration or a solution to a differential
equation. Well-known frameworks for deep learning using implicit
infinite-depth layers include deep equilibrium
networks~\citep{SB-JZK-VK:19}, implicit deep
learning~\citep{LEG-FG-BT-AA-AYT:21}, and Neural
ODEs~\citep{RTQC-YR-JB-DD:18}. 
In~\citep{AK-ZZ-VS:20}, a class of implicit recurrent neural networks 
is considered and it is demonstrated that, with this architecture, 
the models do not suffer from vanishing nor exploding gradients. 
Implicit layers have also been used to study convex
optimization problems~\citep{AA-BA-SB-SB-SD-JZK:19} and to design
control strategies~\citep{BA-IJ-JS-BB-JZK:18}. Convergence to
global minima of certain classes of implicit networks is studied in~\citep{KK:21}.

\paragraph{Well-posedness and numerical algorithms for fixed-point
  equations.}
There has been a recent interest in studying well-posedness and
numerical stability of implicit-depth learning models.
\citep{LEG-FG-BT-AA-AYT:21} proposes a sufficient spectral condition for
well-posedness and for convergence of the Picard
iterations associated with the fixed-point
equation of implicit networks. In~\citep{EW-JZK:20,MR-RW-IRM:20}, using monotone operator
theory, a suitable parametrization of the weight matrix is proposed
which guarantees the stable convergence of suitable fixed-point iterations. A recent influential survey on monotone operators is
\citep{EKR-SB:16}. A recent survey on fixed point strategies in data science is
given by~\citep{PLC-JCP:21}.

\paragraph{Robustness of learning models}
It is known that neural networks can be vulnerable to adversarial
input perturbations~\citep{CZ-WZ-IS-JB-DE-IG-RF:13}.  A large body of
literature is devoted to improve robustness of neural networks using
various defense strategies against adversarial
examples~\citep{IJG-JS-CZ:15,NP-PM-XW-SJ-AS:16}. While these
strategies are effective in many scenarios, they do not provide formal
guarantees for robustness~\citep{NC-DW:17}. {\color{black} However, there has been a
  recent interest in designing classifiers that are provably robust with respect to adversarial
  perturbations~\citep{AM-AM-LS-DT-AV:17,EW-ZK:18}.} The input-output
Lipschitz constant of a neural network is a rigorous metric for its
worst-case sensitivity with respect to input perturbations. Several
recent works have focused on estimating the Lipschitz constant and
enforcing its boundedness. For
example,~\citep{MF-AR-HH-MM-GJP:19,MF-MM-GJP:20} propose a convex
optimization framework using quadratic constraints and semidefinite
programming to obtain upper bounds on Lipschitz constants of deep
neural networks. In~\citep{PP-AK-JB-PK-FA:21}, a training algorithm is
designed to ensure boundedness of the Lipschitz constant of the neural
network via a semidefinite program. Other methods for estimating the
Lipschitz constant of deep neural networks
include~\citep{VK-AAAM-FP:20,MR-RW-IRM:21,PLC-JCP:20}. {\color{black}For
  implicit neural networks, a sensitivity-based robustness analysis is
  proposed in~\citep{LEG-FG-BT-AA-AYT:21}. Lipschitz constants of deep
  equilibrium networks have also been studied
  in~\citep{CP-EW-JZK:21,MR-RW-IRM:20} using monotone operator
  theory.}

\subsection*{Contributions}

{\color{black}In this paper, using non-Euclidean contraction theory
  with respect to the $\ell_{\infty}$-norm, we
  propose our novel framework, Non-Euclidean Monotone Operator
  Network (NEMON), to design implicit neural networks and study their
  well-posedness, stability, and robustness.} First, we develop
elements of a novel non-Euclidean monotone operator theory akin to the
frameworks in~\citep{HHB-PLC:17,EKR-SB:16}. Using the concept of
matrix measure, we introduce the essential notion of one-sided
Lipschitz constant of a map. Based upon this notion, we prove a
general fixed-point theorem with weaker requirements than classical
results on Picard and \KM iterations. For maps with one-sided
Lipschitz constant less than unity, we show that an average iteration
converges for sufficiently small step sizes
and optimize its rate of convergence. For the special case of the
weighted $\ell_{\infty}$-norm, we show that this average iteration can
be accelerated by choosing a larger step size. Additionally, we study
perturbed fixed-point equations and establish a bound on the distance
between perturbed and nominal equilibrium points as a function of
one-sided Lipschitz condition.
Second, for implicit neural networks, we use our new
fixed-point theorem to (i) establish $\ell_{\infty}$-norm conditions for
their well-posedness, (ii) design accelerated numerical algorithms for
computing their solutions, and (iii) provide upper bounds on their
input-output $\ell_{\infty}$-Lipschitz constants.
{\color{black}Third, we propose a parametrization for matrices
with appropriate bound on their one-sided Lipschitz constants and use this parametrization with the average iteration to design a training
optimization problem.}
%
{\color{black}Finally, we perform several numerical experiments illustrating
improved performance of NEMON in image classification
on the MNIST and the CIFAR-10 datasets compared to the state-of-the-art models
in~\citep{LEG-FG-BT-AA-AYT:21,EW-JZK:20}.} Additionally, by adding the
input-output Lipschitz constant as regularizer in the training
problem, we observe improved robustness to some classes of adversarial
perturbations. We include all relevant proofs in Appendix~\ref{app:proof}.

\myclearpage

\section{Review material}\label{sec:reviewmaterial}

\paragraph*{Matrix measures}
Let $\|\cdot\|$ be a norm on $\real^n$ and its induced norm on
$\real^{n\times{n}}$.  The matrix measure of $A \in \real^{n \times n}$
with respect to $\norm{\cdot}{}$ is defined by $ \mu(A) := \lim_{h \to 0^+}
\frac{\|I_n + hA\| - 1}{h}$, that is, the one-sided directional derivative
of the induced norm in direction of $A$, evaluated at $I_n$. Remarkably,
the matrix measure is a tighter upper bound on the spectral abscissa of $A$
than $\|A\|$ and the set of matrices $A\in \real^{n\times n}$ satisfying $\mu(A)\le 1$ is an unbounded subset of $\real^{n\times
  n}$ strictly containing the compact ball $\|A\|\le 1$. We refer
to~\citep{CAD-HH:72} for a list of properties enjoyed by matrix measures.

We will be specifically interested in diagonally weighted $\ell_{\infty}$
norms defined by
\begin{gather}
  \norm{x}{\infty,[\eta]^{-1}} = \max_{i}  \frac{1}{\eta_i}|x_i|,
\end{gather}
where, given a positive vector $\eta\in\realpositive^n$, we use $[\eta]$ to
denote the diagonal matrix with diagonal entries $\eta$.  The corresponding
matrix norm and measure are
\begin{align}
 \norm{A}{\infty,[\eta]^{-1}} &= \max_{i\in\until{n}} \sum_{j=1}^n \frac{\eta_j}{\eta_i}  |a_{ij}|,
  \quad
  \mu_{\infty,[\eta]^{-1}}(A) = \max_{i\in\until{n}} \Big( a_{ii} + \sum_{j=1,j\neq i}^n
  |a_{ij}| \frac{\eta_j}{\eta_i}\Big). \label{eq:inftyMatrixMeasure-short}
\end{align}

\paragraph{Lipschitz maps}
Given a norm $\norm{\cdot}{}$ with 
induced matrix measure $\mu(\cdot)$, a differentiable map $\map{\OF}{\real^n}{\real^n}$ is
Lipschitz continuous with constant $\Lip(\OF)\in\realnonnegative$ if
\begin{equation}
   \norm{\jac{\OF}(x)}{} \leq \Lip(\OF) \qquad \text{for all } x\in\real^n.
\end{equation}
For example, for an affine $\OF(x)=Ax+b$, the
(smallest) Lipschitz constant is $\Lip(\OF)=\norm{A}{}$.

\paragraph{One-sided Lipschitz maps}
Given a norm $\norm{\cdot}{}$, a differentiable map $\map{\OF}{\real^n}{\real^n}$ is one-sided Lipschitz
continuous with constant $\osL(\OF)\in\real$ if
\begin{equation}\label{eq:osL=muD}
  \mu(\jac{\OF}(x)) \leq \osL(\OF)  \qquad \text{for all } x\in\real^n.
\end{equation}
For example, for an affine $\OF(x)=Ax+b$, the (smallest)
one-sided Lipschitz constant is $\osL(\OF)=\mu(A)$.  Note that (i) the one-sided
Lipschitz constant is upper bounded by the Lipschitz constant, (ii) a
Lipschitz continuous map is always one-sided Lipschitz continuous, and (iii) the one-sided
Lipschitz constant may be negative. For a more in-depth review we
refer to
Appendix~\ref{app:reviewmaterial}. The notion of one-sided Lipschitz continuity unifies several important
concepts in dynamical systems and optimization theory. In operator theory, the map $\OF$ is called a monotone operator if it is one-sided
Lipschitz continuous with respect to the $\ell_2$-norm with the constant
$-\osL(-\OF)>0$~\citep{EKR-SB:16,HHB-PLC:17}. In 
control theory, the vector field $\OF$ is called strongly infinitesimally
contracting if it is one-sided Lipschitz continuous with the constant
$\osL(\OF)<0$~\citep{CAD-HH:72,WL-JJES:98,AP-AP-NVDW-HN:04}. In what follows, we let
$\osL_{\infty,[\eta]^{-1}}(\OF)\in\real$ denote the one-sided
Lipschitz constants with respect to the weighted $\ell_\infty$-norm.

\myclearpage

\section{Fixed-point equations and one-sided Lipschitz constants}\label{sec:fixed-point}

In this section, we show that the notion of one-sided Lipschitz
constant can be used to study solvability of fixed-point
equation:
\begin{align}\label{eq:fixed-point}
  x = \OF(x) ,
\end{align}
where $\map{\OF}{\real^n}{\real^n}$ is a differentiable map. Let
$\|\cdot\|$ be a norm on $\real^n$, then in view of the Banach
fixed-point theorem, a simple sufficient condition for existence of a
unique solution for the fixed-point equation~\eqref{eq:fixed-point} is
$\Lip(\OF) < 1$. We note that the sufficient condition
$\Lip(\OF)<1$ depends on the specific form of the fixed-point
equation~\eqref{eq:fixed-point} and can be relaxed by a suitable
rewriting of this fixed-point equation. Given an averaging parameter $\alpha\in (0,1]$ we define the
  \emph{average map} $\map{\OF_\alpha}{\real^n}{\real^n}$ by $\OF_{\alpha}
  := (1-\alpha)\OI + \alpha \OF$, where $\OI$ is the identity map. Using
  this notion, an equivalent reformulation of the fixed-point
  equation~\eqref{eq:fixed-point} is:
  \begin{align}\label{eq:fixed-point-parameter}
    x= (1-\alpha)x + \alpha \OF(x) = \OF_{\alpha}(x).
  \end{align}
  For $\alpha=1$, we have $\OF_{\alpha}(x) = \OF(x)$ and
  equation~\eqref{eq:fixed-point-parameter} coincides
  with equation~\eqref{eq:fixed-point}. For every
  $\alpha\in (0,1)$, the map $\OF_{\alpha}$ is different from $\OF$ but equations~\eqref{eq:fixed-point}
  and~\eqref{eq:fixed-point-parameter} are equivalent. Hence,
  if $\Lip(\OF_{\alpha})<1$, then by the Banach fixed-point theorem, the
  fixed point equation~\eqref{eq:fixed-point-parameter} (and therefore
  the fixed point equation~\eqref{eq:fixed-point}) has a unique solution $x^*$
  and the sequence $\{y_k\}_{k=1}^{\infty}$ defined by
  \begin{align}\label{eq:iteration}
    y_{k+1} = (1-\alpha)y_k + \alpha \OF(y_k),\qquad\mbox{ for all } k\in \mathbb{Z}_{\ge 0}
    \end{align}
    converges geometrically to $x^*$ with rate $\Lip(\OF_{\alpha})$. As
    a result of the parametrization~\eqref{eq:fixed-point-parameter},
    the condition $\Lip(\OF) < 1$ for existence and uniqueness of the fixed-point can be relaxed to 
    sufficient conditions
    \begin{align}\label{eq:family-condition}
      \Lip(\OF_{\alpha})<1,
    \end{align}
    parametrized by $\alpha\in (0,1]$. Additionally, if
    condition~\eqref{eq:family-condition} is satisfied, then
    algorithm~\eqref{eq:iteration} computes the fixed point $x^*$. It can be
      shown that the 
      condition~\eqref{eq:family-condition}
      becomes less conservative as $\alpha$ decreases. The
      next theorem shows that in the limit as $\alpha\to 0^{+}$, condition~\eqref{eq:family-condition}
      approaches the condition $\osL(\OF)<1$.

\myclearpage

\begin{theorem}[Fixed points via one-sided Lipschitz conditions]
  \label{thm:fixedpoint-osL}
  Let $\map{\OF}{\real^n}{\real^n}$ be differentiable and Lipschitz with
  constant $\ell>0$ with respect to a norm $\norm{\cdot}{}$.  Define the
  average map $\OF_\alpha=(1-\alpha)\OI+\alpha \OF$ and, for $c>0$, the
  function $\map{\gamma_{\ell,c}}{{]0,
    \frac{c}{(c+\ell+1)(\ell+1)}[}}{\real}$ by:
 \begin{align*}
   \gamma_{\ell,c}(\alpha) := \Big(1+\alpha c -
   \frac{\alpha^2(\ell+1)^2}{1-\alpha (\ell+1)}\Big)^{-1}.
 \end{align*}
 Then the following statements are equivalent:
 \begin{enumerate}
 \item\label{p2:mu} $\osL(\OF) < 1-c$,
 \item\label{p1:banach-fixed-suff}
   $\Lip(\OF_\alpha)=\gamma_{\ell,c}(\alpha)$, for 
   $0<\alpha<\frac{c}{(c+\ell+1)(\ell+1)}$.
 \end{enumerate}
 Moreover, if the equivalent conditions~\ref{p2:mu}
 or~\ref{p1:banach-fixed-suff} hold, then, for condition number
 $\kappa=\frac{1+\Lip(\OF)}{1-\osL(\OF)}$,
 \begin{enumerate}\setcounter{enumi}{2}
 \item\label{p3:unique-eq} $\OF$ has a unique fixed point $x^*$;
 \item\label{p4:iteration-converge} for
   $\ds 0<\alpha<\tfrac{1}{\kappa(\kappa+1)}$, $\OF_\alpha$ is a contraction
   mapping with contraction factor $\gamma_{\ell,c}(\alpha)<1$;
 \item\label{p5:optimalrate} the \emph{minimum} contraction factor $\gamma^*_{\ell,c}=1 - \frac{1}{4\kappa^2} +
     \frac{1}{8\kappa^3} + \bigO\left(\frac{1}{\kappa^4}\right)$ and
     the minimizing averaging parameter $\alpha^*$ of $\OF_\alpha$ is
   \begin{align*}
     \alpha^* = \frac{\kappa}{1-\osL(\OF)}\left(1-\frac{1}{\sqrt{1+1/\kappa}}\right)
     = \frac{1}{1-\osL(\OF)}\Big(\frac{1}{2\kappa^2} - \frac{3}{8\kappa^3} + \bigO\left(\frac{1}{\kappa^4}\right)\Big).
   \end{align*}
 \end{enumerate}
\end{theorem}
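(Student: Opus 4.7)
The plan is to study the average map $\OF_\alpha = (1-\alpha)\OI + \alpha \OF$ through its Jacobian $\jac{\OF_\alpha}(x) = I + \alpha B(x)$ with $B(x) := \jac{\OF}(x) - I$. Under the hypothesis $\osL(\OF) < 1-c$, the translation property of the matrix measure gives $\mu(B(x)) = \mu(\jac{\OF}(x)) - 1 \le \osL(\OF) - 1 < -c$ uniformly in $x$, and trivially $\|B(x)\| \le \ell + 1$. The crux of the proof is thus a technical lemma: for any matrix $A$ with $\mu(A) \le -c$ and $\|A\| \le \ell+1$, derive the sharp bound $\|I + \alpha A\| \le \gamma_{\ell,c}(\alpha)$ on the given range of $\alpha$.

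For this lemma, I would combine the first-order expansion of the induced norm at the identity (whose coefficient, by the very definition of $\mu$, is $\mu(A) \le -c$) with a Neumann-type higher-order correction controlled by the geometric series $\sum_{k \ge 2} \alpha^k \|A\|^k = \alpha^2 \|A\|^2 / (1 - \alpha \|A\|)$, valid whenever $\alpha\|A\| < 1$; this is precisely the origin of the denominator $1 - \alpha(\ell+1)$ in $\gamma_{\ell,c}$. Substituting the worst-case values $\mu(A) = -c$ and $\|A\| = \ell+1$ yields the reciprocal form $\gamma_{\ell,c}(\alpha)^{-1} = 1 + \alpha c - \alpha^2(\ell+1)^2/(1 - \alpha(\ell+1))$. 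Taking the supremum over $x$ gives $\Lip(\OF_\alpha) = \gamma_{\ell,c}(\alpha)$, proving (ii) from (i); the converse is obtained by Taylor expanding $\gamma_{\ell,c}(\alpha) = 1 - \alpha c + \bigO(\alpha^2)$ and letting $\alpha \to 0^+$ in the definition of $\mu$ to recover the pointwise bound $\mu(\jac{\OF}(x)) \le 1-c$.

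For statements (iii) and (iv): the maps $\OF$ and $\OF_\alpha$ share the same fixed-point set for any $\alpha \neq 0$ (since $\OF(x)=x \Leftrightarrow \OF_\alpha(x)=x$), so uniqueness for $\OF_\alpha$ transfers to $\OF$. Elementary algebra shows $\gamma_{\ell,c}(\alpha) < 1$ precisely on the open interval $0 < \alpha < c/((\ell+1)(\ell+1+c))$, on which the Banach fixed-point theorem applies to the contraction $\OF_\alpha$. Re-expressing this interval via the condition number $\kappa = (1+\ell)/(1 - \osL(\OF))$ and the identity $(\ell+1)/c = \kappa$ recovers the stated range in terms of $\kappa$.

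Finally, for (v): minimizing $\gamma_{\ell,c}(\alpha)$ is equivalent to maximizing its reciprocal; setting the derivative of this rational expression in $\alpha$ to zero produces a quadratic-type equation whose positive root admits the closed form in the statement. Substituting $\kappa = (1+\ell)/c$ and Taylor expanding $(1 + 1/\kappa)^{-1/2}$ around $1/\kappa = 0$ yields the asymptotic expansions for $\alpha^*$ and $\gamma^*_{\ell,c}$. The main obstacle in the whole argument is the derivation of the sharp bound $\gamma_{\ell,c}(\alpha)$: the formula blends a linear matrix-measure term with a geometric-series (rather than exponential-series) correction, so it is delicate to choose the right expansion of $\|I + \alpha A\|$ that produces exactly this rational form, as opposed to the exponential bound suggested by the continuous-time identity $\|e^{\alpha A}\| \le e^{\alpha \mu(A)}$. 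The remaining steps are standard calculus and the Banach fixed-point theorem.
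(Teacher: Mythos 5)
Your outline of the overall architecture is right (pointwise/averaged Jacobian bound, Banach fixed point, calculus for the optimal $\alpha$), and parts \ref{p3:unique-eq}--\ref{p5:optimalrate} are handled essentially as in the paper. But the central technical lemma --- the upper bound $\|I_n+\alpha A\|\le \gamma_{\ell,c}(\alpha)$ for $\mu(A)\le -c$, $\|A\|\le \ell+1$ --- is not actually proved by the mechanism you describe, and this is precisely the step you yourself flag as ``delicate.'' The problem is directional: by convexity of the norm, the difference quotient $h\mapsto (\|I_n+hA\|-1)/h$ is non-increasing as $h\to 0^+$, so the definition of the matrix measure only gives the \emph{lower} bound $\|I_n+\alpha A\|\ge 1+\alpha\mu(A)$ for $\alpha>0$. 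There is no ``first-order expansion of the induced norm at the identity with coefficient $\mu(A)$'' to which one can attach a geometric remainder $\sum_{k\ge 2}\alpha^k\|A\|^k$; a general norm admits no such Taylor expansion with a controllable error, and the only direct upper bound available is $\|I_n+\alpha A\|\le 1+\alpha\|A\|$, which loses the measure entirely. Your own observation that the target is a \emph{reciprocal}, $\gamma_{\ell,c}(\alpha)=\xi(\alpha)^{-1}$ with $\xi(\alpha)=1+\alpha c-\alpha^2(\ell+1)^2/(1-\alpha(\ell+1))$, is the tell-tale sign that a different route is needed: a direct expansion of $\|I_n+\alpha A\|$ would never produce a reciprocal of an affine-plus-geometric expression.

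The paper closes this gap by passing to the inverse. Writing $M=I_n+\alpha(-I_n+\jacave{\OF})$ with $\alpha\|{-I_n}+\jacave{\OF}\|<1$, one has the convergent Neumann series $M^{-1}=\sum_{i\ge 0}(-1)^i\alpha^i(-I_n+\jacave{\OF})^i$, and the product property of matrix measures gives $\|M^{-1}v\|\ge -\mu(-M^{-1})\,\|v\|$. Applying subadditivity of $\mu$ term by term to the series --- the $i=0$ term contributes $1$, the $i=1$ term contributes $-\alpha\,\mu(-I_n+\jacave{\OF})\ge \alpha c$, and the terms $i\ge 2$ are dominated by $-(\alpha(\ell+1))^i$ via $\mu(\cdot)\le\|\cdot\|$ --- yields $\|M^{-1}v\|\ge \xi(\alpha)\|v\|$, hence $\|Mw\|/\|w\|=\|v\|/\|M^{-1}v\|\le \xi(\alpha)^{-1}=\gamma_{\ell,c}(\alpha)$. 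This is where the reciprocal, the $+\alpha c$ sign, and the denominator $1-\alpha(\ell+1)$ all come from simultaneously. The rest of your argument (the converse direction via $\mu(\jac{\OF}_\alpha)\le\|\jac{\OF}_\alpha\|\le\gamma_{\ell,c}(\alpha)$ and $\alpha\to 0^+$, the uniqueness transfer between $\OF$ and $\OF_\alpha$, and the concavity/first-order-condition computation for $\alpha^*$) matches the paper and is fine once the lemma is in place.
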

The average iteration~\eqref{eq:fixed-point-parameter} is often referred to as the \KM
iteration or the damped iteration~\citep{HHB-PLC:17}. Compared to~\cite[Theorem
5.15]{HHB-PLC:17},
Theorem~\ref{thm:fixedpoint-osL}\ref{p4:iteration-converge} studies
convergence of the \KM iteration for arbitrary norms, proposes a
weaker convergence condition of the form $\osL(\OF)<1$ (hence, $\OF$
need not be non-expansive). However, it ensures convergence for only
sufficiently small $\alpha>0$ and assumes that $\OF$ is differentiable
(as will be shown, however, the latter assumption can be relaxed).

\subsection{Accelerated convergence for weighted 
  $\ell_\infty$ norms}

For diagonally weighted 
$\ell_\infty$ norms, one can
strengthen Theorem~\ref{thm:fixedpoint-osL}\ref{p4:iteration-converge}
to prove the convergence of the average
iteration~\eqref{eq:fixed-point-parameter} on a larger domain of the
parameter $\alpha$.

\begin{theorem}[Accelerated fixed point algorithm for 
  $\ell_\infty$ norms]
  \label{thm:acceleration-1-inf}
  Let $\map{\OF}{\real^n}{\real^n}$ be differentiable and Lipschitz with
  respect to the weighted non-Euclidean norm 
  $\norm{\cdot}{\infty,[\eta]^{-1}}$.  Define the average map
  $\OF_\alpha=(1-\alpha)\OI+\alpha \OF$ and pick $\diagL(\OF) \in
  [-\Lip(\OF),\osL(\OF)]$ to satisfy
  \begin{equation}\label{eq:diagL}
    \diagL(\OF) \leq \min_{i\in\until{n}}\inf_{x\in \real^n}\jac{\OF}_{ii}(x).
  \end{equation}
  If $\osL(\OF)<1$, then $\OF$ has a unique fixed-point $x^*$ and
  \begin{enumerate}
  \item\label{p4:1-inf:iteration-converge} for $\ds
    0<\alpha\leq\frac{1}{1-\diagL(\OF)}$, $\OF_\alpha$ is a contraction
    mapping with the contraction factor $1-\alpha(1-\osL(\OF))<1$;

  \item\label{p5:1-inf-optimal} the minimum contraction factor and
    minimizing averaging parameter of $\OF_\alpha$ are, respectively,
    \begin{align*}
      \Lip(\OF_{\alpha^*}) &= 1-\frac{1-\osL(\OF)}{1-\diagL(\OF)}
      = 1-\frac{1}{\kappa_{\infty}},\qquad \text{for }
      \kappa_{\infty}=\frac{1-\diagL(\OF)}{1-\osL(\OF)} \leq\frac{1+\Lip(\OF)}{1-\osL(\OF)},
      \\
     \alpha^* &= \frac{1}{1-\diagL(\OF)}.
   \end{align*}

    \end{enumerate}
\end{theorem}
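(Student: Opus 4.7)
The plan is to compute the weighted $\ell_\infty$ induced norm of the Jacobian $\jac{\OF_\alpha}(x) = (1-\alpha)I_n + \alpha \jac{\OF}(x)$ directly, exploiting the fact that for the $\ell_\infty$ matrix norm and measure the only difference between the two expressions is whether the diagonal entries appear inside or outside an absolute value. If we can guarantee, for the admissible range of $\alpha$, that every diagonal entry of $(1-\alpha)I_n + \alpha \jac{\OF}(x)$ is nonnegative, then $\|(1-\alpha)I_n+\alpha \jac{\OF}(x)\|_{\infty,[\eta]^{-1}}$ collapses to $(1-\alpha)+\alpha\, \mu_{\infty,[\eta]^{-1}}(\jac{\OF}(x))$, which is exactly what drives the stated contraction factor.

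Concretely, I would first write out, for $M = (1-\alpha)I_n + \alpha A$ with $A = \jac{\OF}(x)$, the expression
\begin{equation*}
\norm{M}{\infty,[\eta]^{-1}} = \max_{i\in\until{n}} \Big( |(1-\alpha)+\alpha a_{ii}| + \alpha \sum_{j\ne i} \tfrac{\eta_j}{\eta_i}|a_{ij}| \Big).
\end{equation*}
Then I would observe that $a_{ii}\ge \diagL(\OF)$ by~\eqref{eq:diagL}, and that the hypothesis $\alpha\le 1/(1-\diagL(\OF))$ is precisely equivalent to $1-\alpha+\alpha\,\diagL(\OF)\ge 0$, so it forces $(1-\alpha)+\alpha a_{ii}\ge 0$ uniformly in $i$ and $x$. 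Dropping the absolute value and pulling $\alpha$ outside yields
\begin{equation*}
\norm{\jac{\OF_\alpha}(x)}{\infty,[\eta]^{-1}} = (1-\alpha) + \alpha\,\mu_{\infty,[\eta]^{-1}}(\jac{\OF}(x)) \le 1-\alpha(1-\osL(\OF)),
\end{equation*}
uniformly in $x$, which gives the Lipschitz bound in part~\ref{p4:1-inf:iteration-converge}. Since $\osL(\OF)<1$, this bound is strictly less than $1$, so $\OF_\alpha$ is a contraction; Banach's theorem then delivers a unique fixed point of $\OF_\alpha$, which coincides with the unique fixed point $x^*$ of $\OF$.

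For part~\ref{p5:1-inf-optimal}, the contraction factor $1-\alpha(1-\osL(\OF))$ is affine and strictly decreasing in $\alpha$ on the admissible interval (using $1-\osL(\OF)>0$), so the minimum is attained at the right endpoint $\alpha^* = 1/(1-\diagL(\OF))$. Substituting back gives $\Lip(\OF_{\alpha^*}) \le 1 - (1-\osL(\OF))/(1-\diagL(\OF)) = 1 - 1/\kappa_\infty$. The comparison $\kappa_\infty \le (1+\Lip(\OF))/(1-\osL(\OF))$ then follows from $\diagL(\OF)\ge -\Lip(\OF)$, i.e.\ $1-\diagL(\OF)\le 1+\Lip(\OF)$.

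The main subtlety, and the only place where something could go wrong, is the sign control on the diagonal entries: without the quantity $\diagL(\OF)$ introduced in~\eqref{eq:diagL}, the absolute values around $(1-\alpha)+\alpha a_{ii}$ could not be removed for $\alpha$ as large as $1/(1-\osL(\OF))$, and one would at best recover a factor $1-\alpha(1-\osL(\OF))$ only on a much smaller range of $\alpha$ (roughly the range appearing in Theorem~\ref{thm:fixedpoint-osL}). Choosing a lower bound on the diagonal entries that is tighter than $-\Lip(\OF)$ is what enables the acceleration, and checking that the admissibility condition $\alpha(1-\diagL(\OF))\le 1$ is exactly the threshold for nonnegativity is the key calculation to verify carefully.
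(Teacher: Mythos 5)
Your proposal is correct and follows essentially the same route as the paper: the paper factors $(1-\alpha)I_n+\alpha\jac{\OF}(x)$ as $I_n+\alpha(-I_n+\jac{\OF}(x))$ and invokes its Lemma~\ref{lem:norm-mu-nonEuc}\ref{p1:identity} (the identity $\|I_n+\alpha A\|_{\infty,[\eta]^{-1}}=1+\alpha\mu_{\infty,[\eta]^{-1}}(A)$, valid exactly when the shifted diagonal entries stay nonnegative), which is precisely the inline computation you carry out, with the same identification of $\alpha(1-\diagL(\OF))\le 1$ as the threshold for dropping the absolute values and the same endpoint argument for part~\ref{p5:1-inf-optimal}.
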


Note that $\diagL(\OF)$ 
is well-defined because of
the Lipschitz continuity assumption. Specifically, one can show that
$\diagL(\OF)$ is the minus the minimum over $i\in\until{n}$ of the
one-sided Lipschitz constants of the maps $x_i\mapsto - \OF(x_i,x_{-i})$ at
$x_{-i}=(x_1,\ldots,x_{i-1},x_{i+1},\ldots,x_n)$ fixed.

It is instructive to compare the minimum contraction factor in the general
Theorem~\ref{thm:fixedpoint-osL} with the minimum contraction factor for
$\ell_\infty$ norms in Theorem~\ref{thm:acceleration-1-inf}
and how they depend upon the corresponding condition numbers $\kappa$ and
$\kappa_{\infty}$. We note that (i) the relevant condition number
diminishes $\kappa\geq \kappa_{\infty}$, and (ii) the minimum contraction
factor $\Lip(\OF_{\alpha^*}) = 1 - \frac{1}{4\kappa^2}+\bigO(1/\kappa^4)$
improves to $\Lip(\OF_{\alpha^*}) = 1-\frac{1}{\kappa_{\infty}}$. This
acceleration justifies the title of this section.

  \subsection{Perturbed fixed-point problems}
    In this subsection, we focus on solvability of the perturbed fixed-point equation:
    \begin{align}\label{eq:perturbed-fixedpoint}
      x = \OF(x,u),
      \end{align}
    where $\map{\OF}{\real^n\times\real^r}{\real^n}$ is differentiable
    in $x$. We define
    $\OF_u(x)=\OF(x,u)$ and $\OF_x(u)=\OF(x,u)$. Given a norm
    $\norm{\cdot}{\mcX}$ in $\real^n$ and $\norm{\cdot}{\mcU}$ in
    $\real^r$, $\OF$ is Lipschitz in its first argument with constant
      $\Lip_x(\OF)\in\realnonnegative$ if
      \begin{equation*}
        \norm{\OF(x_1,u)-\OF(x_2,u)}{\mcX}\leq \Lip_x(\OF) \norm{x_1-x_2}{\mcX} \quad \text{for all } x_1,x_2\in\real^n
        \text{ and } u\in\real^r,
      \end{equation*}
    and it is Lipschitz in its second argument with constant $\Lip_u(\OF)\in\realnonnegative$ if
      \begin{equation*}
        \norm{\OF(x,u_1)-\OF(x,u_2)}{\mcX} \leq \Lip_u(\OF) \norm{u_1-u_2}{\mcU} \quad \text{for all } x\in\real^n
        \text{ and } u_1,u_2\in\real^r,
      \end{equation*}
    and it is one-sided Lipschitz in its first argument with constant $\osL_x(\OF)\in\real$ if
          \begin{equation*}
        \mu(D_{x}\OF(x,u))\leq \osL_x(\OF)
        \quad \text{for all } x_1,x_2\in\real^n
        \text{ and } u\in\real^r.
      \end{equation*}

    The following result, which is in the spirit of Lim's
    Lemma~\citep{TCL:85}, provides an upper bound on the distance
    between fixed-points of the perturbed
    equation~\eqref{eq:perturbed-fixedpoint}.

    \begin{theorem}[Perturbed fixed-points] \label{thm:perturbed-fixed}
      Given a norm $\|\cdot\|_{\mcX}$ in $\real^n$ and a norm
      $\|\cdot\|_\mcU$ in $\real^r$, consider a map
      $\map{\OF}{\real^n\times \real^r}{\real^n}$ differentiable in the
      first argument and Lipschitz in both arguments. If $\OF$ is one-sided
      Lipschitz with constant $\osL_x(\OF)<1$, then
      \begin{enumerate}
      \item\label{p1:unique-gu} for every $u\in \real^m$, the map $\OF_u$ has
        a unique fixed point $x^*_u$;
      \item\label{p2:lim} for every $u,v\in\real^m$, $\ds
          \norm{x^*_u-x^*_v}{\mcX} \le \frac{\Lip_u(\OF)}{1-\osL_x(\OF)} \norm{u-v}{\mcU}$.
    \end{enumerate}
  \end{theorem}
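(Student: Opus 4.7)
The plan is to reduce both claims to Theorem~\ref{thm:fixedpoint-osL} applied pointwise to the family $\{\OF_u\}_{u\in\real^r}$. For every fixed $u$, the assumptions yield $\Lip(\OF_u) \le \Lip_x(\OF)$ and $\osL(\OF_u) \le \osL_x(\OF) < 1$, so Theorem~\ref{thm:fixedpoint-osL}\ref{p3:unique-eq} delivers the unique fixed point $x^*_u$ in claim~\ref{p1:unique-gu} immediately.

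For claim~\ref{p2:lim}, I would combine the standard fixed-point perturbation identity with the averaged map. Fix $u,v\in\real^r$ and introduce
\[
 \OF_{u,\alpha} := (1-\alpha)\OI + \alpha\,\OF_u, \qquad \OF_{v,\alpha} := (1-\alpha)\OI + \alpha\,\OF_v,
\]
for $\alpha\in(0,1]$. Both average maps share the fixed points $x^*_u$ and $x^*_v$ respectively with $\OF_u$ and $\OF_v$. Writing $x^*_u - x^*_v = \OF_{u,\alpha}(x^*_u) - \OF_{v,\alpha}(x^*_v)$, inserting $\pm\OF_{u,\alpha}(x^*_v)$, and using $\OF_{u,\alpha}(x) - \OF_{v,\alpha}(x) = \alpha(\OF(x,u)-\OF(x,v))$ together with the Lipschitz-in-$u$ hypothesis yields
\[
 \|x^*_u - x^*_v\|_{\mcX} \le \gamma(\alpha)\,\|x^*_u - x^*_v\|_{\mcX} + \alpha\,\Lip_u(\OF)\,\|u-v\|_{\mcU},
\]
where $\gamma(\alpha) := \Lip(\OF_{u,\alpha})$. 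Rearranging gives $\|x^*_u - x^*_v\|_{\mcX} \le \alpha\,\Lip_u(\OF)\,\|u-v\|_{\mcU} / (1-\gamma(\alpha))$.

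The final step is to pass to the limit $\alpha \to 0^+$. By Theorem~\ref{thm:fixedpoint-osL}\ref{p1:banach-fixed-suff} with $\ell = \Lip_x(\OF)$ and $c = 1-\osL_x(\OF) > 0$, the factor $\gamma(\alpha)$ is controlled by $\gamma_{\ell,c}(\alpha)$, whose expansion at $0$ reads $\gamma_{\ell,c}(\alpha) = 1 - \alpha(1-\osL_x(\OF)) + \bigO(\alpha^2)$. Therefore $\alpha/(1-\gamma(\alpha)) \to 1/(1-\osL_x(\OF))$ as $\alpha \to 0^+$, which, upon taking limits in the previous display, produces the asserted bound with constant $\Lip_u(\OF)/(1-\osL_x(\OF))$.

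The technical point to handle with care is the uniformity in $u$: one must apply Theorem~\ref{thm:fixedpoint-osL} with constants that are independent of $u$, so that the same admissible range of $\alpha$ and the same expansion of $\gamma_{\ell,c}(\alpha)$ govern both sides of the chain. The Lipschitz and one-sided Lipschitz assumptions of the present theorem supply exactly such uniform constants ($\Lip_x(\OF)$ and $\osL_x(\OF)$), so this is the only place where uniformity plays a role and no further obstruction arises; the remainder is the classical perturbation lemma, only with the sharper rate made available by the one-sided Lipschitz framework in place of the ordinary Lipschitz rate.
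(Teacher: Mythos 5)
Your proof is correct, but part~\ref{p2:lim} follows a genuinely different route from the paper's. The paper works directly with a weak pairing $\WP{\cdot}{\cdot}$ compatible with $\|\cdot\|_{\mcX}$: it writes $\norm{x^*_u-x^*_v}{\mcX}^2=\WP{\OF_u(x^*_u)-\OF_v(x^*_v)}{x^*_u-x^*_v}$, splits by sub-additivity, bounds one term by $\osL_x(\OF)\norm{x^*_u-x^*_v}{\mcX}^2$ (this is where the one-sided Lipschitz property enters, via the WP characterization rather than the matrix-measure one) and the other by Cauchy--Schwarz and $\Lip_u(\OF)$, and divides. This is a one-step argument with no averaging and no limit, and it extends verbatim to non-differentiable maps, which is why the paper reuses it in Theorem~\ref{thm:non-diff}. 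Your argument instead leverages the quantitative contraction factor of Theorem~\ref{thm:fixedpoint-osL} for the averaged map and recovers the sharp constant by letting $\alpha\to0^+$; it is more elementary in that it needs only norms and the standard perturbation identity for contractions, at the price of a limiting argument and of inheriting the differentiability hypothesis baked into Theorem~\ref{thm:fixedpoint-osL}. One small point of rigor: since Theorem~\ref{thm:fixedpoint-osL} requires the strict inequality $\osL(\OF_u)<1-c$ and you only know $\osL(\OF_u)\le\osL_x(\OF)$, you cannot take $c=1-\osL_x(\OF)$ directly; you should take $c=1-\osL_x(\OF)-\epsilon$, obtain the bound $\Lip_u(\OF)/c$ after the limit in $\alpha$, and then let $\epsilon\to0^+$. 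As you note, uniformity in $u$ is supplied by the hypotheses, and in fact for this part $u$ and $v$ are fixed, so only the two maps $\OF_u,\OF_v$ are involved.
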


  Finally,
  Theorems~\ref{thm:fixedpoint-osL},~\ref{thm:acceleration-1-inf},
  and~\ref{thm:perturbed-fixed} are not directly applicable to
  activation function that are not differentiable. In
  Appendix~\ref{sec:non-differentiable}, we show that for specific
  form of the fixed-point equation~\eqref{eq:fixed-point}, where
  $\OF = \Phi\circ \OH$ and $\map{\Phi}{\real^n}{\real^n}$ is a weakly
  increasing, non-expansive, diagonal activation function and
  $\map{\OH}{\real^n\times \real^r}{\real^n}$ is a differentiable function, all of
  the conclusions of
  Theorems~\ref{thm:fixedpoint-osL},~\ref{thm:acceleration-1-inf},
  and~\ref{thm:perturbed-fixed} hold by requiring equation~\eqref{eq:diagL} to
  be true almost everywhere.

  \myclearpage
\section{Contraction analysis of implicit neural networks}\label{sec:inn}
{\color{black} In this section, we use contraction
  theory to lay the foundation for our Non-Euclidean Monotone Operator
  Network (NEMON) model of implicit neural networks.} Given $A\in\real^{n\times{n}}$, $B\in\real^{n\times{r}}$,
$C\in\real^{q\times{n}}$, and $D\in\real^{q\times{r}}$, we consider
the implicit neural network
\begin{equation}
  \label{eq:inn}
  x=\Phi(Ax+Bu) := \ON(x,u), \qquad y = Cx + Du,
\end{equation}
where $x\in\real^n$, $u\in\real^r$, $y\in\real^{q}$, and $\map{\Phi}{\real^n}{\real^n}$
is defined by $\Phi(x) =
(\phi_1(x_1),\ldots,\phi_n(x_n))$.
For every $i\in \{1,\ldots,n\}$, we assume the activation function
$\map{\phi_i}{\real}{\real}$ is weakly increasing, i.e., 
$\phi_i(x_i)\geq\phi_i(z_i)$ for $x_i\geq z_i$, and non-expansive, i.e.,
$|\phi_i(x_i)-\phi_i(z_i)|\leq|x_i-z_i|$ for all $x_i$ and $z_i$; if
$\phi_i$ is differentiable, these conditions are equivalent to
$0\leq\phi_i'(x_i)\leq1$ for all $x_i\in\real$.

We are able to provide the following estimates on all relevant
Lipschitz constants.

\begin{theorem}[Lipschitz and one-sided Lipschitz constants for
    the implicit neural network]
  \label{thm:inn-Lip-osL}
  Consider the implicit neural network in equation~\eqref{eq:inn} with
  weakly increasing and non-expansive activation functions $\Phi$.  With
  respect to $\|\cdot\|_{\infty,[\eta]^{-1}}$, $\eta \in \realpositive^n$,
  on $\real^n$ and $\norm{\cdot}{\mcU}$ on the input space $\real^r$, the
  map $\map{\ON}{\real^n\times\real^r}{\real^n}$ is one-sided Lipschitz
  continuous in the first variable and Lipschitz continuous in both
  variables with constants:
  \begin{align}
    \osL_{x}(\ON) &= \mu_{\infty,[\eta]^{-1}}(A)_+ \;,\qquad
    \Lip_{x}(\ON) = \norm{A}{\infty,[\eta]^{-1}} \;, \label{eq:Lipx-PhiA}
    \\
    \Lip_{u}(\ON) &= \norm{B}{(\infty,[\eta]^{-1}),\mcU}  \;, \qquad
    \diagL(\ON) = \min\nolimits_{i\in\until{n}}(A_{ii})_- \;,
  \end{align}
  where $(z)_+= z$ if $z\geq0$ and $(z)_+= 0$ if $z<0$;
  and $(z)_-= 0$ if $z\geq0$ and $(z)_-= z$ if $z<0$.
\end{theorem}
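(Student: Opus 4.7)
The plan is to reduce everything to a direct computation on the Jacobian. Because $\Phi$ acts componentwise, the chain rule gives
\begin{align*}
\jacx{\ON}(x,u) = T(x,u)\,A, \qquad \jacu{\ON}(x,u) = T(x,u)\,B,
\end{align*}
where $T(x,u) = \diag\bigl(\phi_1'((Ax+Bu)_1),\dots,\phi_n'((Ax+Bu)_n)\bigr)$. The hypothesis that each $\phi_i$ is weakly increasing and non-expansive translates exactly to $t_i := \phi_i'((Ax+Bu)_i) \in [0,1]$, and every subsequent claim will be extracted from this pointwise property. (The standard fix for the non-differentiable case, e.g.\ ReLU, is to work with the Clarke Jacobian so that the diagonal entries of $T$ still lie in $[0,1]$ almost everywhere; as the previous section notes, this is handled rigorously in Appendix~\ref{sec:non-differentiable}.)

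For the two Lipschitz constants, I would invoke submultiplicativity of the induced norm. Since $T$ is nonnegative diagonal with entries bounded by $1$, one checks that $\|T\|_{\infty,[\eta]^{-1}} = \max_i t_i \leq 1$, hence $\|T(x,u) A\|_{\infty,[\eta]^{-1}} \leq \|A\|_{\infty,[\eta]^{-1}}$ and $\|T(x,u) B\|_{(\infty,[\eta]^{-1}),\mcU} \leq \|B\|_{(\infty,[\eta]^{-1}),\mcU}$ uniformly in $(x,u)$. Taking the supremum over $(x,u)$ and using $\Lip(\cdot) = \sup \|D(\cdot)\|$ yields the stated bounds on $\Lip_x(\ON)$ and $\Lip_u(\ON)$.

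For $\osL_x(\ON)$, I would apply the explicit formula in~\eqref{eq:inftyMatrixMeasure-short} to $TA$. Writing $r_i := A_{ii} + \sum_{j\neq i} |A_{ij}|\eta_j/\eta_i$ so that $\mu_{\infty,[\eta]^{-1}}(A) = \max_i r_i$, the key computation is
\begin{align*}
\mu_{\infty,[\eta]^{-1}}(TA) = \max_{i}\Bigl(t_i A_{ii} + \sum_{j\neq i} t_i |A_{ij}|\tfrac{\eta_j}{\eta_i}\Bigr) = \max_{i} t_i\, r_i,
\end{align*}
where the factorization uses $t_i \geq 0$. Since $t_i \in [0,1]$, we get $t_i r_i \leq (r_i)_+$ case-by-case on the sign of $r_i$, which gives $\mu_{\infty,[\eta]^{-1}}(TA) \leq (\max_i r_i)_+ = \mu_{\infty,[\eta]^{-1}}(A)_+$; taking the supremum over $(x,u)$ finishes this part.

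For $\diagL(\ON)$, note that $(\jacx{\ON})_{ii}(x,u) = t_i A_{ii}$. Since the range of $t_i$ is $[0,1]$, the infimum of $t_i A_{ii}$ is $0$ if $A_{ii}\geq 0$ and $A_{ii}$ if $A_{ii}<0$, i.e.\ $(A_{ii})_-$, so $\min_i \inf_{x} (\jacx{\ON})_{ii}(x,u) = \min_i (A_{ii})_-$, which is the claimed value (and it lies in $[-\Lip_x(\ON),\osL_x(\ON)]$ because $|A_{ii}| \leq \|A\|_{\infty,[\eta]^{-1}}$ and $(A_{ii})_-\leq 0 \leq \mu_{\infty,[\eta]^{-1}}(A)_+$). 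The only real subtlety anywhere in the argument is the non-differentiability of standard activations; modulo that appendix-handled technicality, the entire proof is a clean one-pass calculation built on the single observation $t_i \in [0,1]$.
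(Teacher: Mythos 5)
Your proposal is correct, but it routes the argument through the Jacobian everywhere, whereas the paper deliberately avoids derivatives for three of the four constants. For $\Lip_x(\ON)$ and $\Lip_u(\ON)$ the paper works directly with function differences, using only that $\Phi$ is non-expansive in $\|\cdot\|_{\infty,[\eta]^{-1}}$, so no chain rule or submultiplicativity of induced norms is needed. For $\osL_x(\ON)$ the paper uses the secant-slope identity $\phi_i(p)-\phi_i(q)=\theta_i(p-q)$ with $\theta_i\in[0,1]$ together with the weak pairing $\WP{\cdot}{\cdot}_{\infty,[\eta]^{-1}}$ and Lumer's equality~\eqref{eq:Lumer-infty-appendix}, then splits into the cases $\mu_{\infty,[\eta]^{-1}}(A)\le 0$ and $>0$ to land on $\mu_{\infty,[\eta]^{-1}}(A)_+$; your computation $\mu_{\infty,[\eta]^{-1}}(TA)=\max_i t_i r_i\le(\max_i r_i)_+$ reaches the same bound by the explicit formula~\eqref{eq:inftyMatrixMeasure-short} and is in fact the content of the paper's Lemma~\ref{lemma:affine-scaling} specialized to this setting. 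The trade-off is that your route is a clean one-pass calculation but leans on differentiability (or Clarke Jacobians, which you correctly flag and defer to Appendix~\ref{sec:non-differentiable}), while the paper's WP-based argument covers non-smooth activations natively and only invokes the appendix machinery for $\diagL(\ON)$ — where your treatment coincides with the paper's, including the same harmless imprecision that $(A_{ii})_-$ is a valid lower bound for $\inf_{x,u} t_iA_{ii}$ rather than its exact value, which is all the definition of $\diagL$ requires.
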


We now use these  estimates to establish multiple
properties of the implicit neural network.

\begin{corollary}[Well posedness, input-state Lipschitz constant, and computation]
  \label{corollary:inn-properties}
 Consider the model~\eqref{eq:inn}, with parameters $(A,B,C,D)$ and with
 weakly increasing and non-expansive activation functions
 $\Phi$. Define the average map
 $\ON_\alpha:= (1-\alpha)\OI + \alpha \ON$ and consider
 the norms $\|\cdot\|_{\infty,[\eta]^{-1}}$, $\eta \in \realpositive^n$, on
 $\real^n$, $\norm{\cdot}{\mcU}$ on the input space $\real^r$ and
 $\norm{\cdot}{\mcY}$ on the output space $\real^q$. Then
 \begin{enumerate}
  \item\label{fact:wellposed} if $\mu_{\infty,[\eta]^{-1}}(A)<1$,
    then~\eqref{eq:inn} is well posed, i.e., there exists a unique fixed
    point,

  \item the map $\ON_\alpha$ is a contraction mapping for $0<\alpha \leq \alpha^* := \big(1-\min_{i\in\until{n}}(A_{ii})_-\big)^{-1}$
    with minimum contraction factor $\Lip(\ON_{\alpha^*}) =
    1-\frac{1-\mu_{\infty,[\eta]^{-1}}(A)_{+}}{1-\min_{i\in\until{n}}(A_{ii})_-}$. 

  \item the Lipschitz constants from input $u$ to fixed point $x_u^*$ and to the
    output $y=Cx_u^*+Du$ are
    \begin{align}
      \Lip_{u \to x^*} &:=  \frac{ \Lip_{u}(\ON)} {1- \osL_{x}(\ON)}  =
      \frac{\norm{B}{(\infty,[\eta]^{-1}),\mcU}} {1-\mu_{\infty,[\eta]^{-1}}(A)_+},
      \\
      \Lip_{u \to y} &:=      \frac{\norm{B}{(\infty,[\eta]^{-1}),\mcU}
        \norm{C}{\mcY,(\infty,[\eta]^{-1})} }
          {1-\mu_{\infty,[\eta]^{-1}}(A)_+} + \norm{D}{\mcY,\mcU}. \label{eq:InputOutputLipschitz}
    \end{align}
  \end{enumerate}
\end{corollary}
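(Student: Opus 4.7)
The plan is to read off each conclusion from the preceding general theorems by plugging in the constants that Theorem~\ref{thm:inn-Lip-osL} has just computed for $\ON$, while appealing to the non-differentiable extension noted at the end of Section~\ref{sec:fixed-point} since $\Phi$ need not be smooth. Specifically, Theorem~\ref{thm:inn-Lip-osL} furnishes $\osL_{x}(\ON)=\mu_{\infty,[\eta]^{-1}}(A)_+$, $\Lip_x(\ON) = \norm{A}{\infty,[\eta]^{-1}}$, $\Lip_u(\ON)=\norm{B}{(\infty,[\eta]^{-1}),\mcU}$, and $\diagL(\ON)=\min_{i}(A_{ii})_-$, and I would carry these four quantities through the three assertions.

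For statement~\ref{fact:wellposed}, the hypothesis $\mu_{\infty,[\eta]^{-1}}(A) < 1$ implies $\osL_x(\ON)=\mu_{\infty,[\eta]^{-1}}(A)_+ < 1$, so the non-differentiable form of Theorem~\ref{thm:perturbed-fixed}\ref{p1:unique-gu} (equivalently Theorem~\ref{thm:acceleration-1-inf}) applies and yields a unique fixed point of $\ON_u$ for each fixed $u$. For the second statement, I would invoke Theorem~\ref{thm:acceleration-1-inf}\ref{p4:1-inf:iteration-converge}\ref{p5:1-inf-optimal} directly: with $\osL_x(\ON)=\mu_{\infty,[\eta]^{-1}}(A)_+$ and $\diagL(\ON)=\min_i(A_{ii})_-$, the admissible step-size range becomes $0<\alpha\le\bigl(1-\min_i(A_{ii})_-\bigr)^{-1}$ and the optimal contraction factor $1-(1-\osL_x(\ON))/(1-\diagL(\ON))$ takes exactly the claimed form.

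For statement~\ref{p3:unique-eq} on input-state and input-output Lipschitz constants, the input-state bound is a direct application of Theorem~\ref{thm:perturbed-fixed}\ref{p2:lim} with the norms $\|\cdot\|_{\mcX}=\|\cdot\|_{\infty,[\eta]^{-1}}$ and $\|\cdot\|_{\mcU}$: substituting the values of $\Lip_u(\ON)$ and $\osL_x(\ON)$ gives $\Lip_{u\to x^*}$ immediately. The input-output bound then follows by propagating through $y=Cx_u^*+Du$ and using the triangle inequality together with the definitions of the induced operator norms $\norm{C}{\mcY,(\infty,[\eta]^{-1})}$ and $\norm{D}{\mcY,\mcU}$, i.e.,
\begin{equation*}
\norm{y_u-y_v}{\mcY}\le \norm{C}{\mcY,(\infty,[\eta]^{-1})}\norm{x^*_u-x^*_v}{\infty,[\eta]^{-1}}+\norm{D}{\mcY,\mcU}\norm{u-v}{\mcU},
\end{equation*}
which yields the stated formula~\eqref{eq:InputOutputLipschitz}.

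The main obstacle, which is not really conceptual but has to be acknowledged, is that the three theorems of Section~\ref{sec:fixed-point} are stated for differentiable maps, whereas typical activation functions such as $\operatorname{ReLU}$ are only piecewise smooth. I would handle this by explicitly invoking the almost-everywhere extension described at the end of Section~\ref{sec:fixed-point} (formalized in Appendix~\ref{sec:non-differentiable}), after observing that $\ON(x,u)=\Phi(Ax+Bu)$ has precisely the form $\Phi\circ \OH$ with $\OH(x,u)=Ax+Bu$ differentiable and $\Phi$ weakly increasing, non-expansive, and diagonal; the bound~\eqref{eq:diagL} holds almost everywhere with the displayed value of $\diagL(\ON)$, which legitimizes the use of Theorems~\ref{thm:acceleration-1-inf} and~\ref{thm:perturbed-fixed} throughout.
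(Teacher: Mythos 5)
Your proposal is correct and follows essentially the same route as the paper, whose proof simply states that the corollary is an immediate consequence of Theorem~\ref{thm:acceleration-1-inf} (or Theorem~\ref{thm:non-diff} for non-differentiable activations) combined with the constants computed in Theorem~\ref{thm:inn-Lip-osL}. Your explicit handling of the non-differentiable case via the appendix extension and the triangle-inequality step for the output map are exactly the substitutions the paper leaves implicit.
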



\myclearpage

\section{Training implicit neural networks}\label{sec:training}
\newcommand{\diagm}[1]{\subscr{#1}{d}}
\paragraph*{Problem setup}
 Given an input data matrix $U = [u_1,\dots,u_m] \in \real^{r \times m}$
 and a corresponding output data matrix $Y = [y_1,\dots, y_m] \in \real^{q
   \times m}$, we aim to learn matrices $A,B,C,D$ so that the neural
 network~\eqref{eq:inn} approximates the input-output relationship.  We
 rewrite the model for matrix inputs as $ \widehat{Y} = CX + DU$, where $X
 = \Phi(AX + BU)$.  From
 Corollary~\ref{corollary:inn-properties}\ref{fact:wellposed}, if each
 $\phi_i$ is weakly increasing and non-expansive, the fixed point problem
 is well-posed when $\mu_{\infty,[\eta]^{-1}}(A) < 1$ for some $\eta \in
 \realpositive^n$. We consider a training problem of the form
\begin{equation} \label{eq:TrainingProblem}
\begin{aligned}
\min_{A,B,C,D,X}\qquad &\mathcal{L}(Y, CX + DU) +
\mathcal{P}(A,B,C,D)\\
& X = \Phi (AX+BU), \quad \mu_{\infty,[\eta]^{-1}}(A)\leq \gamma,
\end{aligned}
\end{equation}
where $\mathcal{L}$ is a loss function, $\mathcal{P}$ is a penalty function, and
$\gamma < 1$ is a hyperparameter ensuring the fixed point problem is
well-posed. {\color{black}For $\eta=\vect{1}_n$, we can remove the constraint $\mu_{\infty}(A)\leq
\gamma$ in the training optimization
problem~\eqref{eq:TrainingProblem} using the following parametrization
of weight matrix $A$:
\begin{align}\label{eq:parametrization-weight}
  A = T - \diag(|T|\vect{1}_n) +\gamma I_n.
\end{align}
In Appendix~\ref{app:matrixmeasure}, we show that
parametrization~\eqref{eq:parametrization-weight} characterizes the
set of matrices in $\real^{n\times n}$ satisfying
$\mu_{\infty}(A)\leq \gamma$. Using the
parametrization~\eqref{eq:parametrization-weight} in the training problem not only
improves the computational efficiency of the optimization but also
allows for the design of implicit
neural networks with additional structure such as convolutions. Suppose $u\in \real^{rs^2}$ is a $r$-channel input of size $s\times s$ and $x\in \real^{ns^2}$
is an $n$-channel hidden layer. To define our implicit CNN, we select
the weight matrix $A\in \real^{ns^2\times ns^2}$ as the matrix form of
a 2D convolutional operator. If we consider a circular convolution
operator, then $A$ is a circulant matrix. Using the
parametrization~\eqref{eq:parametrization-weight}, $A$ is
circulant if and only if $T$ is
circulant. Therefore, the training problem for implicit CNNs can be
cast as an unconstrained optimization problem using the above
parametrization with a circulant $T$.}


\paragraph*{Improving robustness via Lipschitz regularization}
We now focus on learning robust implicit neural networks with bounded
Lipschitz constants via a regularization strategy. Setting both
$\|\cdot\|_{\mcU}$ and $\|\cdot\|_{\mcY}$ as $\|\cdot\|_\infty$ in the
input-output Lipschitz bound~\eqref{eq:InputOutputLipschitz}, we get
\begin{align*}
\Lip_{u \to y} &= \frac{\norm{B}{(\infty,[\eta]^{-1}),(\infty)}
	\norm{C}{(\infty),(\infty,[\eta]^{-1})} }
{1-\mu_{\infty,[\eta]^{-1}}(A)_+} + \norm{D}{\infty,\infty} \\ &\leq \frac{1}{2}\frac{\norm{B}{(\infty,[\eta]^{-1}),(\infty)}^2 + \norm{C}{(\infty),(\infty,[\eta]^{-1})}^2}
{1-\mu_{\infty,[\eta]^{-1}}(A)_+} + \norm{D}{\infty,\infty},
\end{align*}
where the inequality provides a convex upper bound for the
input-output Lipschitz constant. Therefore, using the hyperparameter $\lambda>0$, the regularized optimization problem is written as
\begin{align}
\min_{A,B,C,D,X}\qquad &\mathcal{L}(Y, CX + DU) +
\lambda\Big(\frac{1}{2}\frac{\norm{B}{(\infty,[\eta]^{-1}),(\infty)}^2 + \norm{C}{(\infty),(\infty,[\eta]^{-1})}^2}
{1-\mu_{\infty,[\eta]^{-1}}(A)_+} + \norm{D}{\infty,\infty}\Big) \nonumber\\
& X = \Phi (AX+BU), \quad \mu_{\infty,[\eta]^{-1}}(A)\leq \gamma.\label{eq:regularizedLip}
\end{align}

\paragraph*{Certified adversarial robustness via Lipschitz bounds}
{\color{black} Given a nominal input $u\in \real^{r}$, we consider any perturbed
  input $v$ within an $\ell_\infty$-ball of radius $\varepsilon$
  around $u$. In this case, we have
\begin{equation}
\|y_u - y_v\|_\infty \leq \Lip_{u \to y}\|u - v\|_\infty \leq \Lip_{u \to y}\varepsilon.
\end{equation}
Then we define $\text{margin}(u) = (y_u)_i - \max_{j \neq i} (y_u)_j$, where $(y_u)_i$ is the logit corresponding to the (correct) label $i$ for the input $u$. Then provided $L\varepsilon \leq \frac{1}{2}\text{margin}(u)$, NEMON is certifiably robust to any perturbed input $v$ within an $\ell_\infty$-ball of radius $\varepsilon$ centered at $u$.}

\paragraph*{Backpropagation of gradients via average iteration}

From~\citep{LEG-FG-BT-AA-AYT:21} we now show how the average iteration can
be used to perform backpropagation via the implicit function theorem.  For
simplicity, we assume that each activation function $\phi_i$ is
differentiable and consider mini-batches of size 1, i.e., we have $X=x\in
\real^n$, $U=u\in \real^r$ and $\widehat{Y}=\widehat{y}\in \real^q$. 
Let $x^*$ be the unique solution of the fixed-point
equation~\eqref{eq:inn}. Then the chain rule implies
\begin{align*}
  \frac{\partial \mathcal{L}}{\partial A} &= (\nabla_{x^*}\mathcal{L})
  x^{\top},\qquad
  \frac{\partial \mathcal{L}}{\partial B} = (\nabla_{x^*}\mathcal{L})
  u^{\top},\\
  \frac{\partial \mathcal{L}}{\partial C} &= (\nabla_{\widehat{y}}\mathcal{L})
  x^{\top},\qquad
  \frac{\partial \mathcal{L}}{\partial D} = (\nabla_{\widehat{y}}\mathcal{L})
  u^{\top}.
\end{align*}
Since $\mathcal{L}$ depends explicitly on $\widehat{y}$, computing
$\nabla_{\widehat{y}}\mathcal{L}$ is straightforward. Computing
$\nabla_{x^*}\mathcal{L}$ is more complicated since $X^*$ is defined
only implicitly. However, it be shown that
\begin{align*}
  \nabla_{x^*}\mathcal{L} =
  (C(I-\jac{\Phi}A)^{-1}\jac{\Phi})^{\top}\nabla_{\widehat{y}}\mathcal{L}.
\end{align*}
Since $\mu_{\infty,[\eta]^{-1}}(A)<1$, by Lemma~\ref{lemma:affine-scaling}
we get that $\mu_{\infty,[\eta]^{-1}}(\jac{\Phi} A)<1$.  This implies that
the matrix $G:=(I_n-\jac{\Phi}A)^{-1}\jac{\Phi}\in \real^{n\times n}$
exists and is the solution to the following fixed-point
equation~\cite[Section 6.2]{LEG-FG-BT-AA-AYT:21}
\begin{align}\label{eq:fixed-point-backpropagartion}
  G = \jac{\Phi}(A G + I_n).
\end{align}
Moreover, $\mu_{\infty,[\eta]^{-1}}(\jac{\Phi} A)<1$ and
Theorem~\ref{thm:acceleration-1-inf} together imply that the fixed-point
equation~\eqref{eq:fixed-point-backpropagartion} has a unique solution
$G^*$ and, for every $0<\alpha\le \alpha^* :=
\big(1-\min_i(A_{ii})_{-}\big)^{-1}$, the average iterations
\begin{align*}
  G_{k+1} = (1-\alpha)G_k + \alpha \jac{\Phi}(A G_{k} + I_n),\qquad\mbox{ for
    all }k\in \mathbb{Z}_{\ge 0}
\end{align*}
are contracting with the minimum contraction factor
$1-\alpha^*\big(1-\mu_{\infty,[\eta]^{-1}}(A)_{+}\big)$ at step size
$\alpha^*$.

\myclearpage

\section{Theoretical and numerical comparisons}\label{sec:numericalexperiment}

{\color{black}In this section, we provide a comprehensive comparison of
  our framework with the state-of-the-art implicit neural
  networks\footnote{All models were trained using Google Colab with a Tesla P100-PCIE-16GB GPU.}.


\subsection{Implicit network models}
We start by reviewing the existing models for implicit networks in the
literature. 

\paragraph{Implicit deep learning model.}
~\citep{LEG-FG-BT-AA-AYT:21} proposes a class of implicit neural
networks with input-output behavior described
by~\eqref{eq:inn}. It is shown that a
sufficient condition for existence and uniqueness of a solution and
convergence of the Picard iterations for the fixed point equation
$x=\Phi(Ax+Bu)$ is $\lambda_{\mathrm{pf}}(|A|)<1$, where
$|A|$ denotes the entrywise absolute value of the matrix $A$ and
$\lambda_{\mathrm{pf}}$ denotes the Perron-Frobenius eigenvalue. For training, the optimization
problem~\eqref{eq:TrainingProblem} is used where the constraint
$\mu_{\infty,[\eta]^{-1}}(A)\le \gamma$ is replaced by
$\|A\|_{\infty}\le \gamma$~\cite[Equation 6.3]{LEG-FG-BT-AA-AYT:21}.\footnote{The implicit deep learning implementation is available at
  \tt{https://github.com/beeperman/idl}.} It is easy to see
that our well-posedness condition in
Corollary~\ref{corollary:inn-properties}\ref{fact:wellposed} is less
conservative than $\lambda_{\mathrm{pf}}(|A|)<1$ and its
convex relaxation $\|A\|_{\infty}<1$.

\paragraph{Monotone operator deep equilibrium network (MON).}
\citep{EW-JZK:20} proposes to use monotone operator theory to
guarantee well-posedness of the fixed-point equation as well
as its convergence to the solutions. The
input-output behavior of the network is described
by~\eqref{eq:inn}. For training, the optimization
problem~\eqref{eq:TrainingProblem} is used where the constraint
$\mu_{\infty,[\eta]^{-1}}(A)\le \gamma$ is replaced by
$I_n-\tfrac{1}{2}(A+A^{\top})\succeq (1-\gamma)I_n$. In order to
ensure that this constraint is always satisfied in the training procedure, the weight matrix $A$ is parametrized as
$A=\gamma I_n - W^{\top}W - Z + Z^{\top}$, for arbitrary
$W,Z\in \real^{n\times n}$~\cite[Appendix D]{EW-JZK:20}.\footnote{The MON implementation is
  available at
  \tt{https://github.com/locuslab/monotone\_op\_net}.} In the
context of contraction theory,
\begin{equation*}
 I_n-\tfrac{1}{2}(A+A^{\top})\succeq (1-\gamma)I_n \quad \iff \quad \mu_2(A)\le \gamma,
\end{equation*}
which is shown in Appendix~\ref{app:reviewmaterial}. Thus, the parametrization
$A=\gamma I_n - W^{\top}W - Z + Z^{\top}$ can be considered as the
$\ell_2$-norm version of
the parametrization described by equation~\eqref{eq:parametrization-weight}. In
other words, the monotone operator network formulation is a Euclidean
transcription of the framework we propose in this paper.}


\subsection{MNIST experiments}
\begin{figure}[!ht]
	\begin{tabular}{cc}
		\includegraphics[width = 0.482\linewidth,clip]{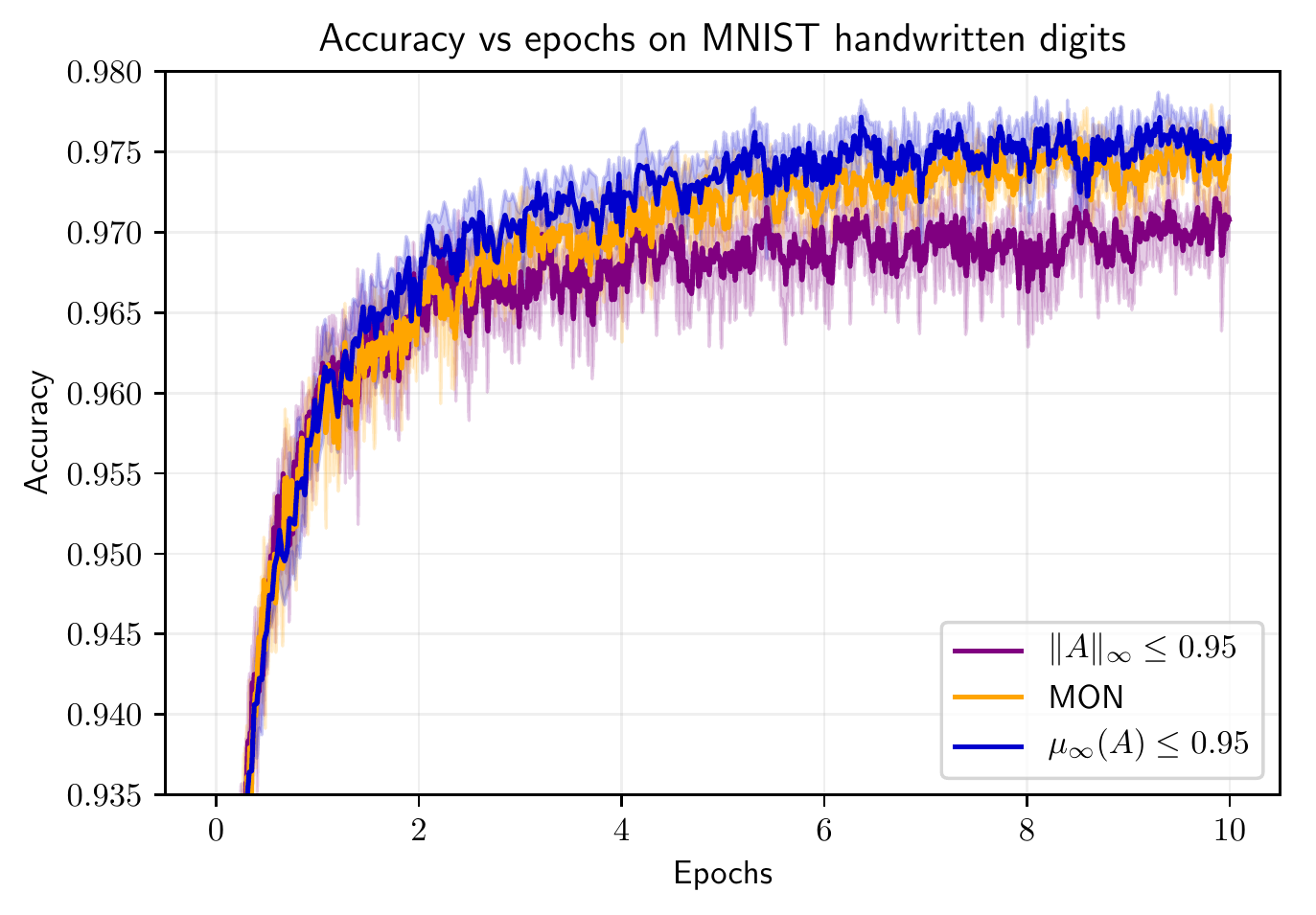}&
		\includegraphics[width = 0.482\linewidth,clip]{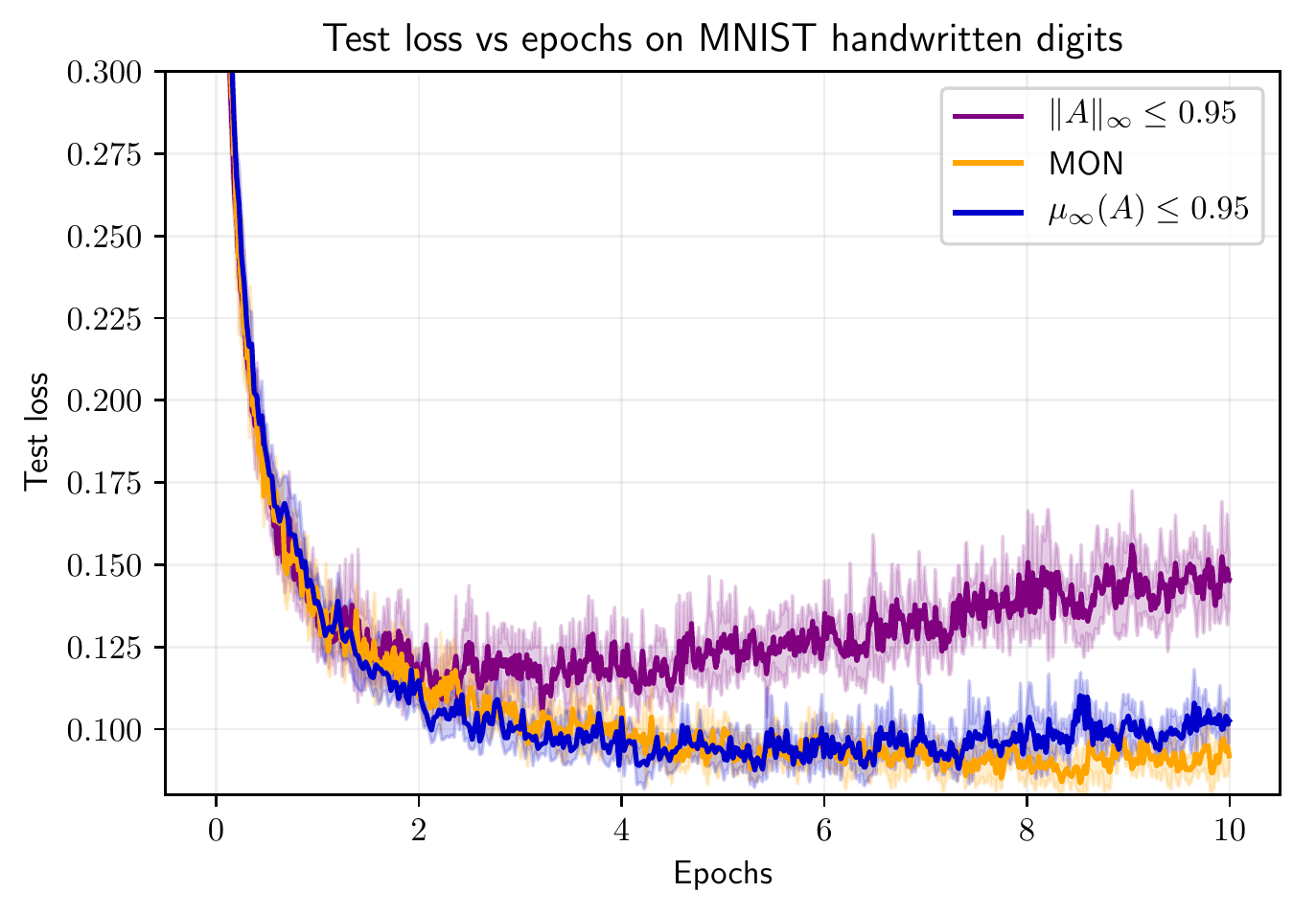}
	\end{tabular}
	\caption{Performance comparison between the NEMON model with
          $\mu_\infty(A) \leq 0.95$, the implicit deep learning model with
          $\|A\|_\infty \leq 0.95$, and MON with $I_n-\frac{1}{2}(A+A^{\top})\succeq 0.05
I_n$ on the MNIST dataset. The curves are generated by mean accuracy
and mean loss over 5 different runs while light envelopes around the
curves correspond to the standard deviation over the runs. Average
best accuracy for the NEMON model is $0.9772$, while it is $0.9721$ for
implicit deep learning model and $0.9762$ for the MON model.}
	\label{fig:MNIST}
\end{figure}

In the digit classification dataset MNIST, input data are $28 \times 28$ pixel images of handwritten digits between 0-9. There are
60000 training images and 10000 test images. For training, images are
reshaped into 784 dimensional column vectors and entries are scaled
into the range $[0,1]$. As a loss function, we use the cross-entropy. All models are of order
$n = 100$, used the ReLU activation function $\phi_i(x) = (x)_+$, and
are trained with a batch size of 300 over 10 epochs with a learning
rate of $1.5 \times 10^{-2}$. 
Curves for accuracy and loss versus epochs for the three models are shown
in Figure~\ref{fig:MNIST}.  {\color{black} Regarding training times, using
  the average iteration, NEMON took, on average, 12 forward iterations, 13
  backward iterations, and 9.8 seconds to train per epoch. Using the
  Peaceman-Rachford iteration, MON took, on average, 17 forward iterations,
  16 backward iterations, and 9.5 seconds to train per epoch. Using the
  Picard iteration, the implicit deep learning model took, on average, $10$
  forward iterations, $5$ backward iterations, and $5.8$ seconds to train
  per epoch.}  We observe that the NEMON model performs better than the
implicit deep learning model and has a comparable performance to MON.

\begin{figure}[ht]\centering
	\begin{tabular}{cc}
		\includegraphics[width = 0.482\linewidth,clip]{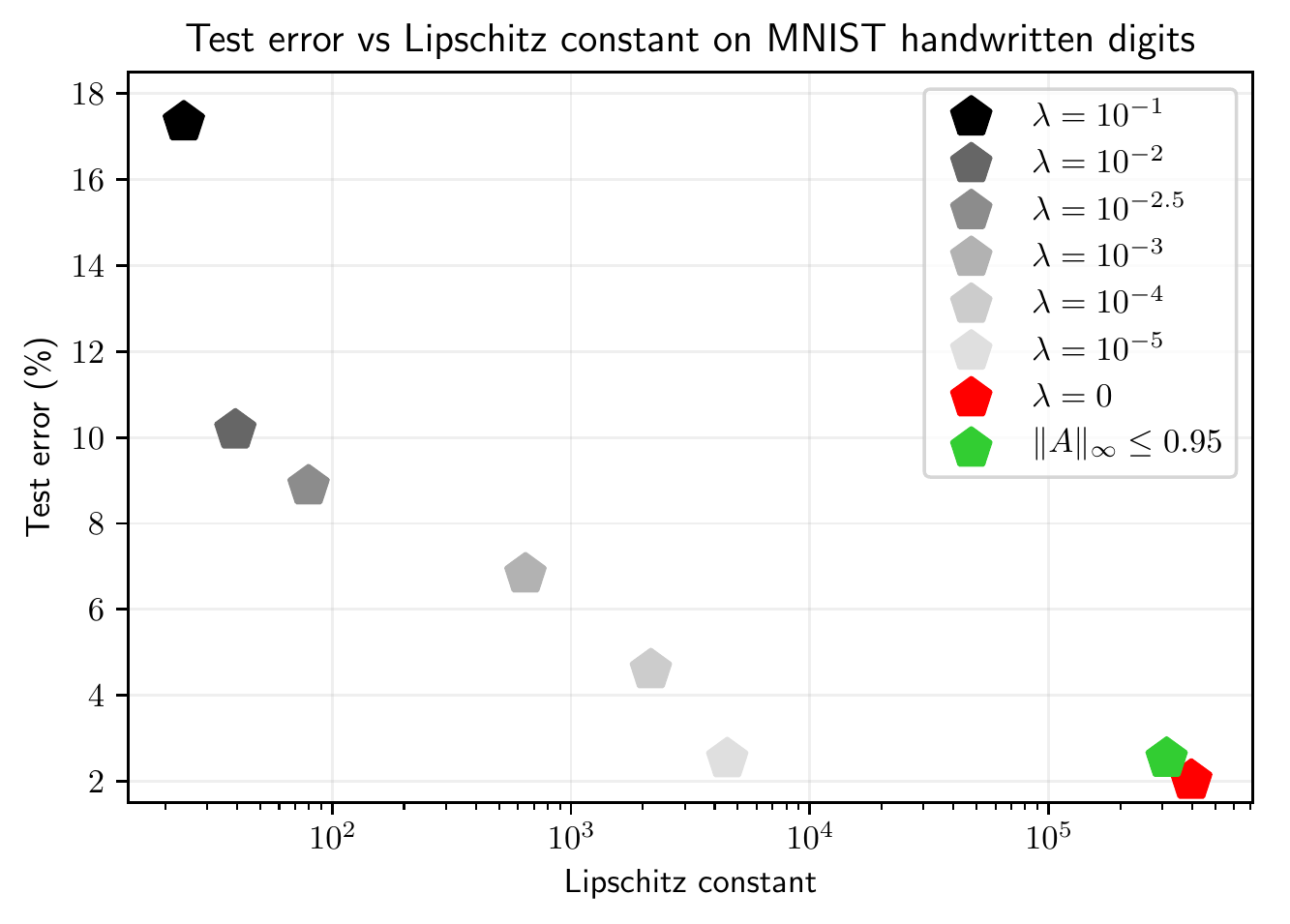}&
		\includegraphics[width = 0.482\linewidth,clip]{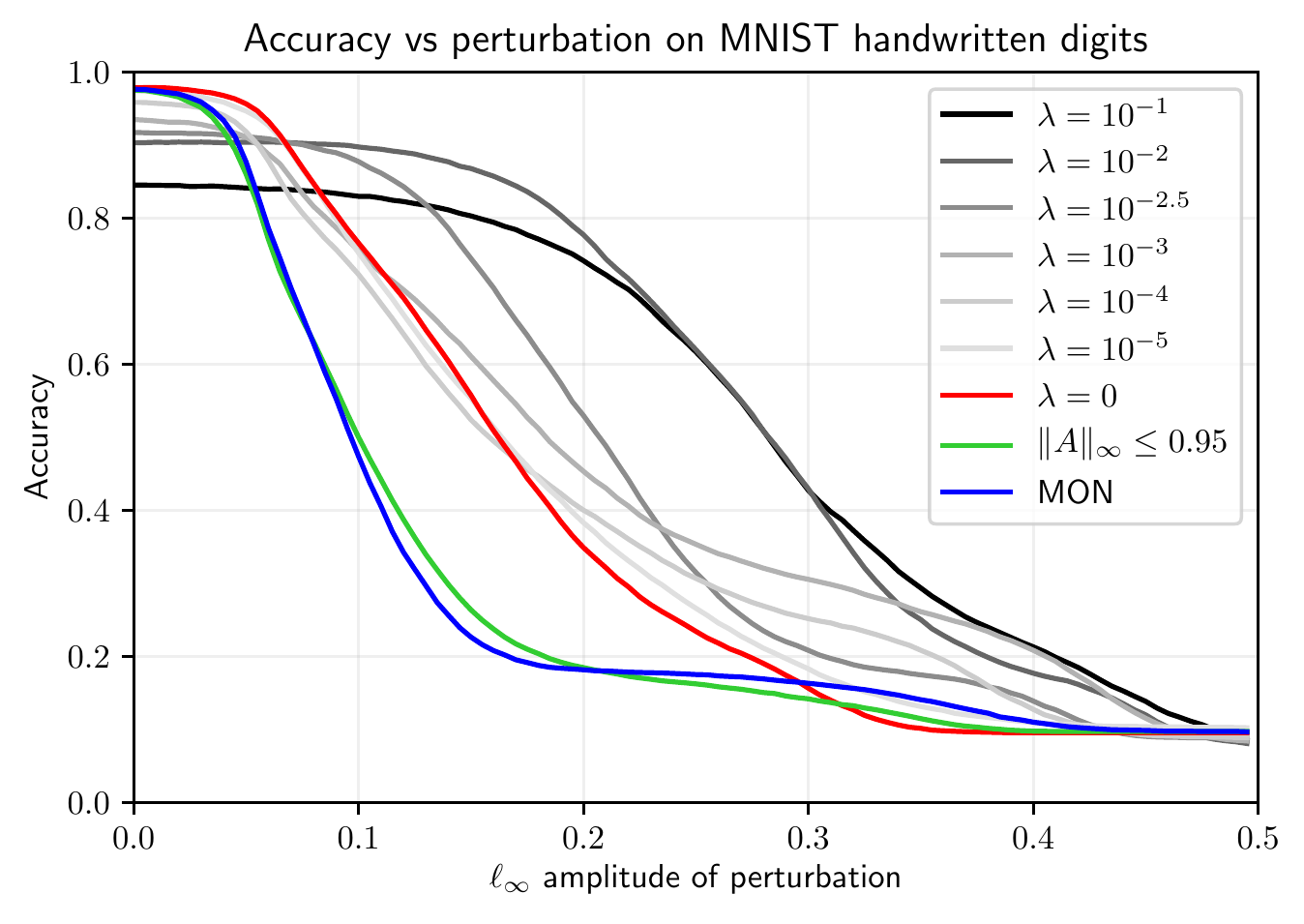}
	\end{tabular}
	\caption{On the left is a plot of test error versus Lipschitz
          constant for the implicit deep learning model with
          $\|A\|_\infty \leq 0.95$  and for NEMON with $\mu_\infty(A)
          \leq 0.95$ and parametrized by the regularization
          hyperparameter $\lambda$. We define the test error as $1$
          minus the accuracy. On the right is a plot of accuracy
          versus $\ell_\infty$ perturbation of a deterministic
          adversarial image inversion attack where we additionally
          include the MON model with $I_n-\frac{1}{2}(A+A^{\top})\succeq 0.05I_n$.}\label{fig:robustness}
\end{figure}
We {\color{black}also} study the robustness of the NEMON model compared
to the implicit deep learning model and the MON model on the MNIST dataset. We train
various models regularized by the input-output Lipschitz constant as
in~\eqref{eq:regularizedLip}. Additionally, to verify robustness of
the different models, we consider several adversarial attacks and plot
the accuracy versus perturbation of such an attack.
In Figure~\ref{fig:robustness}, we consider a continuous image inversion
attack~\citep{HH-BX-MJ-RP:17}, where each pixel is perturbed in the
direction of pixel value inversion with amplitude given by the
$\ell_\infty$ perturbation. For more details on this and other types of
adversarial perturbations, we refer to Appendix~\ref{app:morenumerics}. We
observe that for $\lambda=10^{-5}$, the regularized NEMON model achieves a two
order of magnitude decrease in its input-output Lipschitz constant compared
to the un-regularized NEMON models. In addition, we see that the implicit
deep learning model and the MON model are more sensitive to the continuous image
inversion attack than NEMON. Moreover, as the regularization
parameter $\lambda$ increases, the NEMON model becomes increasingly robust to
this attack.

\subsection{CIFAR-10 experiments}
\textcolor{black}{In the image classification dataset CIFAR-10, input
  data are $32 \times 32$ color images in $10$ classes. There are
  $50000$ training images and $10000$ test images. We compare our
  proposed NEMON model with a convolutional structure to a single
  convolutional layer MON model. Each model used $81$ channels. We
  train both models with a batch size of $256$ and a learning rate of $10^{-3}$ for $40$ epochs. For training, using the average iteration, NEMON took, on average, $10$ forward iterations, $10$ backward iterations, and $75.0$ seconds per epoch to train. Using the Peaceman-Rachford iteration, MON took, on average, $5$ forward iterations, $5$ backward iterations, and $101.8$ seconds per epoch to train. }

\textcolor{black}{We focus primarily on the robustness of NEMON and MON
  with respect to $\ell_\infty$-norm bounded perturbations on
  CIFAR-10. To this end, we additionally trained two NEMON models with
  regularization parameters $\lambda \in \{10^{-4}, 10^{-5}\}$. In
  Figure~\ref{fig:robustness-CIFAR}, on the left is a plot of the
  certified robustness of each of the models via their
  $\ell_\infty$-Lipschitz constants. For MON, we got the
  $\ell_\infty$-Lipschitz bound using the method
  in~\citep{CP-EW-JZK:21} for the $\ell_2$-Lipschitz bound and using
  the upper bound $\|u\|_2 \leq \sqrt{rs^2}\|u\|_{\infty}$. On the
  right is a plot of the accuracy of different models with respect to
  the projected gradient descent attack. We observe that the
  un-regularized and regularized
  NEMON models are more robust to $\ell_\infty$-norm bounded perturbations than is MON.}

\begin{figure}[ht]\centering
	\begin{tabular}{cc}
		\includegraphics[width = 0.482\linewidth,clip]{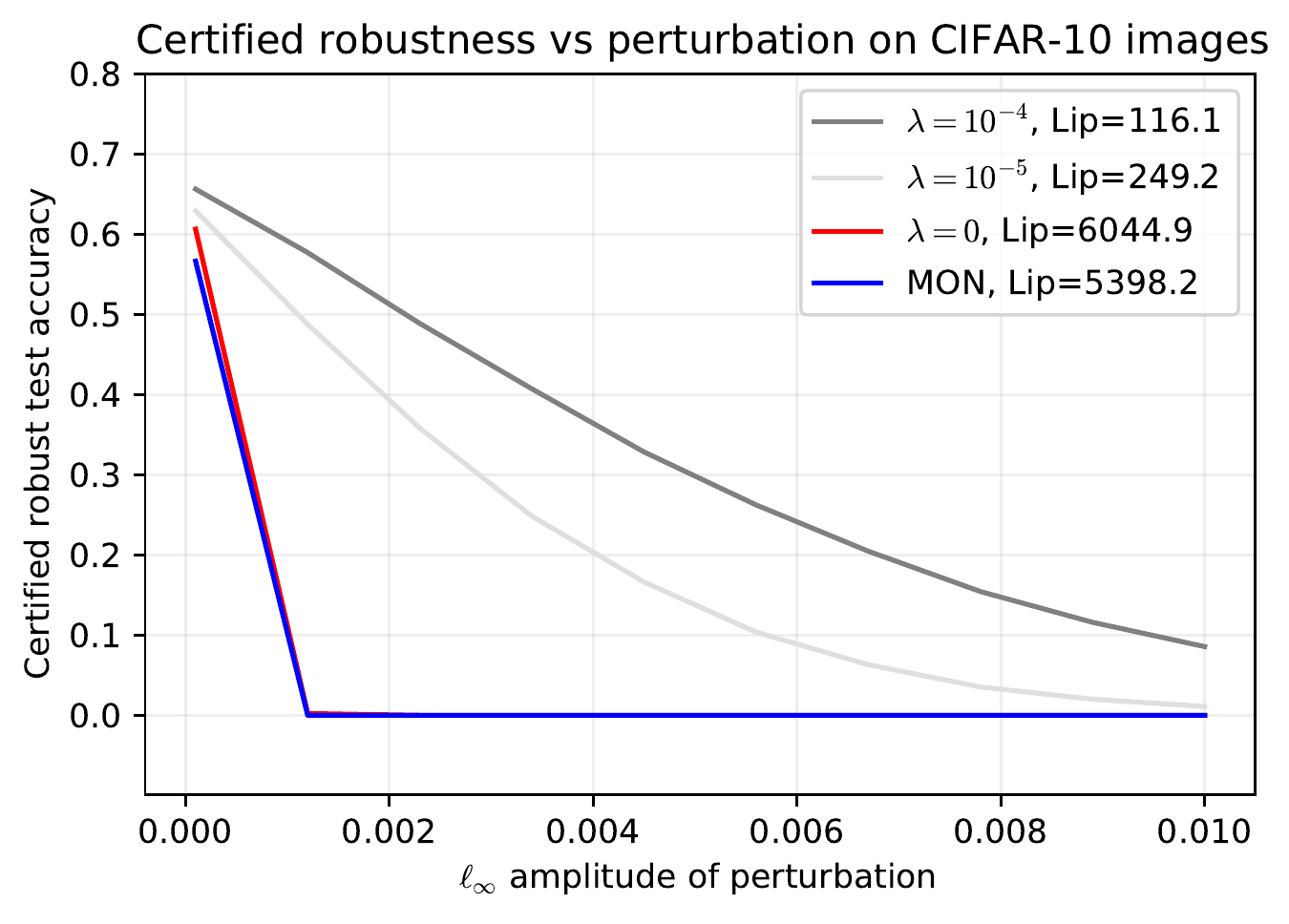}&
		\includegraphics[width = 0.482\linewidth,clip]{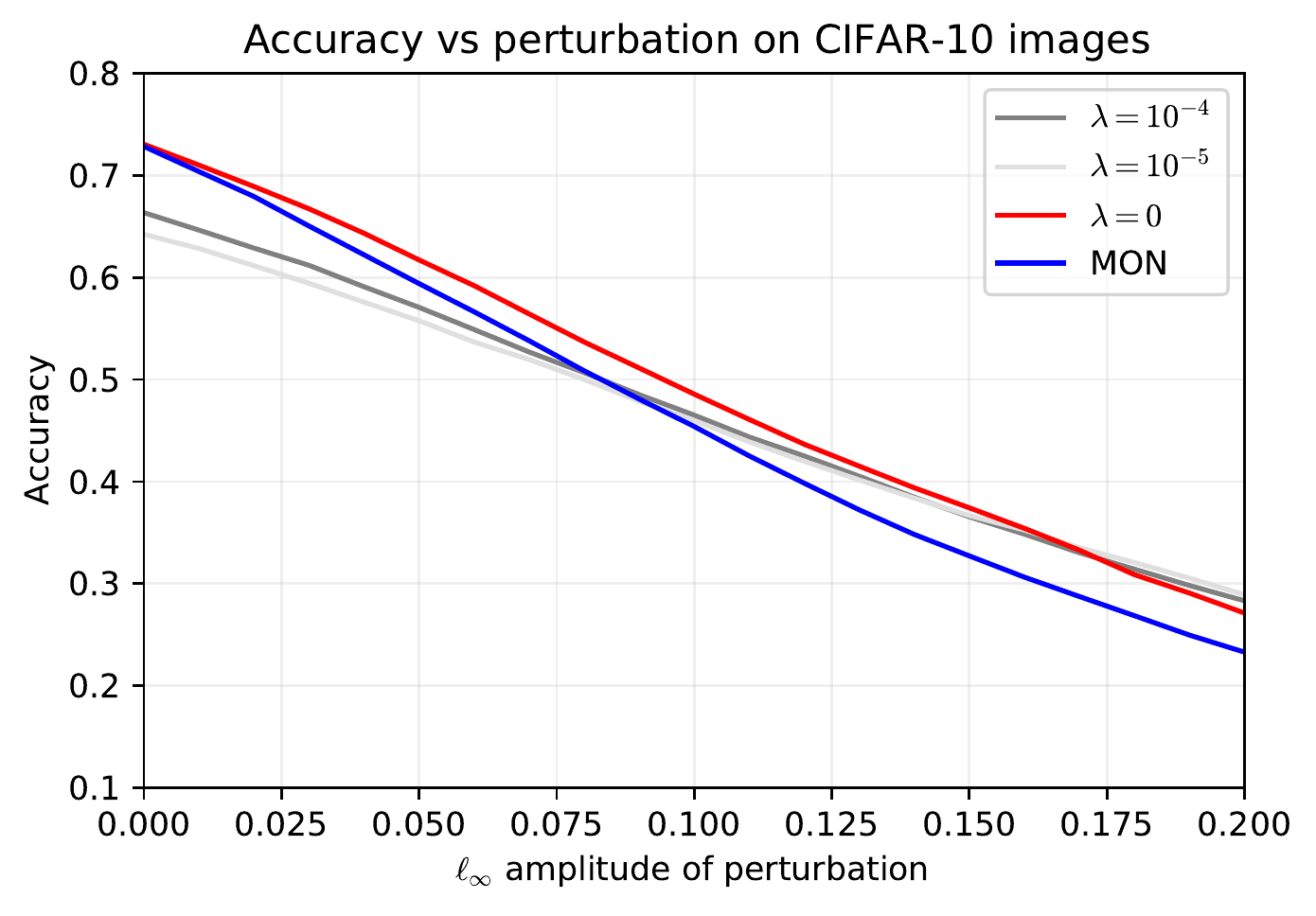}
	\end{tabular}
	\caption{On the left is a plot of certified robustness via the
          Lipschitz constants of MON with the constraint
          $I_n-\frac{1}{2}(A+A^{\top})\succeq I_n$ and NEMON with the
          constraint $\mu_{\infty}(A)\le 0$. On the right is a plot of accuracy versus $\ell_\infty$ perturbation of the projected gradient descent attack.}\label{fig:robustness-CIFAR}
\end{figure}


\section{Conclusion}

Using non-Euclidean contraction theory, we propose a framework to
study stability of fixed-point equations. We apply this framework
to analyze well-posedness and convergence of implicit neural
networks and to design an efficient training algorithm to incorporate
robustness guarantees. For future research, we envision that our framework is
applicable to study stability and robustness of implicit learning
models with additional structure such as graph neural
networks. 

\section{Acknowledgments}
The authors thank Ian Manchester for stimulating discussions about
contraction theory and the anonymous reviewers for their insightful
feedback. This work was supported in part by DTRA under grant
HDTRA1-19-1-0017, AFOSR under grant FA9550-22-1-0059, and NSF Graduate
Research Fellowship under grant 2139319.

\bibliographystyle{abbrvnat}
\bibliography{alias,Main,FB}

\newpage

\clearpage\appendix

\section{A comprehensive review of non-Euclidean contraction theory}\label{app:reviewmaterial}
\paragraph*{Matrix measures}
Let $\|\cdot\|$ be a norm on $\real^n$ and its induced norm on
$\real^{n\times{n}}$.  The matrix measure of $A \in \real^{n \times n}$
with respect to $\norm{\cdot}{}$ is
\begin{equation}
  \mu(A) := \lim_{h \to 0^+} \frac{\|I_n + hA\| - 1}{h}.
\end{equation}

It is well known that this limit is well posed because the right-hand side
is non-increasing in $h$, due to the convexity of the norm.  For arbitrary
$n\times n$ matrices $A$ and $B$, the following properties hold:
  \begin{subequations}
    \begin{alignat}{2}
      &\text{sub-additivity:}\quad&
      \mu(A+B) &\leq   \mu(A)+\mu(B)  , \label{measure:subadd}
      \\
      &\text{weak homogeneity:}& 
      \mu(\alpha A) &= \alpha \mu( A), \enspace\forall{\alpha}\geq0,
      \label{measure:scaling}
      \\
      &\text{convexity:}& 
      \mu(\theta A + (1-\theta) B) & \leq \theta \mu(A) + (1-\theta)  \mu(B) ,
      \enspace\forall\theta\in[0,1],
      \label{measure:convex}
      \\
      &\text{norm/spectrum:}&
      -\norm{A}{}\leq -\mu(-A)  \leq 
        \realpart(\lambda) & \leq \mu(A) \leq \norm{A}{},
      \enspace\forall\lambda\in\spectrum(A),
      \label{measure:munegative-Hurwitz}
      \\
      &\text{translation:}& 
      \mu(A+c I_n) &= \mu(A)+c, \enspace\forall{c}\in\real,
      \label{measure:translation}
      \\
      &\text{product:}&
      \max\{-\mu(A),-\mu(-A)\}\norm{x}{} & \leq \norm{Ax}{}, \enspace\forall{x}\in\real^n, \label{measure:product}
      \\
      &\text{norm of inverse:}&
      \mu(A) < 0 \enspace &\implies  \norm{A^{-1}}{}\leq -1 /\mu(A) \label{measure:normA}.
    \end{alignat}
  \end{subequations}
  Note that convexity is an immediate consequence of sub-additivity
  and weak homogeneity. Additionally, by property~\eqref{measure:munegative-Hurwitz}, the matrix measure is upper
  bounded by the matrix norm and may be negative. We refer to
  \citep{CAD-HH:72}, and references therein, for the proof of these
  and additional properties enjoyed by matrix measures.

We will be specifically interested in diagonally weighted $\ell_1$ and $\ell_\infty$
norms defined by
\begin{gather}
  \norm{x}{1,[\eta]} = \sum_i\eta_i|x_i|
  \qquad\text{and}\qquad
  \norm{x}{\infty,[\eta]^{-1}} = \max_{i}  \frac{1}{\eta_i}|x_i|,
\end{gather}
where, given a positive vector $\eta\in\realpositive^n$, we use $[\eta]$ to denote the diagonal matrix with diagonal entries $\eta$.  The corresponding matrix norms and measures are
\begin{align}
  \norm{A}{1,[\eta]} &= \max_{j\in\until{n}} \sum_{i=1}^n \frac{\eta_i}{\eta_j} |a_{ij}|,
  \quad
  &&\mu_{1,[\eta]}(A)  =
  \max_{j\in\until{n}} \Big( a_{jj} + \sum_{i=1,i\neq j}^n  |a_{ij}| \frac{\eta_i}{\eta_j}   \Big), \\
 \norm{A}{\infty,[\eta]^{-1}} &= \max_{i\in\until{n}} \sum_{j=1}^n \frac{\eta_j}{\eta_i}  |a_{ij}|,
  \quad
  &&\mu_{\infty,[\eta]^{-1}}(A) = \max_{i\in\until{n}} \Big( a_{ii} + \sum_{j=1,j\neq i}^n
  |a_{ij}| \frac{\eta_j}{\eta_i}\Big). \label{eq:inftyMatrixMeasure-appendix}
\end{align}
Finally, we include the Euclidean norm $\ell_2$. Given a positive definite
$P$, we define the weighted $\ell_2$ norm by
\begin{equation*}
  \|x\|_{2,P^{1/2}} = \sqrt{x^{\top} P x}.
  \end{equation*}
Then the following equalities are well known, e.g.,
see~\citep{CAD-HH:72,AD-SJ-FB:20o-simple},
\begin{align}  
  \mu_{2,P^{1/2}}(A)  &= \subscr{\lambda}{max}\Big( \frac{P A P^{-1} + A^\top  }{2} \Big)
  = \min\setdef{b\in\real}{A^\top P + PA\preceq 2b P} \label{eq:LMIell2}
  \\
  &= \max\setdef{x^\top PA x}{x^\top P x=1}. \label{eq:Lumer:ell2}
\end{align}

\paragraph{Weak pairings}
We briefly review the notion of a weak pairing (WP) on
$\real^{n}$ from~\citep{AD-SJ-FB:20o-simple}. A \emph{WP} on $\real^n$ is a map
$\WP{\cdot}{\cdot}: \real^n \times \real^n \to \real$ satisfying:
\begin{enumerate}
\item\label{WP1-appendix}(sub-additivity and continuity of first argument)
  $\WP{x_1+x_2}{y} \leq \WP{x_1}{y} + \WP{x_2}{y}$, for all
  $x_1,x_2,y \in \real^n$ and $\WP{\cdot}{\cdot}$ is continuous in its
  first argument,
\item\label{WP3-appendix}(weak homogeneity)
  $\WP{\alpha x}{y} = \WP{x}{\alpha y} = \alpha\WP{x}{y}$ and
  $\WP{-x}{-y} = \WP{x}{y}$, for all
  $x,y \in \real^n, \alpha \geq 0$,
\item\label{WP4-appendix}(positive definiteness) $\WP{x}{x} > 0$, for all
  $x \neq \vectorzeros[n],$
\item\label{WP5-appendix}(Cauchy-Schwarz inequality)
  $|\WP{x}{y}| \leq \WP{x}{x}^{1/2}\WP{y}{y}^{1/2}$, for all
  $x, y \in \real^n.$
\end{enumerate}
For every norm $\|\cdot\|$ on $\real^n$, there exists a (possibly not
unique) compatible WP $\WP{\cdot}{\cdot}$ such that
$\|x\|^2=\WP{x}{x}$, for every $x\in \real^n$. If the norm is induced by
an inner product, the WP coincides with the inner product.

Specifically, from~\citep[Table~III]{AD-SJ-FB:20o-simple}, we introduce the
WPs
$\map{\WP{\cdot}{\cdot}_{1,[\eta]}}{\real^n\times\real^n}{\real}$ and
$\map{\WP{\cdot}{\cdot}_{\infty,[\eta]^{-1}}}{\real^n\times\real^n}{\real}$, 
defined by
\begin{equation}
  \label{def: WP-1+infty-appendix}
  \WP{x}{y}_{1,[\eta]} = \|y\|_{1,[\eta]}\sign(y)^\top [\eta]x
  \qquad\text{and}\qquad  \WP{x}{y}_{\infty,[\eta]^{-1}} = \max_{i \in I_{\infty}([\eta]^{-1}y)} \eta_i^{-2} y_ix_i.
\end{equation}
where $\Iinfty(x) = \setdef{i\in\until{n}}{|x_i|=\norm{x}{\infty}}$. One
can show the so-called Lumer equalities (generalizing
equation~\eqref{eq:Lumer:ell2}):
\begin{align}
  \mu_{1,[\eta]}(A) &=  \max_{\norm{x}{1,[\eta]}=1} \quad \sign(x)^\top [\eta] A  x,
  \label{eq:Lumer-1-appendix}
  \\
  \mu_{\infty,[\eta]^{-1}}(A)  &= \max_{\|x\|_{\infty,[\eta]^{-1}} = 1} \quad \max_{i \in I_{\infty}([\eta]^{-1}x)} ([\eta]^{-1}x)_i([\eta]^{-1}Ax)_i.
  \label{eq:Lumer-infty-appendix}
\end{align}

\paragraph{Lipschitz maps}
Given a norm $\norm{\cdot}{}$ with induced matrix norm $\norm{\cdot}{}$ and
induced matrix measure $\mu(\cdot)$, a map $\map{\OF}{\real^n}{\real^n}$ is
Lipschitz continuous with constant $\Lip(\OF)\in\realnonnegative$ if
\begin{equation}
  \norm{\OF(x_1)-\OF(x_2)}{}\leq \Lip(\OF) \norm{x_1-x_2}{} \qquad \text{for all } x_1,x_2\in\real^n.
\end{equation}
If the map $\OF$ is differentiable, then $\OF$ is Lipschitz continuous with
constant $\Lip(\OF)$ if and only if
\begin{equation}
   \norm{\jac{\OF}(x)}{} \leq \Lip(\OF) \qquad \text{for all } x\in\real^n.
\end{equation}

\paragraph{One-sided Lipschitz maps}
Given a norm $\norm{\cdot}{}$ with compatible WP
$\WP{\cdot}{\cdot}$ and associated matrix measure $\mu(\cdot)$, a
continuous map $\map{\OF}{\real^n}{\real^n}$ is one-sided Lipschitz
continuous with constant $\osL(\OF)\in\real$ if
\begin{equation} \label{eq:osL=WP-appendix}
  \WP{\OF(x_1)-\OF(x_2)}{x_1-x_2}  \leq \osL(\OF)
    \norm{x_1-x_2}{}^2   \qquad \text{for all } x_1,x_2\in\real^n.
\end{equation}
If the map ${\OF}$ is differentiable, then $\OF$ is one-sided Lipschitz
continuous with constant $\osL(\OF)\in\real$ if and only if
\begin{equation}\label{eq:osL=muD-appendix}
  \mu(\jac{\OF}(x)) \leq \osL(\OF)  \qquad \text{for all } x\in\real^n.
\end{equation}
In other words, when the map $\OF$ is differentiable, the two
definitions~\eqref{eq:osL=WP-appendix} and~\eqref{eq:osL=muD-appendix} are equivalent. 
Note that (i) the one-sided
Lipschitz constant is upper bounded by the Lipschitz constant, (ii) a
Lipschitz map is always one-sided Lipschitz, {\color{black} but the
converse is not necessarily true,} and (iii) the one-sided
Lipschitz constant may be negative. {\color{black}For instance,
  consider the scalar function $f(x)=-x-x^3$. It is easy to check that this function is not
  globally Lipschitz and $\Lip(f)=\infty$. However, $f$ is one-sided
  Lipschitz with $\osL(f)=-1$.}

In the following example, we compare the regions $\Lip(A)<1$ and $\osL(A)<1$ for a matrix $A\in
\real^{2\times 2}$ with respect to the $\ell_\infty$-norm.

\begin{example}
  Let $A= \begin{bmatrix}a & b\\ b& a\end{bmatrix}$, it is easy to see
  that condition $\Lip(A)<1$ for $\ell_{\infty}$-norm can be written
  as $\|A\|_{\infty}=|a|+|b|<1$. One can also define the average operator $A_{\alpha}$ using parameter $\alpha\in (0,1]$ as follows:
  \begin{align*}
    A_{\alpha} = (1-\alpha)I_2 + \alpha A.
  \end{align*}
 Figure~\ref{fig:regions} compares the regions
  $\Lip(A)<1$, $\Lip(A_{\alpha})<1$, and $\osL(A)<1$ based on the parameters $a$ and $b$. It can be shown that as $\alpha\to 0^+$,
  the condition $\Lip(A_{\alpha})<1$ converges to
  $\osL(A)<1$. 

      \begin{figure}[ht]\centering
        \includegraphics[width=.8\linewidth]{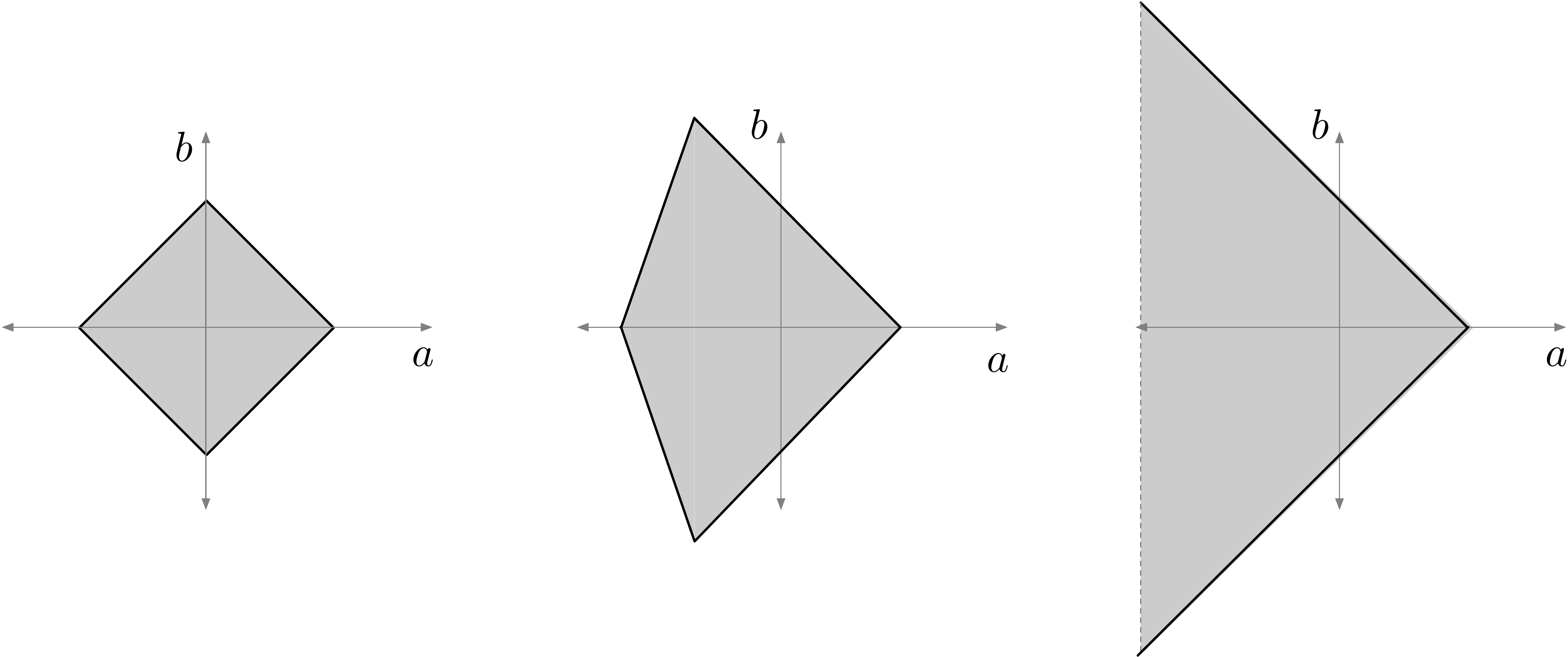}
        \caption{The left figure shows the region $\Lip(A)\le 1$,
          the middle figure shows the region $\Lip(A_{\alpha})\le 1$ for $\alpha =\tfrac{1}{2}$, and the right
          figure shows $\osL(A)\le 1$. Both $\Lip$ and $\osL$ are with
        respect to the $\ell_{\infty}$-norm}\label{fig:regions}
      \end{figure}
    \end{example}

\section{Novel results about non-Euclidean matrix measures}\label{app:matrixmeasure}

In this appendix we provide some results regarding the matrix measure and
matrix norm for weighted $\ell_1$ and $\ell_{\infty}$-norms.

\begin{lemma}[Non-Euclidean contraction estimates]\label{lem:norm-mu-nonEuc}
  Let $A=[a_{ij}]\in \real^{n\times n}$ and $\eta\in
  \real^n_{>0}$,
  \begin{enumerate}
    \item\label{p1:identity} For every $\alpha\in\real$ such
  that $|\alpha|\leq(\max_{i}|a_{ii}|)^{-1}$,
  \begin{align*}
    \|I_n + \alpha A\|_{1,[\eta]} &= 1+\alpha\mu_{1,[\eta]}(A), \\
    \|I_n + \alpha A\|_{\infty,[\eta]^{-1}} &= 1+\alpha\mu_{\infty,[\eta]^{-1}}(A).
  \end{align*}
  \item\label{p2:optimization} the minimizer and minimum value of
    $\min_{\alpha\geq 0} \|I_n + \alpha A\|_{\infty,[\eta]^{-1}}$ can be
    computed via the linear program:
    \begin{align*}
      \min_{\alpha,t} \qquad &t\\
      1&+\alpha(a_{ii}+r_i) \le t, \qquad i\in \{1,\ldots,n\},\\
      -1&+\alpha(-a_{ii}+r_i) \le t, \qquad i\in \{1,\ldots,n\},\\
      &\alpha \ge 0.
    \end{align*}
    where $r_i = \sum_{j\ne i} \frac{\eta_j}{\eta_i}|a_{ij}|$.
  \end{enumerate}
\end{lemma}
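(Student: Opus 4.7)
The plan is to prove both parts by direct computation from the closed-form expressions for the weighted $\ell_1$ and $\ell_\infty$ induced norms, which are already recorded earlier in this appendix. All ingredients are elementary; the only delicate point is the treatment of the absolute value on the diagonal entries $|1+\alpha a_{ii}|$, and parts (i) and (ii) differ only in how this term is handled.

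For part~(i), I would substitute $M=I_n+\alpha A$ into the formula $\|M\|_{\infty,[\eta]^{-1}}=\max_i\sum_j(\eta_j/\eta_i)|M_{ij}|$. The off-diagonal entries contribute $|\alpha|\sum_{j\neq i}(\eta_j/\eta_i)|a_{ij}|$, while the diagonal contributes $|1+\alpha a_{ii}|$. The key step is to use the hypothesis $|\alpha|\leq(\max_k|a_{kk}|)^{-1}$ to conclude $|\alpha a_{ii}|\leq 1$, hence $1+\alpha a_{ii}\geq 0$, so the absolute value collapses to $1+\alpha a_{ii}$. Assembling row $i$ gives
\begin{equation*}
\sum_j\tfrac{\eta_j}{\eta_i}|M_{ij}| \;=\; 1+\alpha\Bigl(a_{ii}+\sum_{j\neq i}\tfrac{\eta_j}{\eta_i}|a_{ij}|\Bigr),
\end{equation*}
and taking the max over $i$ recovers $1+\alpha\mu_{\infty,[\eta]^{-1}}(A)$ via the identity for $\mu_{\infty,[\eta]^{-1}}$ in equation~\eqref{eq:inftyMatrixMeasure-appendix}. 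The weighted $\ell_1$ identity follows by the identical argument on column sums, using the companion formula for $\mu_{1,[\eta]}$.

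For part~(ii), I would drop the hypothesis on $|\alpha|$ and preserve $|1+\alpha a_{ii}|$ as the epigraphical expression $\max\{1+\alpha a_{ii},\,-1-\alpha a_{ii}\}$. Writing $r_i=\sum_{j\neq i}(\eta_j/\eta_i)|a_{ij}|$ and restricting to $\alpha\geq 0$ so that the off-diagonal contribution is $\alpha r_i$, the induced norm becomes
\begin{equation*}
\|I_n+\alpha A\|_{\infty,[\eta]^{-1}}
=\max_{i\in\until{n}}\max\bigl\{1+\alpha(a_{ii}+r_i),\;-1+\alpha(-a_{ii}+r_i)\bigr\},
\end{equation*}
a pointwise maximum of $2n$ affine functions of $\alpha$, hence convex and piecewise-linear. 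Minimizing such a function subject to $\alpha\geq 0$ is the textbook epigraph reformulation: introduce a scalar $t$, impose $t\geq 1+\alpha(a_{ii}+r_i)$ and $t\geq -1+\alpha(-a_{ii}+r_i)$ for each $i\in\until{n}$, and minimize $t$. This is precisely the linear program displayed in the statement, and any optimal $(\alpha^*,t^*)$ delivers both the minimizing step and the minimum value.

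The only genuine obstacle is the handling of the diagonal absolute value. In part~(i) the restriction on $|\alpha|$ trivializes it to a linear expression; in part~(ii) we instead encode it through the pair of linear inequalities corresponding to its epigraph. Once this is done, everything else is the direct expansion of the induced matrix norm together with the definition of the matrix measure.
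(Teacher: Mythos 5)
Your proposal is correct and follows essentially the same route as the paper's proof: expand the row sums of $I_n+\alpha A$ in the weighted $\ell_\infty$ norm, use $|\alpha|\,|a_{ii}|\le 1$ to drop the absolute value on the diagonal term for part~(i), and for part~(ii) keep $|1+\alpha a_{ii}|$ as the maximum of two affine functions and pass to the standard epigraph reformulation. The only (shared) imprecision is that writing the off-diagonal contribution as $\alpha r_i$ tacitly assumes $\alpha\ge 0$, exactly as the paper's own proof does.
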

\begin{proof}
  Regarding part~\ref{p1:identity}, we compute
  \begin{align}\label{eq:formula}
    \|I_n + \alpha A\|_{\infty,[\eta]^{-1}} =
    \max_{i\in\until{n}}
    \big\{|1+\alpha a_{ii}| +
    \alpha\sum_{j=1,j\ne i}^n \frac{\eta_j}{\eta_i}|a_{ij}| \big\}.
  \end{align}
  Since $|\alpha|\leq(\max_i|a_{ii}|)^{-1}$, we know $|\alpha|
    |a_{ii}|\leq1$ for all $i\in\until{n}$. Therefore $1+\alpha a_{ii}\geq0$
    and $|1+\alpha a_{ii}| = 1+\alpha a_{ii}$, for every
    $i\in\until{n}$. In summary, replacing in~\eqref{eq:formula},
    \begin{align*}
      \|I_n + \alpha A\|_{\infty,[\eta]^{-1}} =\max_{i\in\until{n}}
      \big\{1+\alpha a_{ii}+   \alpha\sum_{j=1,j\ne i}^n
      \frac{\eta_j}{\eta_i}|a_{ij}| \big\} = 1 + \alpha \mu_{\infty,[\eta]^{-1}}(A).
  \end{align*}
  The proof of the formula relating the weighted $\ell_1$-norm and the
  weighted $\ell_1$-matrix measure will follow mutatis mutandis to the
  above proof for $\ell_\infty$-norm and we omit it in the
  interest of brevity.

  Regarding part~\ref{p2:optimization}, using formula~\eqref{eq:formula}, we get
  \begin{align*}
    \|I_n + \alpha A\|_{\infty,[\eta]^{-1}} &=
    \max_{i\in\until{n}}
    \big\{|1+\alpha a_{ii}| +
    \alpha r_i\big\} \\ & = \max_{i\in\until{n}}\big\{1+\alpha a_{ii} +
    \alpha r_i, -1-\alpha a_{ii}+
    \alpha r_i\big\}.
  \end{align*}
  The result then follows.
\end{proof}

The following results are related to~\citep[Theorem~3.8]{YF-TGK:96}
and~\citep[Lemma~3]{WH-JC:09} and, indirectly, to~\citep{HQ-JP-ZBX:01}. In
comparison with~\citep{YF-TGK:96,WH-JC:09}, we prove sharper bounds for a
more general setting. 

\begin{lemma}[Matrix measure inequalities under multiplicative scalings]
  \label{lemma:affine-scaling}
  For each $A\in\real^{n\times{n}}$, $C\in\real^{n\times{n}}$ diagonal
  positive, and $\eta\in\realpositive^{n}$,
  \begin{enumerate}
  \item\label{fact:ms:1} $\ds \max_{d\in[0,1]^n} \mu_{\infty,[\eta]} (-C+[d]A)=
    \max\big\{ \mu_{\infty,[\eta]} (-C), \mu_{\infty,[\eta]} (-C+A)\big\}$, and
  \item\label{fact:ms:2} $\ds \max_{d\in[0,1]^n} \mu_{1,[\eta]} (-C+A[d])=
    \max\big\{ \mu_{1,[\eta]} (-C), \mu_{1,[\eta]} (-C+A)\big\}$.
  \end{enumerate}
\end{lemma}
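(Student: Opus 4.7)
Both statements follow from the explicit row/column formulas for the weighted $\ell_\infty$ and $\ell_1$ matrix measures (equation~\eqref{eq:inftyMatrixMeasure-appendix} and its $\ell_1$ analogue). The key structural observation is that in $-C+[d]A$ the $i$-th row equals $d_i$ times the $i$-th row of $A$ with the diagonal shifted by $-c_i$, so the $i$-th summand of the $\mu_\infty$ formula depends only on the single scalar $d_i$; symmetrically, in $-C+A[d]$ the $j$-th column is $d_j$ times the $j$-th column of $A$ shifted by $-c_j$ on the diagonal, which is precisely the data that enters $\mu_{1,[\eta]}$. Once this separability is identified, the box maximization over $[0,1]^n$ decouples into $n$ independent affine optimizations on $[0,1]$.

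For part~\ref{fact:ms:1}, substituting into the $\mu_\infty$ formula gives
\begin{align*}
\mu_{\infty,[\eta]^{-1}}(-C+[d]A)
=\max_{i\in\until{n}}\big(-c_i + d_i\beta_i\big),\qquad
\beta_i:=a_{ii}+\sum_{j\neq i}|a_{ij}|\frac{\eta_j}{\eta_i}.
\end{align*}
Since the $i$-th term is affine in $d_i$ alone, the outer and inner maxima commute, and each one-dimensional affine maximum over $[0,1]$ is attained at an endpoint:
\begin{align*}
\max_{d\in[0,1]^n}\mu_{\infty,[\eta]^{-1}}(-C+[d]A)
&=\max_i\max_{d_i\in[0,1]}(-c_i+d_i\beta_i)
=\max_i\max\{-c_i,\,-c_i+\beta_i\}\\
&=\max\Big\{\max_i(-c_i),\,\max_i(-c_i+\beta_i)\Big\}.
\end{align*}
The first inner maximum equals $\mu_{\infty,[\eta]^{-1}}(-C)$, since $-C$ is diagonal and the off-diagonal terms $\beta_i - a_{ii}$ vanish; the second equals $\mu_{\infty,[\eta]^{-1}}(-C+A)$ by the same formula, proving the claim.

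Part~\ref{fact:ms:2} is mirror-symmetric. The $\ell_1$-matrix measure is a maximum over columns, and in $-C+A[d]$ each column $j$ is proportional to $d_j$, so
\begin{align*}
\mu_{1,[\eta]}(-C+A[d]) = \max_{j\in\until{n}}\Big(-c_j + d_j\Big(a_{jj}+\sum_{i\neq j}|a_{ij}|\frac{\eta_i}{\eta_j}\Big)\Big),
\end{align*}
and the same "affine on a box, maximum at a vertex" argument yields the desired identity.

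\paragraph*{Expected difficulty.} There is no substantive obstacle: the proof is bookkeeping once the separability is noticed. The only subtlety is tracking which of $[d]A$ and $A[d]$ couples to \emph{rows} and which to \emph{columns}, so that the coordinate of $d$ used in the decoupling matches the indexing direction of the relevant matrix measure ($\mu_\infty$ is row-wise, $\mu_1$ is column-wise). This matching is precisely why part~\ref{fact:ms:1} uses $[d]A$ while part~\ref{fact:ms:2} uses $A[d]$.
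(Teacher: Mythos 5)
Your proof is correct and follows essentially the same route as the paper's: both rest on the observation that the row (resp.\ column) formula for the weighted $\ell_\infty$ (resp.\ $\ell_1$) measure makes the $i$-th term an affine function of $d_i$ alone, so the box maximization decouples and is attained at the endpoints $d_i\in\{0,1\}$ — the paper merely phrases the endpoint argument as a two-sided bound (a sign case analysis on $r_i$ for the upper bound, evaluation at $d=\vect{0}_n$ and $d=\vect{1}_n$ for the lower bound), and obtains part~\ref{fact:ms:2} by transposition via $\mu_{1,[\eta]}(B)=\mu_{\infty,[\eta]}(B^\top)$ rather than redoing the column computation as you do.
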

\begin{proof}
  Define the short-hand $r_i=a_{ii}+\sum_{j=1,j\not=i}^n |a_{ij}|
  \eta_i/\eta_j$ and note
  \begin{gather*}
    \mu_{\infty,[\eta]} (-C)  = \max_{i\in\until{n}} \{-c_i\},\quad
    \mu_{\infty,[\eta]} (-C+A)  = \max_{i\in\until{n}} \{-c+r_i\},
    \quad\text{and}\\
    \mu_{\infty,[\eta]} (-C + [d]A) = \max_{i\in\until{n}} \{ -c_i + d_ir_i \}.
  \end{gather*}
  Since $0\leq d_i\leq1$, we note
  \begin{align*}
    r_i&\leq 0 \quad\implies\quad d_i r_i  \leq0 \quad\implies\quad
    -c_i + d_i r_i \leq -c_i, \\
    r_i&> 0 \quad\implies\quad
    d_i r_i  \geq0 \quad\implies\quad
    -c_i+d_i r_i    \leq    -c_i+r_i.
  \end{align*}
  Therefore
  \begin{align*}
    &\max_{d\in[0,1]^n} \max_{i \,:\, r_i \leq 0} \{-c_i + d_i r_i\} =
    \max_{i \,:\, r_i \leq 0} \max_{d_i\in[0,1]} \{-c_i + d_i r_i\}
    = \max_{i \,:\, r_i \leq 0} \{-c_i\} \leq \mu_{\infty,[\eta]} (-C),
    \\
    &\max_{d\in[0,1]^n} \max_{i \,:\, r_i > 0} \{-c_i + d_i r_i\} =
    \max_{i \,:\, r_i > 0} \max_{d_i\in[0,1]} \{-c_i + d_i r_i\}
    = \max_{i \,:\, r_i \leq 0} \{-c_i+r_i\} \leq \mu_{\infty,[\eta]} (-C+A).
  \end{align*}
  In summary
  \begin{align*}
    \max_{d\in[0,1]^n} \mu_{\infty,[\eta]} (-C+[d]A) &=
    \max_{d\in[0,1]^n} \max_{i\in\until{n}} \{ -c_i + d_ir_i \}  \\
    &= \max_{d\in[0,1]^n} \max\Big\{
     \max_{i \,:\, r_i \leq 0} \{-c_i + d_i r_i\},
     \max_{i \,:\, r_i > 0} \{-c_i + d_i r_i\}
    \Big\} \\
    &\leq
    \max\big\{ \mu_{\infty,[\eta]} (-C), \mu_{\infty,[\eta]} (-C+A)\big\}.
  \end{align*}
  On the other hand, we note that
  \begin{align*}
    \max_{d\in[0,1]^n} \mu_{\infty,[\eta]} ([d]A-C)
    &\geq
    \max \big\{ \mu_{\infty,[\eta]} ([\vect{0}_n]A-C) , \mu_{\infty,[\eta]} ([\vect{1}_n]A-C) \big\}
    \\
    &=    \max\big\{ \mu_{\infty,[\eta]} (-C), \mu_{\infty,[\eta]} (-C+A)\big\},
  \end{align*}
  thereby proving the equality in statement~\ref{fact:ms:1}.  Next, recall
  $\mu_{1,[\eta]}(B)=\mu_{\infty,[\eta]}(B^\top)$ for all $B$ and compute
  \begin{align*}
    \max_{d\in[0,1]^n} \mu_{1,[\eta]} (-C+A[d])
    &= \max_{d\in[0,1]^n} \mu_{\infty,[\eta]} (-C+[d] A^\top) \\
    &=
    \max\big\{ \mu_{\infty,[\eta]} (-C), \mu_{\infty,[\eta]} (-C+A^\top)\big\} \\
    &=   \max\big\{ \mu_{1,[\eta]} (-C), \mu_{\infty,[\eta]} (-C+A)\big\}.
  \end{align*}
  This concludes the proof of statement~\ref{fact:ms:2}.
\end{proof}

In the same style as~\citep[Proposition~1]{EW-JZK:20} and~\citep[Theorems~1
    and~2]{MR-RW-IRM:20}, the next lemma provides a parametrization of all
  matrices satisfying a $\mu_\infty$ constraint.

\begin{lemma}[Parametrization of matrices with bounded $\ell_\infty$ measure]
  \label{lemma:true-param}
  For any $\gamma\in\real$,
  \begin{enumerate}
  \item\label{true-param:1} given any $A\in\real^{n\times{n}}$ with
    $\mu_{\infty}(A)\leq\gamma$, there exists a $T\in\real^{n\times{n}}$
    such that $A=T-\diag(|T|\vect{1}_n)+\gamma{I_n}$,

  \item\label{true-param:2} given any $T\in\real^{n\times{n}}$, the matrix
    $A=T-\diag(|T|\vect{1}_n)+\gamma{I_n}\in\real^{n\times{n}}$ satisfies
    $\mu_{\infty}(A)\leq\gamma$,
  \end{enumerate}
  where we let $|T|$ denote the entry-wise absolute value of $T$.
\end{lemma}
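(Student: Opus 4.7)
The plan is to handle the two implications separately, and to reduce everything to a row-by-row analysis, since the $\ell_\infty$ matrix measure decomposes rowwise via
\[
  \mu_\infty(A) \;=\; \max_{i\in\until{n}} \Big( a_{ii} + \sum_{j\neq i} |a_{ij}| \Big).
\]
The key identity I would first derive, for any $T\in\real^{n\times n}$ and $A := T - \diag(|T|\vect{1}_n) + \gamma I_n$, is that $a_{ij} = t_{ij}$ for $i\neq j$, while $a_{ii} = t_{ii} - \sum_{j} |t_{ij}| + \gamma$. Both halves of the lemma then hinge on the scalar identity
\[
  a_{ii} + \sum_{j\neq i} |a_{ij}| \;=\; t_{ii} - |t_{ii}| + \gamma,
\]
obtained by cancelling $\sum_{j\neq i}|t_{ij}|$ against $\sum_{j\neq i}|a_{ij}|$.

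For part~\ref{true-param:2}, I would simply note that $t_{ii} - |t_{ii}| \leq 0$ for every real $t_{ii}$, so the identity above gives $a_{ii} + \sum_{j\neq i} |a_{ij}| \leq \gamma$ for each $i$. Taking the maximum over $i$ yields $\mu_\infty(A) \leq \gamma$.

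For part~\ref{true-param:1}, I would construct $T$ by inverting the identity. Set $t_{ij} := a_{ij}$ for $i\neq j$, and determine $t_{ii}$ from the scalar equation
\[
  t_{ii} - |t_{ii}| \;=\; c_i, \qquad c_i := a_{ii} - \gamma + \sum_{j\neq i} |a_{ij}|.
\]
The hypothesis $\mu_\infty(A) \leq \gamma$ gives exactly $c_i \leq 0$ for all $i$. Now the map $s \mapsto s - |s|$ equals $0$ on $[0,\infty)$ and $2s$ on $(-\infty,0)$, so its image is $(-\infty,0]$; hence for each $c_i \leq 0$ the choice $t_{ii} := c_i/2 \leq 0$ solves the scalar equation. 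Assembling the $t_{ij}$'s yields a $T$ with $A = T - \diag(|T|\vect{1}_n) + \gamma I_n$, as required.

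The only real subtlety, and thus the only potential obstacle, is ensuring that the scalar equation $t_{ii}-|t_{ii}| = c_i$ is solvable; this solvability is equivalent to $c_i \leq 0$, which is exactly the content of the $\mu_\infty$ bound. Everything else is direct bookkeeping from the definition of the matrix measure.
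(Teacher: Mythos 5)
Your proposal is correct and follows essentially the same route as the paper's proof: both reduce to the rowwise identity $a_{ii}+\sum_{j\neq i}|a_{ij}| = t_{ii}-|t_{ii}|+\gamma$, both prove part (ii) from $t_{ii}-|t_{ii}|\leq 0$, and your choice $t_{ii}=c_i/2$ for part (i) is exactly the paper's $t_{ii}=\tfrac{1}{2}\big(a_{ii}+\sum_{j\neq i}|a_{ij}|-\gamma\big)$. The only cosmetic difference is that you phrase the inversion step as solvability of the scalar equation $s-|s|=c_i$, which makes the role of the hypothesis $c_i\leq 0$ slightly more transparent.
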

\begin{proof}
  Regarding statement~\ref{true-param:1}, define
  \begin{align*}
    t_{ij} & = a_{ij} \quad && \text{for all } i\neq j \in\until{n},\\
    t_{ii} &= \frac{1}{2}\big( a_{ii}+\sum_{j=1,j\neq{i}}^n |a_{ij}| - \gamma\big) , \quad && \text{for } i\in\until{n}.
  \end{align*}
  Because $\mu_{\infty}(A)\leq\gamma$, we know $a_{ii}+
  \sum_{j=1,j\neq{i}}^n |a_{ij}|\leq \gamma$ for each $i$. This implies
  that $t_{ii}\leq0$ and therefore $t_{ii}-|t_{ii}|=
  a_{ii}+\sum_{j=1,j\neq{i}}^n |a_{ij}| - \gamma$. It is an easy
  transcription now to show that this equality and the off-diagonal
  equality $t_{ij}=a_{ij}$ together imply
  $A=T-\diag(|T|\vect{1}_n)+\gamma{I_n}$.

  Regarding statement~\ref{true-param:2}, note that $a_{ij}=t_{ij}$ for all
  $j\neq{i}$, and $a_{ii}=t_{ii}-\sum_{j=1}^n|t_{ij}|+\gamma$.  Then, for
  all $i$,
  \begin{align*}
    a_{ii} + \sum_{j=1,j\neq{i}}^n|a_{ij}|
    &= \big( t_{ii}-\sum_{j=1}^n|t_{ij}|+\gamma \big)
    + \sum_{j=1,j\neq{i}}^n|t_{ij}| \\
    &= t_{ii} - |t_{ii}| + \gamma = \begin{cases} \gamma, &\text{ if } t_{ii}\geq0, \\
      -2|t_{ii}|+\gamma, &\text{ if }t_{ii}<0.
    \end{cases}
  \end{align*}
  Therefore, $a_{ii} + \sum_{j=1,j\neq{i}}^n|a_{ij}| \leq \gamma$ for all
  $i$ and, in turn, $\mu_{\infty}(A)\leq\gamma$.
\end{proof}

We conclude with a simple graph-theoretical interpretation of the main
well-posedness condition $\mu_\infty(A)<1$. Loosely speaking, we call
$-a_{ii}$ the self-attenuation of neuron $i$ and
$\sum_{j=1,j\neq{i}}^n|a_{ij}|$ the strength of its outgoing synapses.
Then
\begin{multline}
  \mu_\infty(A)<1 \quad\iff\quad a_{ii}+\sum_{j=1,j\neq{i}}^n|a_{ij}|<1
  \quad \text{for all $i$} \\ \quad\iff\quad \text{ for each neuron,
    strength of outgoing synapses $<$ $1+$ self-attenuation}.
\end{multline}
  
\section{Proofs and additional results on non-differentiable
  activation functions}\label{app:proof}

\subsection{Proofs of Theorems~\ref{thm:fixedpoint-osL} and~\ref{thm:acceleration-1-inf}}

\begin{proof}[Proof of Theorem~\ref{thm:fixedpoint-osL}]
  Regarding~\ref{p1:banach-fixed-suff}$\implies$\ref{p2:mu}, note that, for
  every $x\in \real^n$ and every $0<\alpha\le \alpha^*$,
  \begin{align*}
    \mu(\jac{\OF}_{\alpha}(x))\le  \|\jac{\OF}_{\alpha}(x))\| \le \gamma_{\ell,c}(\alpha).
  \end{align*}
  As a result, $\alpha \mu(\jac{\OF}(x)) = \mu(\jac{\OF}_{\alpha}(x)) -
  1+\alpha \le -1 + \alpha +\gamma_{\ell,c}(\alpha)$. Thus,
\begin{align*}
\mu(\jac{\OF}(x)) \le 1-\frac{1-\gamma_{\ell,c}(\alpha)}{\alpha},\qquad\mbox{for
  all } x\in \real^n.
\end{align*}
By choosing $\alpha =\widehat{\alpha}=
\frac{2c}{(2c+\ell+1)(\ell+1)}< \frac{c}{(c+\ell+1)(\ell+1)}$, we get
\begin{align*}
\mu(\jac{\OF}(x)) \le
  1-\frac{1-\gamma_{\ell,c}(\widehat{\alpha})}{\widehat{\alpha}} = 1-
  \frac{1-(1-\widehat{\alpha} c)}{\widehat{\alpha}} = 1-c,\qquad\mbox{for
  all } x\in \real^n.
\end{align*}
Thus, $\sup_{x\in \real^n} \mu(\jac{\OF}(x))\le 1-c$. This implies
that $\osL(\OF)\le 1-c$. 

Regarding \ref{p2:mu}$\implies$\ref{p1:banach-fixed-suff}, using the mean value theorem for vector
valued functions, we compute
\begin{align*}
  \|\OF_{\alpha}(x)-\OF_{\alpha}(y)\|= \Big\|\int_{0}^{1}\jac{\OF}_{\alpha}(tx+(1-t)y)dt (x-y)\Big\|
  \le \|\overline{\jac{\OF}}_{\alpha}(x,y)\| \|x-y\|,
\end{align*}
where $\overline{\jac{\OF}}_{\alpha}(x,y)
=\int_{0}^{1}\jac{\OF}_{\alpha}(tx+(1-t)y)dt$, for every $x,y\in
\real^n$.

Next, to obtain an upper bound on $\|\overline{\jac{\OF}}_{\alpha}(x,y)\|$,
we first derive a lower bound on
$\|\overline{\jac{\OF}}^{-1}_{\alpha}(x,y)\|$. We start by noting that, the
product property~\eqref{measure:product} implies $\|Av\|\ge -\mu(-A)\|v\|$,
for every $v\in \real^n$ and every $A\in \real^{n\times n}$. Therefore, for
every $v\in \real^n$,
\begin{align}\label{eq:useful-lowerbound}
  \|\overline{\jac{\OF}}^{-1}_{\alpha}(x,y) v\| \ge -\mu(-\overline{\jac{\OF}}^{-1}_{\alpha}(x,y))\|v\|.
\end{align}
Since $\overline{\jac{\OF}}_{\alpha}(x,y) = I_n + \alpha(-I_n +
\overline{\jac{\OF}}(x,y))$ and $\alpha<\frac{c}{(c+\ell+1)(\ell+1)}\le \frac{1}{\ell+1}$, we can use the Neumann series to get 
\begin{align}\label{eq:neuman_series}
  \overline{\jac{\OF}}^{-1}_{\alpha}(x,y) = \sum_{i=0}^{\infty}(-1)^{i}\alpha^i(-I_n + \overline{\jac{\OF}}(x,y))^{i}.
\end{align}
We first compute an upper bound for $\mu(\overline{\jac{\OF}}(x))$. Since
$\osL(\OF)\le 1-c$, by the subadditive property~\eqref{measure:subadd} of
the matrix measures, we get
\begin{align}
  \mu(-I_n + \overline{\jac{\OF}}(x,y)) &=
  \mu\left(\int_{0}^{1}(-I_n+\jac{\OF}(tx+(1-t)y))dt\right) \nonumber\\ & \le
  \int_{0}^{1}\mu\big(-I_n + \jac{\OF}(tx+(1-t)y)\big)dt \le - c. \label{eq:mu-average-upper}
\end{align}
Now, we use equation~\eqref{eq:neuman_series} to obtain
\begin{align}
  \|\overline{\jac{\OF}}^{-1}_{\alpha}(x,y) v\| &\ge -\mu\Big(\sum_{i=0}^{\infty}(-1)^{i+1}\alpha^i(-I_n+\overline{\jac{\OF}}(x,y))^{i}\Big) \|v\|
  \nonumber  \\ & \ge -\Big(\mu(-I_n)
  + \alpha \mu(-I_n+\overline{\jac{\OF}}(x,y))  \nonumber \\
  &\quad +
  \sum_{i=2}^{\infty} \alpha^i
  \mu\big((-1)^{i+1}(-I_n+\overline{\jac{\OF}}(x,y))^i\big)\Big)\|v\| \nonumber \\
  & \ge
  (1+\alpha c - \sum_{i=2}^{\infty} (\alpha(\ell+1))^i)\|v\| =
  \Big(1+\alpha c - \frac{\alpha^2(\ell+1)^2}{1-\alpha (\ell+1)}\Big)\|v\|, \label{eq:tttt}
\end{align}
where the first inequality holds by~\eqref{eq:useful-lowerbound}, the
second inequality holds by subadditive property of the matrix
measures~\eqref{measure:subadd}, and the third inequality holds because,
using \eqref{eq:mu-average-upper} and~\eqref{measure:munegative-Hurwitz},
we obtain the upper bound:
\begin{align*}
\mu\big((-1)^{i+1}(-I_n+\overline{\jac{\OF}}(x,y))^i\big) \le
  \|(-I_n+\overline{\jac{\OF}}(x,y))^i\| \le (1+\ell)^{i}, \qquad\mbox{
  for all } i\in \mathbb{Z}_{\ge 0}. 
\end{align*}
Note that $\alpha\in {]0,\frac{c}{(c+\ell+1)(\ell+1)}[}$.
Equation~\eqref{eq:tttt} implies that, for each $w\in \real^n$ and
$v=\overline{\jac{\OF}}_{\alpha}(x,y)w$,
\begin{align*}
  \frac{\|\overline{\jac{\OF}_{\alpha}}(x,y)w\|}{\|w\|} =
  \frac{\|v\|}{\|\overline{\jac{\OF}}^{-1}_{\alpha}(x,y) v\|}
  \le \gamma_{\ell,c}(\alpha) .
\end{align*}
As a result, $\|\overline{\jac{\OF}}_{\alpha}(x,y)\|\le
\gamma_{\ell,c}(\alpha)$ and
\begin{align*}
  \|\OF_{\alpha}(x)-\OF_{\alpha}(y)\|
  \le \gamma_{\ell,c}(\alpha) \|x-y\|, \qquad\mbox{ for all }x,y\in \real^n.
\end{align*}
Regarding parts~\ref{p3:unique-eq} and~\ref{p4:iteration-converge}, a
straightforward calculation shows that, if $0<\alpha <
\frac{c}{(c+\ell+1)(\ell+1)}$, then $1/\Big(1+\alpha c -
\frac{\alpha^2(\ell+1)^2}{1-\alpha (\ell+1)}\Big)<1$. The result then
follows from the Banach fixed-point theorem. Regarding
part~\ref{p5:optimalrate}, we define the function
$\xi:{]0,\frac{c}{(c+\ell+1)(\ell+1)}[}\to \real_{>0}$ by
  $\xi(\alpha)=1+\alpha c -
  \frac{\alpha^2(\ell+1)^2}{1-\alpha(\ell+1)}$. Then it is clear that
  $\xi(\alpha) = 1/\gamma_{\ell,c}(\alpha)$. Note that
  \begin{align*}
    \frac{d\xi}{d\alpha}&= (c+\ell+1)-\frac{\ell+1}{(1-\alpha(\ell+1))^2},\\
    \frac{d^2\xi}{d\alpha^2}&= -\frac{2(\ell+1)^2}{(1-\alpha(\ell+1))^3}.
  \end{align*}
  Since $\frac{d^2\xi}{d\alpha^2}\le 0$, we conclude that $\xi$
  is a concave function on
  ${]0,\frac{c}{(c+\ell+1)(\ell+1)}[}$ and its maximum is
    achieved at $\alpha^*$ for which
    $\frac{d\xi}{d\alpha}(\alpha^*)=0$. By a straightforward
    calculation, we get
    \begin{align*}
      \alpha^* = \frac{\kappa}{c}\left(1-\frac{1}{\sqrt{1+1/\kappa}}\right)
    \end{align*}
    and it is easy to see that the optimal value is as claimed in the theorem statement.
  \end{proof}

\begin{proof}[Proof of Theorem~\ref{thm:acceleration-1-inf}]
  We restrict ourselves to the norm $\|\cdot\|_{\infty,[\eta]^{-1}}$; the
  proof for $\|\cdot\|_{1,[\eta]}$ is similar and omitted in the interest
  of brevity.

  Regarding part~\ref{p4:1-inf:iteration-converge}, first we note that
  $\diagL(\OF)\leq\osL(\OF)<1$, since for every $i\in \{1\ldots,n\}$ and
  every $x\in\real^n$
  \begin{align}
    \label{eq:prop-use}
    \jac{\OF}_{ii}(x) \le \jac{\OF}_{ii}(x) + \sum_{j\ne i}
    |\jac{\OF}_{ij}(x)|\tfrac{\eta_i}{\eta_j}
    = \mu_{\infty,[\eta]^{-1}}(\jac{\OF}(x))\leq \osL(\OF)<1.
  \end{align}
  This implies that $\frac{1}{1-\diagL(\OF)}>0$ and $(1-\osL(\OF))/(1-\diagL(\OF))\leq1$.
  Moreover, for every $x\in \real^n$,
  \begin{align*}
    \|(1-\alpha)I_n + \alpha \jac{\OF}(x)\|_{\infty,[\eta]^{-1}} = \|I_n+\alpha(-I_n+\jac{\OF}(x))\|_{\infty,[\eta]^{-1}}.
  \end{align*}
    Next, we study the diagonal entries of $-I_n+\jac{\OF}(x)$.  By the
    definition of $\diagL(\OF)$ and by equation~\eqref{eq:prop-use},
    \begin{align*}
      & -1+\diagL(\OF) \leq -1+\jac{\OF}_{ii}(x) < 0  && \text{(for every $i\in\until{n}$ and $x$)} \\
      & \implies \quad |1-\diagL(\OF)| \geq | -1+\jac{\OF}_{ii}(x)| \\
      & \implies \quad 1-\diagL(\OF) \geq \max\nolimits_i | -1+\jac{\OF}_{ii}(x)| \\
      & \implies \quad \frac{1}{1-\diagL(\OF)} \leq \frac{1}{\max_i | -1+\jac{\OF}_{ii}(x)|}.
    \end{align*}
    Therefore, $\alpha\leq\frac{1}{\max_{i}|-1+\jac{\OF}_{ii}(x)|}$ and we
    can use Lemma~\ref{lem:norm-mu-nonEuc}\ref{p1:identity} to deduce that
    \begin{align*}
      \|(1-\alpha)I_n + \alpha \jac{\OF}(x)\|_{\infty,[\eta]^{-1}}
      &=   1+\alpha \mu_{\infty,[\eta]^{-1}}(-I_n+\jac{\OF}(x)) \\
      &=   1+\alpha (-1 + \mu_{\infty,[\eta]^{-1}}(\jac{\OF}(x)))
      \qquad\qquad\mbox{ for all } x\in \real^n\\
      &\leq   1+\alpha (-1 + \osL(\OF))  =  1 -\alpha(1-\osL(\OF))<1.
    \end{align*}
    where the second equality follows from the translation
    property~\eqref{measure:translation} of matrix measures, and the
    inequality holds because $\mu_{\infty,[\eta]^{-1}}(\jac{\OF}(x))\leq
    \osL(\OF)$ for all $x$, and the last inequality holds because
    $\osL(\OF)<1$. This means that $\Lip(\OF_{\alpha})<1$, for every
    $0<\alpha\le \frac{1}{1-\diagL(\OF)}$ and the result follows from the
    Banach fixed-point theorem.

    Regarding part~\ref{p5:1-inf-optimal}, we note the contraction factor
    is a strictly decreasing function of $\alpha$. At $\alpha=0$ the factor
    is $1$ and at the maximum of value of $\alpha$ that is, at
    $\alpha^*=(1-\diagL(\OF))^{-1}$ the contraction factor is still
    positive since $(1-\osL(\OF))/(1-\diagL(\OF))\leq1$. Hence the minimum
    contraction factor is achieved at~$\alpha^*$.
  \end{proof}

  \subsection{Proof of Theorem~\ref{thm:perturbed-fixed} and
    comparison with the literature}

  Before we prove Theorem~\ref{thm:perturbed-fixed}, it is useful 
  to compare it with similar results in the literature. The result in~\cite[Lemma 1]{TCL:85} is more general than
  Theorem~\ref{thm:perturbed-fixed} by allowing $\OF$ to be a multi-valued
  map defined on a metric space. However,
  Theorem~\ref{thm:perturbed-fixed}\ref{p2:lim} uses the one-side Lipschitz
  constant and provides a tighter upper bound on the distance between
  fixed-points of $\OF$ compared to its counterpart in~\cite[Lemma
  1]{TCL:85}.

  \begin{proof}[Proof of Theorem~\ref{thm:perturbed-fixed}]
    Let $\WP{\cdot}{\cdot}$ be a WP for the norm
    $\norm{\cdot}{\mcX}$ on $\real^n$.
    
    Regarding part~\ref{p1:unique-gu}, for every $u\in \real^m$, we
    note that by definition of $\osL_x(\OF)$, for every
    $u\in \real^r$,
      \begin{align*}
       \WP{\OF(x,u)-\OF(y,u)}{x-y}\le \osL_x(\OF)\|x-y\|_{\mcX}^2,
      \end{align*}
      This implies that $\osL(\OF_u) \le \osL_x(\OF)< 1$, for every $u\in
      \real^r$. Thus, by
      Theorem~\ref{thm:fixedpoint-osL}\ref{p3:unique-eq}, $\OF_u$ has a unique
      fixed-point $x^*_u$.

    Regarding part~\ref{p2:lim}, let $\WP{\cdot}{\cdot}$ be a WP for
    the norm $\norm{\cdot}{\mcX}$ on $\real^n$ and compute
    \begin{align*}
      \norm{x^*_u-x^*_v}{\mcX}^2 &= \WP{x^*_u-x^*_v}{x^*_u-x^*_v}  && \text{(by compatibility)}\\
      & =   \WP{\OF_u(x^*_u)-\OF_v(x^*_v)}{x^*_u-x^*_v} \\
      & \le
      \WP{\OF_u(x^*_u)-\OF_u(x^*_v)}{x^*_u-x^*_v} +
      \WP{\OF_u(x^*_v)-\OF_v(x^*_v)}{x^*_u-x^*_v}  && \text{(by sub-additivity)} \\
      &\le
      \osL_x(\OF)\norm{x^*_u-x^*_v}{\mcX}^2 +
      \norm{\OF_u(x^*_v)-\OF_v(x^*_v)}{\mcX} \norm{x^*_u-x^*_v}{\mcX}  &&\text{(by
                                                        Cauchy-Schwarz)}\\
      & \le \osL_x(\OF) \norm{x^*_u-x^*_v}{\mcX}^2 +
      \Lip_u(\OF) \norm{u-v}{\mcU} \norm{x^*_u-x^*_v}{\mcX}. &&
    \end{align*}
    This implies that $(1-\osL_x(\OF))\norm{x^*_u-x^*_v}{\mcX}\le
    \Lip_u(\OF)\norm{u-v}{\mcU}$ and the result of part~\ref{p2:lim}
    follows.
  \end{proof}

\subsection{Non-differentiable fixed-point problems}\label{sec:non-differentiable}

In many machine learning applications, the activation functions are
continuous but non-differentiable and thus our results in
Sections~\ref{sec:fixed-point} do not directly apply to these
problems. In this subsection, we focus on a specific form of the
fixed-point equation~\eqref{eq:fixed-point}, where
$\OF = \Phi\circ \OH$ and $\map{\Phi}{\real^n}{\real^n}$ is a diagonal
activation function with absolutely continuous components and
$\map{\OH}{\real^n\times \real^r}{\real^n}$ is a differentiable function. It can be
shown that, for this class of systems, conclusions of
Theorems~\ref{thm:fixedpoint-osL},~\ref{thm:acceleration-1-inf},
and~\ref{thm:perturbed-fixed} still hold with respect to weighted
$\ell_\infty$-norms. Here, we present a result which extends 
Theorems~\ref{thm:acceleration-1-inf} and~\ref{thm:perturbed-fixed}
for $\OH(x,u) = \OG(x) + Bu$ given some $B\in \real^{n\times r}$ and with
respect to the norm $\|\cdot\|_{\infty.[\eta]^{-1}}$. 

    \begin{theorem}[Fixed points for non-differentiable activation functions]\label{thm:non-diff}
      Consider the norm $\|\cdot\|_{\infty,[\eta]^{-1}}$ on $\real^n$ for
      some $\eta \in \realpositive^n$ and the norm
      $\|\cdot\|_{\mathcal{U}}$ on $\real^r$. Additionally, consider the following perturbed fixed point problem:
      \begin{align*}
        x = \Phi(\OG(x)+Bu):=\Phi^\OG(x,u),
      \end{align*}
      where $\map{\Phi}{\real^n}{\real^n}$ is a diagonal function given by
      $(\phi_1(x_1),\dots,\phi_n(x_n))$ with non-expansive and
      weakly increasing
      $\phi_i$, $\map{\OG}{\real^n}{\real^n}$ is a continuously
      differentiable function, and $B\in \real^{n\times r}$. Define the average map
      $\Phi^\OG_{\alpha}(x,u):= (1 - \alpha)x+ \Phi^\OG(x,u)$ and pick
      $\diagL(\OG)_-\in [-\Lip(\OG),\osL(\OG)]$ such that
      \begin{align*}
        \diagL(\OG)_{-} \leq \min_{i}\inf_{x\in \real^n}
      \jac{\OG}_{ii}(x)_-.
      \end{align*}
     Assume that $\osL(\OG)<1$. Then,
      \begin{enumerate}
      \item\label{p1:non-diff-existence} for every
        $u\in \real^n$, the map $\Phi^\OG(\cdot,u)$ has a unique fixed-point $x_u^*$;
      \item\label{p2:non-diff-iterations} for every $0<\alpha\leq
        \frac{1}{1-\diagL(\OG)_-}$ and every $u\in \real^r$, $\Phi^\OG_{\alpha}(\cdot,u)$ is a contraction map with
        contraction factor $1-\alpha(1-\osL(\OG)_+)$;
        \item\label{p3:non-diff-perturbation} for every $u,v\in \real^r$,
          we have $\norm{x^*_u-x^*_v}{\infty,[\eta]^{-1}} \le
          \frac{\Lip_u{\Phi^\OG}}{1-\osL{\OG}_+}\norm{u-v}{\mcU}$.
        \end{enumerate}
      \end{theorem}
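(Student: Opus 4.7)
The plan is to reduce Theorem~\ref{thm:non-diff} to the matrix measure machinery underlying Theorem~\ref{thm:acceleration-1-inf} by exploiting the structure of the non-expansive, weakly increasing activation $\Phi$. First I would obtain a ``chain rule without differentiability'' for $\Phi^\OG$: since each $\phi_i$ is $1$-Lipschitz and monotone, it is absolutely continuous with $0\le \phi_i' \le 1$ almost everywhere, so by the fundamental theorem of calculus $\phi_i(a_i)-\phi_i(b_i) = d_i(a_i-b_i)$ with $d_i = \int_0^1 \phi_i'(b_i + s(a_i-b_i))\,ds \in [0,1]$. Collecting the $d_i$ into a diagonal matrix $D = D(x_1,x_2,u)\in\real^{n\times n}$ and applying the classical vector-valued MVT to the continuously differentiable $\OG$ yields $\OG(x_1)-\OG(x_2) = \bar{J}(x_1,x_2)(x_1-x_2)$ for $\bar{J}(x_1,x_2) = \int_0^1 \jac{\OG}(tx_1+(1-t)x_2)\,dt$, and hence
\begin{equation*}
  \Phi^\OG_\alpha(x_1,u) - \Phi^\OG_\alpha(x_2,u) = \big((1-\alpha)I_n + \alpha D\bar{J}(x_1,x_2)\big)(x_1-x_2).
\end{equation*}

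Next I would estimate $\norm{(1-\alpha)I_n + \alpha D\bar{J}}{\infty,[\eta]^{-1}}$. By convexity/subadditivity of matrix measures, $\mu_{\infty,[\eta]^{-1}}(\bar{J}) \le \sup_{x}\mu_{\infty,[\eta]^{-1}}(\jac{\OG}(x)) = \osL(\OG)$. Lemma~\ref{lemma:affine-scaling} (whose proof applies verbatim with $[\eta]^{-1}$ in place of $[\eta]$, the row-sum argument being unchanged) then gives, with $C=I_n$ and $A=\bar J$,
\begin{equation*}
  \mu_{\infty,[\eta]^{-1}}(-I_n + D\bar{J}) \le \max\{-1,\,-1+\mu_{\infty,[\eta]^{-1}}(\bar{J})\} \le -1 + \osL(\OG)_+.
\end{equation*}
I would then check that $\alpha \le (1-\diagL(\OG)_-)^{-1}$ guarantees $\alpha\max_i|(-I_n+D\bar{J})_{ii}| = \alpha \max_i|d_i\bar{J}_{ii}-1| \le 1$: since $d_i\in[0,1]$, and by definition $\diagL(\OG)_- \le \inf_x\jac{\OG}_{ii}(x) \le \bar{J}_{ii}\le \osL(\OG)<1$, a short case check on the sign of $\bar{J}_{ii}$ produces $|d_i\bar{J}_{ii}-1|\le 1-\diagL(\OG)_-$. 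Lemma~\ref{lem:norm-mu-nonEuc}\ref{p1:identity} then gives
\begin{equation*}
  \bignorm{(1-\alpha)I_n + \alpha D\bar{J}}{\infty,[\eta]^{-1}} = 1 + \alpha\,\mu_{\infty,[\eta]^{-1}}(-I_n+D\bar{J}) \le 1 - \alpha(1-\osL(\OG)_+),
\end{equation*}
uniformly in $x_1,x_2,u$, establishing part~\ref{p2:non-diff-iterations}. Part~\ref{p1:non-diff-existence} is then immediate from Banach's contraction principle applied to $\Phi^\OG_\alpha(\cdot,u)$ for any admissible $\alpha$, since the fixed points of $\Phi^\OG_\alpha(\cdot,u)$ and of $\Phi^\OG(\cdot,u)$ coincide. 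For part~\ref{p3:non-diff-perturbation} I would specialize to $\alpha=1$ (which is admissible because $\diagL(\OG)_-\le 0$): the resulting contraction factor of $\Phi^\OG(\cdot,u)$ is $\osL(\OG)_+$, and then the standard Lim-style triangulation
\begin{equation*}
  \norm{x^*_u-x^*_v}{\infty,[\eta]^{-1}} \le \osL(\OG)_+\norm{x^*_u-x^*_v}{\infty,[\eta]^{-1}} + \Lip_u(\Phi^\OG)\norm{u-v}{\mcU}
\end{equation*}
yields the claimed bound after rearranging.

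The main obstacle I anticipate is cleanly justifying the averaged decomposition $\Phi^\OG(x_1,u)-\Phi^\OG(x_2,u) = D\bar{J}(x_1,x_2)(x_1-x_2)$ with $D\in[0,1]^n$ without any differentiability of $\Phi$, and confirming that the scaling-with-$[d]$ identity of Lemma~\ref{lemma:affine-scaling}, proved for $\mu_{\infty,[\eta]}$, transfers to $\mu_{\infty,[\eta]^{-1}}$ in the left-multiplication configuration $[d]\bar{J}$ that arises here. Once this ``almost-everywhere Jacobian'' decomposition is in hand, the rest of the argument is a one-to-one transcription of the proof of Theorem~\ref{thm:acceleration-1-inf}, with $D\bar{J}$ playing the role of $\jac{\OF}$ and with the uniform bound $\osL(\OG)_+$ replacing $\osL(\OF)$ on the right-hand side.
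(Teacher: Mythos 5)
Your treatment of parts \ref{p1:non-diff-existence} and \ref{p2:non-diff-iterations} is essentially the paper's own argument: the paper also writes $\Phi(\mathbf p)-\Phi(\mathbf q)=\Theta(\mathbf p-\mathbf q)$ with diagonal $\Theta\in[0,1]^{n\times n}$ (this needs only monotonicity and non-expansiveness of each $\phi_i$, no absolute continuity), applies the mean value theorem to $\OG$, invokes Lemma~\ref{lem:norm-mu-nonEuc}\ref{p1:identity} after verifying the diagonal condition via $\diagL(\Theta\jac\OG)\ge\diagL(\OG)_-$, and uses Lemma~\ref{lemma:affine-scaling}\ref{fact:ms:1} (which indeed transfers verbatim to the $[\eta]^{-1}$ weights) to pass from $\mu_{\infty,[\eta]^{-1}}(\Theta\jac\OG)$ to $\mu_{\infty,[\eta]^{-1}}(\jac\OG)_+$. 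So far, so good.

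Part \ref{p3:non-diff-perturbation} has a genuine gap. You claim $\alpha=1$ is admissible ``because $\diagL(\OG)_-\le 0$,'' but the inequality runs the other way: $\diagL(\OG)_-\le 0$ gives $\tfrac{1}{1-\diagL(\OG)_-}\le 1$, so $\alpha=1$ lies \emph{outside} the admissible range whenever $\diagL(\OG)_-<0$. More fundamentally, even when $\alpha=1$ is admissible you cannot conclude that $\Phi^\OG(\cdot,u)$ is norm-Lipschitz with constant $\osL(\OG)_+$: take $\Phi=\mathrm{id}$ and $\OG(x)=Ax$ with $A=\bigl(\begin{smallmatrix}-10&1\\1&-10\end{smallmatrix}\bigr)$, so $\osL(\OG)_+=0$ while $\Lip_x(\Phi^\OG)=\|A\|_\infty=11$; your triangulation inequality $\|\Phi^\OG(x^*_u,u)-\Phi^\OG(x^*_v,u)\|\le\osL(\OG)_+\|x^*_u-x^*_v\|$ is then false. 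The quantity $\osL(\OG)_+$ controls $\Phi^\OG(\cdot,u)$ only in the \emph{one-sided} sense, and this is exactly why the paper routes part \ref{p3:non-diff-perturbation} through the weak pairing $\WP{\cdot}{\cdot}_{\infty,[\eta]^{-1}}$: it first shows $\WP{\Phi^\OG_u(x_1)-\Phi^\OG_u(x_2)}{x_1-x_2}_{\infty,[\eta]^{-1}}\le\osL(\OG)_+\|x_1-x_2\|_{\infty,[\eta]^{-1}}^2$ (via Lumer's equality \eqref{eq:Lumer-infty-appendix} and a case split on the sign of $\osL(\OG)$, using $\theta_i\in[0,1]$), and only then performs the Lim-style triangulation using the sub-additivity and Cauchy--Schwarz properties of the weak pairing rather than the triangle inequality for the norm. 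Replacing your norm-based step with this weak-pairing argument closes the gap; the rest of your proposal stands.
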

      \begin{proof}[Proof of Theorem~\ref{thm:non-diff}]
        Regarding
        part~\ref{p1:non-diff-existence}, the assumptions on each
        scalar activation function imply that (i)
        $\map{\Phi}{\real^n}{\real^n}$ is non-expansive with respect
        to $\norm{\cdot}{\infty,[\eta]^{-1}}$ and (ii) for every
        $p,q \in \real$, there exists $\theta_i \in [0,1]$ such that
        $\phi_i(p) - \phi_i(q) = \theta_i(p - q)$ or in the matrix
        form $\Phi(\mathbf{p})-\Phi(\mathbf{q}) = \Theta (\mathbf{p}-\mathbf{q})$ where $\Theta$ is a
        diagonal matrix with diagonal elements $\theta_i\in
        [0,1]$ and $\mathbf{p},\mathbf{q}\in \real^n$. As a result, we have
        \begin{align*}
          \|\Phi^{\OG}_{\alpha}(x_1,u)-\Phi^{\OG}_{\alpha}(x_2,u)\|_{\infty,[\eta]^{-1}}
          &= \|(1-\alpha)(x_1-x_2) + \alpha \Theta
            (\OG(x_1)-\OG(x_2))\|_{\infty,[\eta]^{-1}} \\ & \le \sup_{y\in \real^n}\|I_n + \alpha
                                                            (-I_n
                                                            +\Theta\jac{\OG}(y))\|_{\infty,[\eta]^{-1}}\|x_1-x_2\|_{\infty,[\eta]^{-1}}. 
        \end{align*}
        where the inequality holds by the mean value theorem. Then, for
        every $\alpha \in {]0,\frac{1}{1-\diagL(\Theta\jac{\OG})}]}$,
        \begin{align*}
          \|I_n + \alpha(-I_n+\Theta\jac{\OG}(y))\|_{\infty,[\eta]^{-1}} &= 1 +
          \alpha \mu_{\infty,[\eta]^{-1}}\big(-I_n+\Theta\jac{\OG}(y)\big) \\
          &
          \le
          1
          +
          \alpha
          \big(-1
          +
          \mu_{\infty,[\eta]^{-1}}(\Theta\jac{\OG}(y))\big)\\ &
          \le
          1
          +
          \alpha \big(-1+\mu_{\infty,[\eta]^{-1}}(\jac{\OG}(y))_{+}\big) \\ & \le 1 -
          \alpha (1-\osL(\OG)_{+}) < 1, 
        \end{align*}
        where the first equality holds by
        Lemma~\ref{lem:norm-mu-nonEuc}\ref{p1:identity}, the second
        inequality holds by subadditive property of matrix
        measures~\eqref{measure:subadd}, and the third
        inequality holds by
        Lemma~\ref{lemma:affine-scaling}\ref{fact:ms:1}. Moreover, since $\theta_i\in
        [0,1]$, we have $\theta_i\jac{\OG}_{ii} \ge (\jac{\OG}_{ii})_-$, for every
        $i\in \{1,\ldots,n\}$. This means that 
        \begin{align*}
          \diagL(\Theta\jac{\OG}) = \min_{i}\inf_{y\in \real^n}
          (\Theta\jac{\OG}(y))_{ii} \ge  \min_{i}\inf_{y\in \real^n}
          (\jac{\OG}_{ii}(y))_- = \diagL(\OG)_-. 
          \end{align*}
        This implies that,  for
        every $\alpha \in {]0,\frac{1}{1-\diagL(\OG)_-}]}$,
        \begin{align*}
          \|\Phi^{\OG}_{\alpha}(x_1,u)-\Phi^{\OG}_{\alpha}(x_2,u)\|_{\infty,[\eta]^{-1}}\le
          (1 - \alpha (1-\osL(\OG)_{+})) \|x_1-x_2\|_{\infty,[\eta]^{-1}}. 
        \end{align*}
        Since $1 -\alpha (1-\osL(\OG)_+) < 1$, the map
        $\Phi^{\OG}_{\alpha}(\cdot,u)$ is a contraction for every
        $\alpha \in {]0,\frac{1}{1-\diagL(\OG)_{-}}]}$. This concludes the
        proof of parts~\ref{p1:non-diff-existence}
        and~\ref{p2:non-diff-iterations},

        Regarding part~\ref{p3:non-diff-perturbation}, from
        formula~\eqref{eq:osL=WP-appendix} for the one-sided
        Lipschitz constant and
        formula~\eqref{def: WP-1+infty-appendix} for the relevant
        WP, we obtain that, for all $x_1,x_2\in\real^n$,
        \begin{align*}
          &\WP{\Phi(\OG(x_1) + Bu) - \Phi(\OG(x_2) + Bu)}{x_1 - x_2}_{\infty,[\eta]^{-1}} \\
          &\qquad\qquad = \max_{i \in \Iinfty([\eta]^{-1}(x_1-x_2))} \eta_i^{-2} (x_{1} - x_{2})_i(\phi_i((\OG(x_1) + Bu)_i) - \phi_i((\OG(x_2) + Bu)_i)) \\
          &\qquad\qquad = \max_{i \in \Iinfty([\eta]^{-1}(x_1-x_2))} \theta_i\eta_i^{-2} (x_{1} - x_{2})_i((\OG(x_1) + Bu)_i - (\OG(x_2) + Bu)_i) \\
          &\qquad\qquad = \max_{i \in \Iinfty([\eta]^{-1}(x_1-x_2))} \theta_i\eta_i^{-2} (x_{1} - x_{2})_i(\OG(x_1) - \OG(x_2))_i,
        \end{align*}
        Next, we recall Lumer's
        equality~\eqref{eq:Lumer-infty-appendix} and write it as
        \begin{equation*}
          \osL(\OG) = \sup_{x_1 \neq x_2} \max_{i \in \Iinfty([\eta]^{-1}(x_1-x_2))} \eta_i^{-2}(x_{1} - x_{2})_i(\OG(x_1)- \OG(x_2))_i.
        \end{equation*}
        Next, we consider two cases. Suppose that $\osL(\OG)\leq 0$.
        Since $\theta_i \in [0,1]$ for all $i$, we obtain
        \begin{equation*}
          \WP{\Phi(\OG(x_1) + Bu) - \Phi(\OG(x_2) + Bu)}{x_1 - x_2}_{\infty,[\eta]^{-1}} \leq 0,
        \end{equation*}
        since the maximum value is achieved at $\theta_i=0$ for all
        $i$.  Alternatively, suppose that $\osL(\OG) > 0$. Then
        \begin{align*}
          &\WP{\Phi(\OG(x_1) + Bu) - \Phi(\OG(x_2) + Bu)}{x_1 - x_2}_{\infty,[\eta]^{-1}} \\
          &\qquad\qquad = \max_{i \in \Iinfty([\eta]^{-1}(x_1-x_2))} \theta_i\eta_i^{-2} (x_{1}
            - x_{2})_i(\OG(x_1) - \OG(x_2))_i \\
          &\qquad\qquad \leq \max_{i \in  \Iinfty([\eta]^{-1}(x_1-x_2))} \eta_i^{-2} (x_{1} - x_{2})_i(\OG(x_1) -
            \OG(x_2))_i\leq \osL(\OG)\|x_1 - x_2\|_{\infty,[\eta]^{-1}}^2,
        \end{align*}
        since the maximum value is achieved at $\theta_i=1$ for all
        $i$. This means that $\osL(\Phi^{\OG})= \osL(\OG)_{+}$. Now we
        compute
        \begin{align*}
          \norm{x^*_u-x^*_v}{\infty,[\eta]^{-1}}^2 &= \WP{x^*_u-x^*_v}{x^*_u-x^*_v}_{\infty,[\eta]^{-1}}  \\
                                                   & =   \WP{\Phi^\OG_u(x^*_u)-\Phi^\OG_v(x^*_v)}{x^*_u-x^*_v}_{\infty,[\eta]^{-1}} \\
                                                   & \le
                                                     \WP{\Phi^\OG_u(x^*_u)-\Phi^\OG_u(x^*_v)}{x^*_u-x^*_v}_{\infty,[\eta]^{-1}} +
                                                     \WP{\Phi^\OG_u(x^*_v)-\Phi^\OG_v(x^*_v)}{x^*_u-x^*_v}_{\infty,[\eta]^{-1}}  \\
                                                   &\le
                                                     \osL(\OG)_{+}\norm{x^*_u-x^*_v}{\infty,[\eta]^{-1}}^2 +
                                                     \norm{\Phi^\OG_u(x^*_v)-\Phi^\OG_v(x^*_v)}{\infty,[\eta]^{-1}} \norm{x^*_u-x^*_v}{\infty,[\eta]^{-1}}  \\
                                                   & \le \osL(\OG)_{+} \norm{x^*_u-x^*_v}{\infty,[\eta]^{-1}}^2 +
                                                     \Lip_u(\Phi^\OG) \norm{u-v}{\mcU} \norm{x^*_u-x^*_v}{\infty,[\eta]^{-1}}.
        \end{align*}
        This implies that
        $(1-\osL(\OG)_{+})\norm{x^*_u-x^*_v}{\infty,[\eta]^{-1}}\le
        \Lip_u(\Phi^\OG)\norm{u-v}{\mcU}$ and the result
        follows.\end{proof}

\subsection{Proofs of results in Section~\ref{sec:inn}}

\begin{proof}[Proof of Theorem~\ref{thm:inn-Lip-osL}]
  The assumptions on each scalar activation function imply that (i)
  $\map{\Phi}{\real^n}{\real^n}$ is non-expansive with respect to
  $\norm{\cdot}{\infty,[\eta]^{-1}}$, and (ii) for every $p,q \in \real$,
  there exists $\theta_i \in [0,1]$ such that $\phi_i(p) - \phi_i(q) =
  \theta_i(p - q)$.  Regarding the equality $\osL_{x}(\ON) = \mu_{\infty,[\eta]^{-1}}(A)_+$, from
  formula~\eqref{eq:osL=WP-appendix} for the one-sided Lipschitz constant and
  formula~\eqref{def: WP-1+infty-appendix} for the relevant WP, we obtain that,
  for all $x_1,x_2\in\real^n$,
 \begin{align*}
   &\WP{\Phi(Ax_1 + Bu) - \Phi(Ax_2 + Bu)}{x_1 - x_2}_{\infty,[\eta]^{-1}} \\
   &\qquad\qquad = \max_{i \in \Iinfty([\eta]^{-1}(x_1-x_2))} \eta_i^{-2} (x_{1} - x_{2})_i(\phi_i((Ax_1 + Bu)_i) - \phi_i((Ax_2 + Bu)_i)) \\
   &\qquad\qquad = \max_{i \in \Iinfty([\eta]^{-1}(x_1-x_2))} \theta_i\eta_i^{-2} (x_{1} - x_{2})_i((Ax_1 + Bu)_i - (Ax_2 + Bu)_i) \\
   &\qquad\qquad = \max_{i \in \Iinfty([\eta]^{-1}(x_1-x_2))} \theta_i\eta_i^{-2} (x_{1} - x_{2})_i(Ax_1 - Ax_2)_i,
 \end{align*}
 Next, we recall Lumer's equality~\eqref{eq:Lumer-infty-appendix} and write it as
 \begin{equation*}
   \mu_{\infty,[\eta]^{-1}}(A) = \sup_{x_1 \neq x_2} \max_{i \in \Iinfty([\eta]^{-1}(x_1-x_2))} \eta_i^{-2}(x_{1} - x_{2})_i((Ax_1)_i - (Ax_2)_i).
 \end{equation*}
 Next, we consider two cases. Suppose that $\mu_{\infty,[\eta]^{-1}}(A) \leq 0$.
 Since $\theta_i \in [0,1]$ for all $i$, we obtain
 \begin{equation*}
   \WP{\Phi(Ax_1 + Bu) - \Phi(Ax_2 + Bu)}{x_1 - x_2}_{\infty,[\eta]^{-1}} \leq 0,
 \end{equation*}
 since the maximum value is achieved at $\theta_i=0$ for all $i$.
 Alternatively, suppose that $\mu_{\infty,[\eta]^{-1}}(A) > 0$. Then
 \begin{align*}
   &\WP{\Phi(Ax_1 + Bu) - \Phi(Ax_2 + Bu)}{x_1 - x_2}_{\infty,[\eta]^{-1}} \\
   &\qquad\qquad = \max_{i \in \Iinfty([\eta]^{-1}(x_1-x_2))} \theta_i\eta_i^{-2} (x_{1}
   - x_{2})_i(Ax_1- Ax_2)_i\\
   &\qquad\qquad \leq \max_{i \in  \Iinfty([\eta]^{-1}(x_1-x_2))} \eta_i^{-2} (x_{1} - x_{2})_i(Ax_1 -
   Ax_2)_i \leq \mu_{\infty,[\eta]^{-1}}(A)\|x_1 - x_2\|_{\infty,[\eta]^{-1}}^2,
 \end{align*}
 since the maximum value is achieved at $\theta_i=1$ for all $i$.  This
 concludes the proof of formula $\osL_{x}(\ON) = \mu_{\infty,[\eta]^{-1}}(A)_+$.  Next, since $\Phi$ is
 non-expansive, we compute 
 \begin{align*}
   \norm{\ON(x_1,u)-\ON(x_2,u)}{\infty,[\eta]^{-1}} & =
   \norm{\Phi(Ax_1+Bu)-\Phi(Ax_2+Bu)}{\infty,[\eta]^{-1}} \\
   & \leq
   \norm{(Ax_1+Bu)-(Ax_2+Bu)}{\infty,[\eta]^{-1}}\\
   &\leq
   \norm{A(x_1-x_2)}{\infty,[\eta]^{-1}} \leq \norm{A}{\infty,[\eta]^{-1}} \norm{x_1-x_2}{\infty,[\eta]^{-1}},
 \end{align*}
 proving the formula $\Lip_{x}(\ON) = \norm{A}{\infty,[\eta]^{-1}}$.
 The proof of the formula $\Lip_{u}(\ON) = \norm{B}{(\infty,[\eta]^{-1}),\mcU}$ is essentially identical.
 Finally, if each $\phi_i$ is differentiable then we compute
  \begin{align}
    \diagL(\ON)
    &=\min_{i\in\until{n}}\inf_{x\in \real^n u\in\real^r}\jac{\ON}_{ii}(x,u)
    =\min_{i\in\until{n}}\inf_{x\in \real^n u\in\real^r} \phi_i'((Ax+Bu)_i) A_{ii}\nonumber\\
    &\le \min_{i\in\until{n}}\begin{cases} 0, \quad &\text{if } A_{ii} >0 \\ A_{ii},\quad &\text{if } A_{ii} \leq 0
    \end{cases} \enspace = \min_{i\in\until{n}}(A_{ii})_- ,
  \end{align}
  because of the properties of the activation
  functions. Now suppose that there exists $i\in \{1,\ldots,n\}$ such
  that $\phi_i$ is not differentiale. Using
  Theorem~\ref{thm:non-diff}\ref{p2:non-diff-iterations} with $\OG=A$,
  $\diagL(\ON)$ is chosen to be equal to be $\diagL(A)_-$ which
  in turn is equal to $\min_{i\in\until{n}}(A_{ii})_-$. \end{proof}


\begin{proof}[Proof of Corollary~\ref{corollary:inn-properties}]
  The results are immediate consequences of
  Theorem~\ref{thm:acceleration-1-inf} (or more generally Theorem~\ref{thm:non-diff} for
  non-differentiable activation functions) and of the Lipschitz estimates in
  Theorem~\ref{thm:inn-Lip-osL}.
\end{proof}


\section{Adversarial attacks on implicit neural networks}\label{app:morenumerics}

In this appendix, we study the effect of different adversarial attacks
on the existing implicit network models as well as to the NEMON model. 

\subsection{Attack models}
First, we review several attack models that are used in the
literature to study the input-output resilience of neural
networks. Each attack consists of a model for generating
  suitable perturbations of the test input data. Perturbations with respect to these attacks were generated using the Foolbox software package\footnote{The Foolbox implementation is licensed under the MIT License and is available at
  	\tt{https://github.com/bethgelab/foolbox}.}.

\paragraph{Continuous image inversion.}
\begin{figure}[ht]\centering
	\begin{tabular}{c}
		\includegraphics[width = 0.9\linewidth,clip]{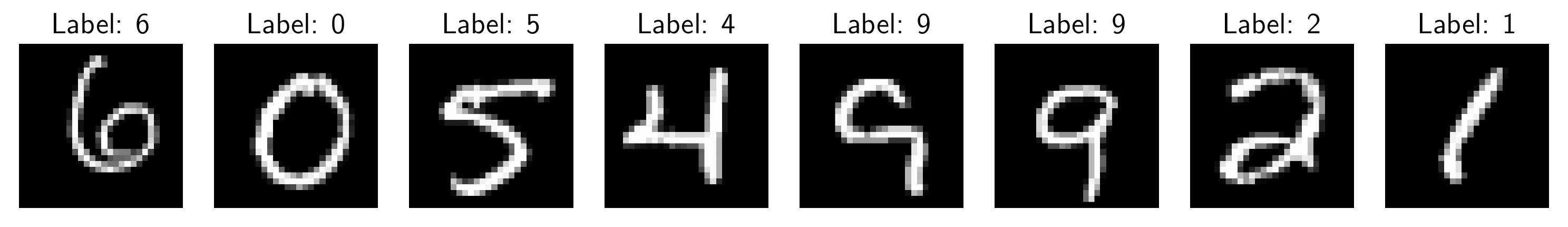} \\
		\includegraphics[width = 0.9\linewidth,clip]{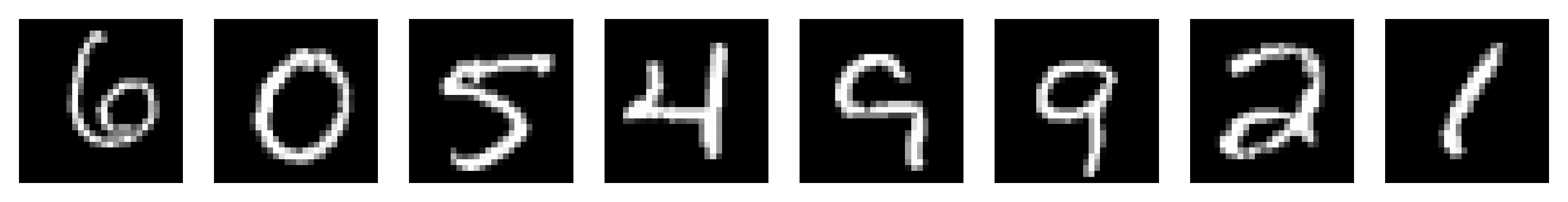} \\
		\includegraphics[width = 0.9\linewidth,clip]{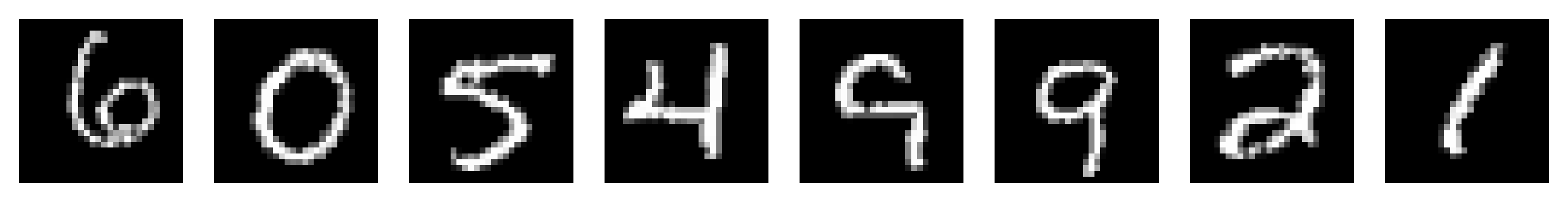} \\
		\includegraphics[width = 0.9\linewidth,clip]{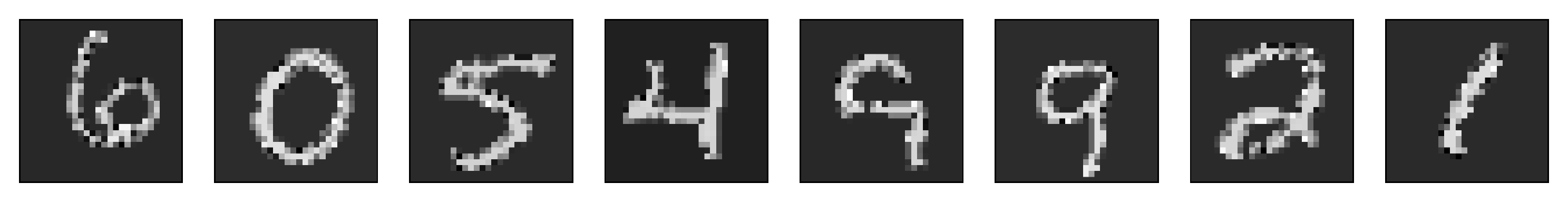} \\
		\includegraphics[width = 0.9\linewidth,clip]{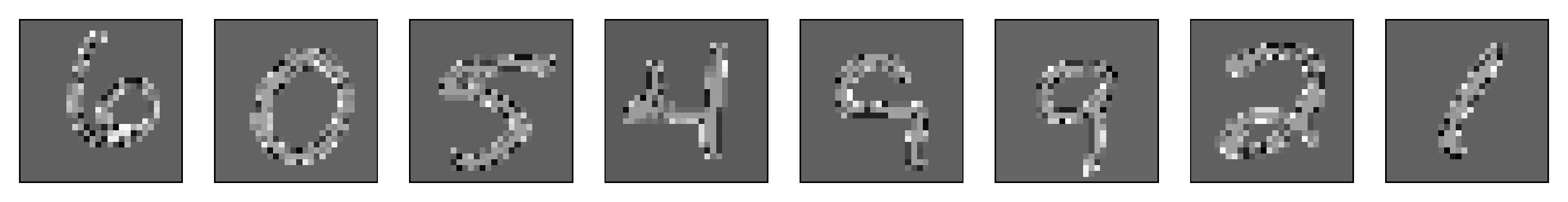} \\
	\end{tabular}
	\caption{Images of MNIST handwritten digits perturbed by the
          continuous image inversion attack. For $i\in
          \{1,\ldots,5\}$, row $i$ corresponds to an $\ell_{\infty}$ perturbation amplitude $\varepsilon=0.1 \times (i-1)$. In other words, the top row has unperturbed images, the second row has images that is perturbed by an $\ell_\infty$ amplitude $\varepsilon =0.1$, etc.}\label{fig:perturbedInversion}
\end{figure}
The continuous image inversion attack is defined by:
\begin{equation}
\subscr{U}{adversarial} = U + \varepsilon \sign\Big(\tfrac{1}{2}\vect{1}_r\vect{1}_m^{\top}-U\Big).
\end{equation}
It is clear that this attack is independent of the neural network
model. Plots of perturbed MNIST images under the continuous image
inversion attack are shown in Figure~\ref{fig:perturbedUniform}. In
Figure~\ref{fig:robustness}, the right plot compares the accuracy of
the NEMON model, the implicit deep learning
model~\citep{LEG-FG-BT-AA-AYT:21}, and the MON model~\citep{EW-JZK:20}
for $\varepsilon\in [0.0.5]$.

\paragraph{Uniform additive $\ell_\infty$-noise.}
\begin{figure}[ht]\centering
	\begin{tabular}{c}
		\includegraphics[width = 0.9\linewidth,clip]{images/Unperturbed} \\
		\includegraphics[width = 0.9\linewidth,clip]{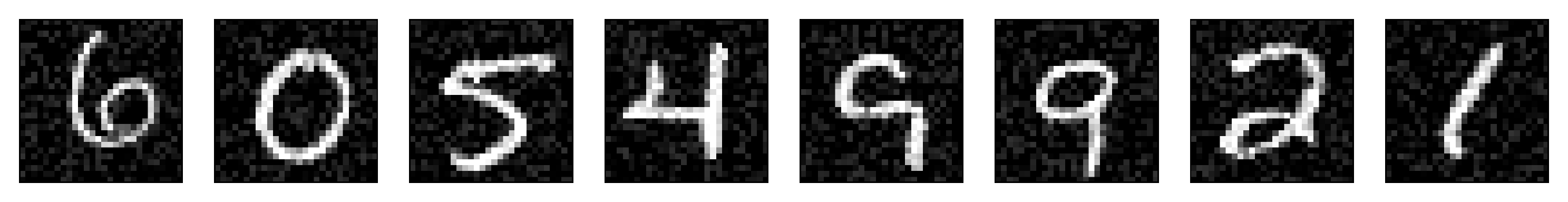} \\
		\includegraphics[width = 0.9\linewidth,clip]{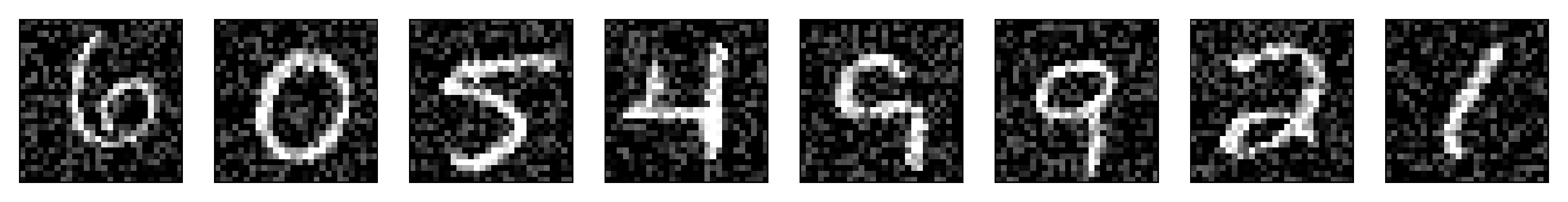} \\
		\includegraphics[width = 0.9\linewidth,clip]{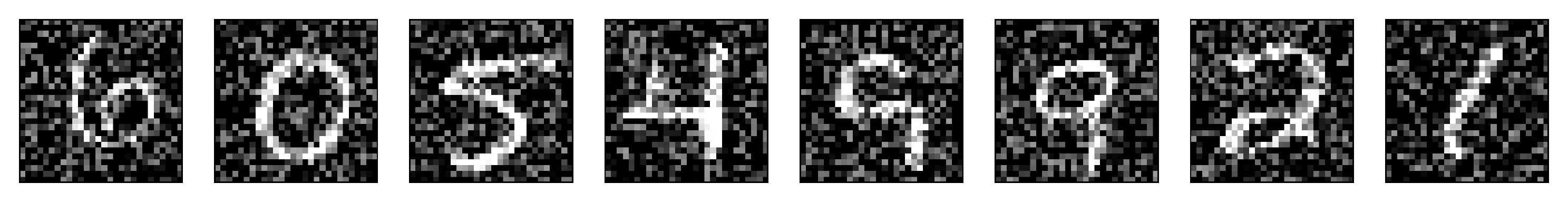} \\
		\includegraphics[width = 0.9\linewidth,clip]{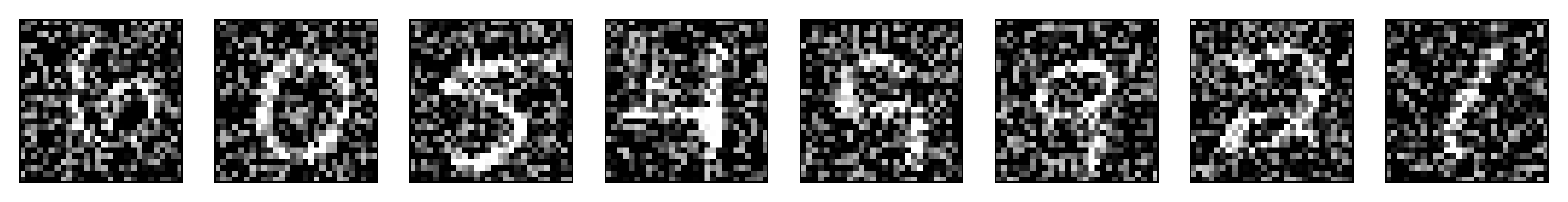} \\
	\end{tabular}
	\caption{Images of MNIST handwritten digits as perturbed by uniform additive $\ell_{\infty}$ noise. For $i\in
          \{1,\ldots,5\}$, row $i$ corresponds to an $\ell_{\infty}$ perturbation amplitude $\varepsilon=0.2 \times (i-1)$. In other words, the top row has unperturbed images, the second row has images that is perturbed by an $\ell_\infty$ amplitude $\varepsilon =0.2$, etc.}\label{fig:perturbedUniform}
\end{figure}
\begin{figure}[ht]\centering
	\begin{tabular}{cc}
		\includegraphics[width = 0.482\linewidth,clip]{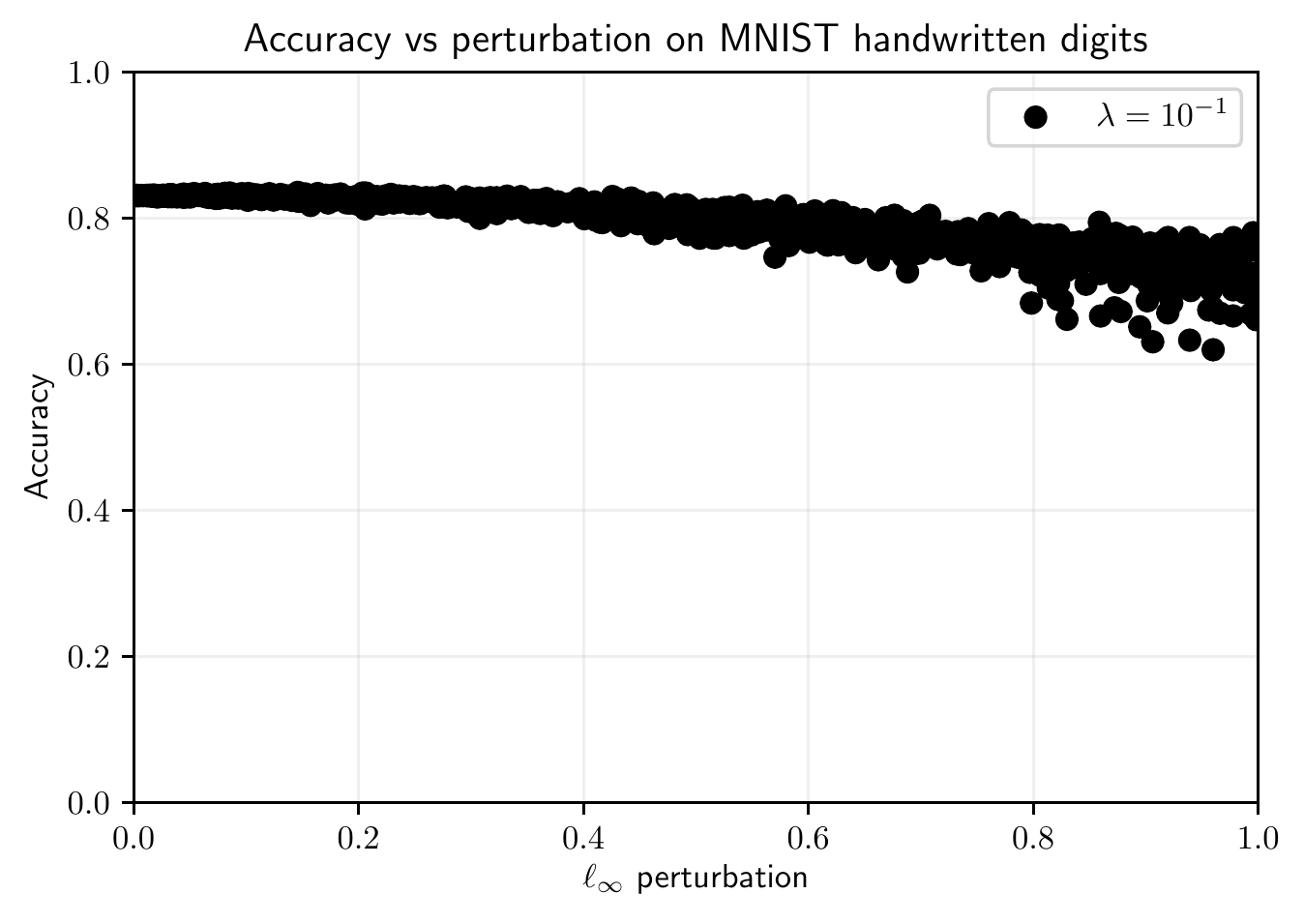}&
		\includegraphics[width = 0.482\linewidth,clip]{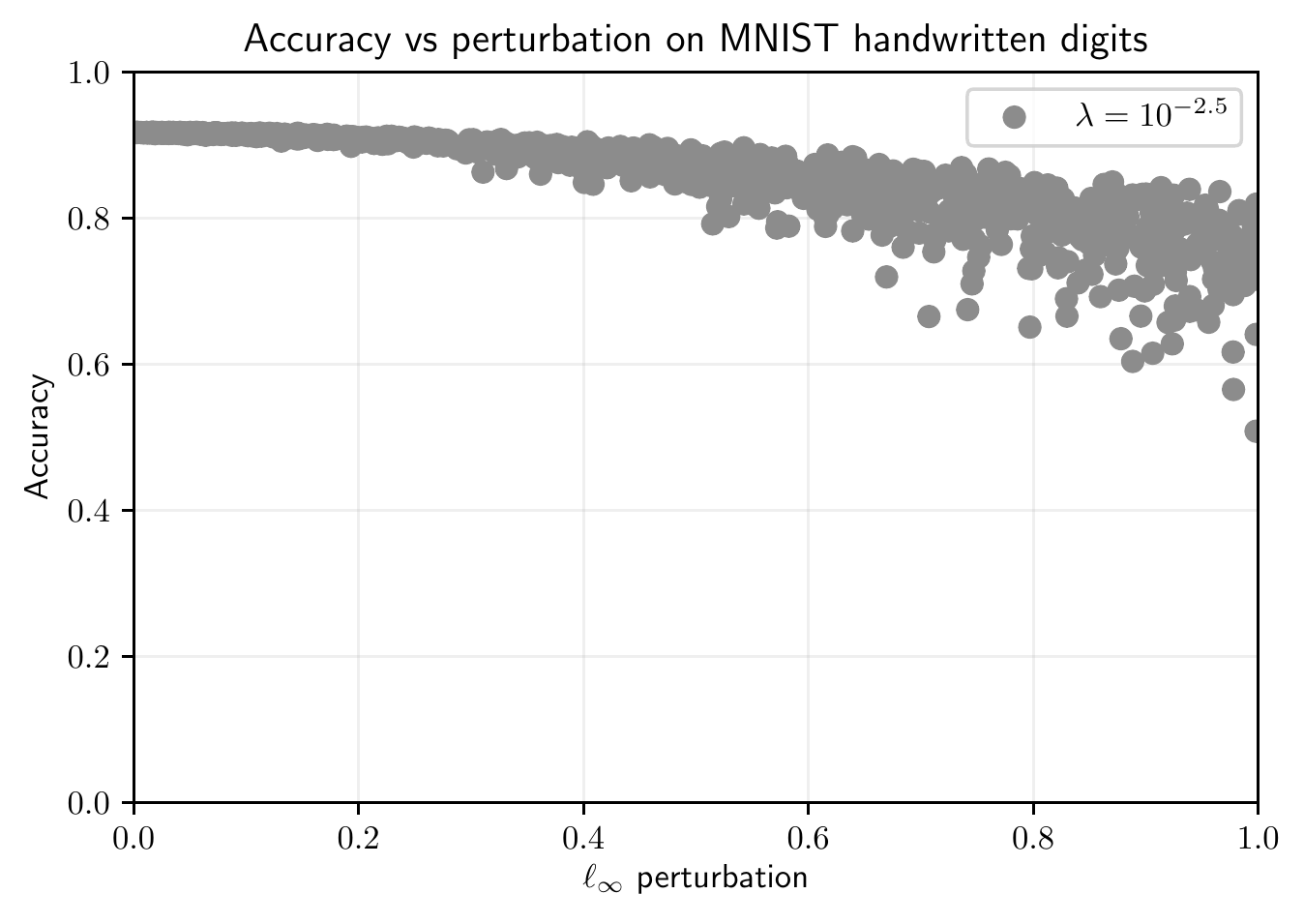} \\
		\includegraphics[width = 0.482\linewidth,clip]{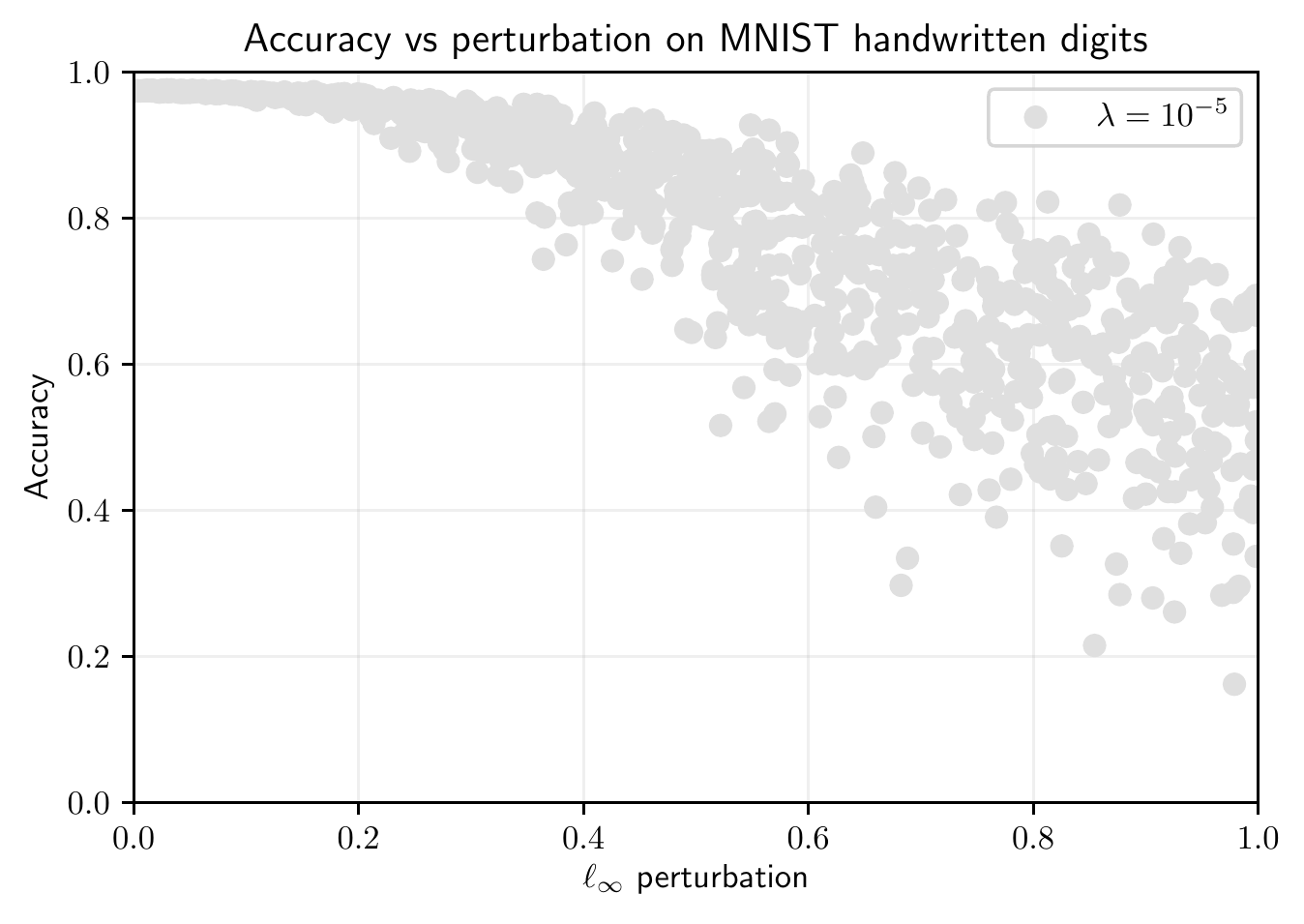}&
		\includegraphics[width = 0.482\linewidth,clip]{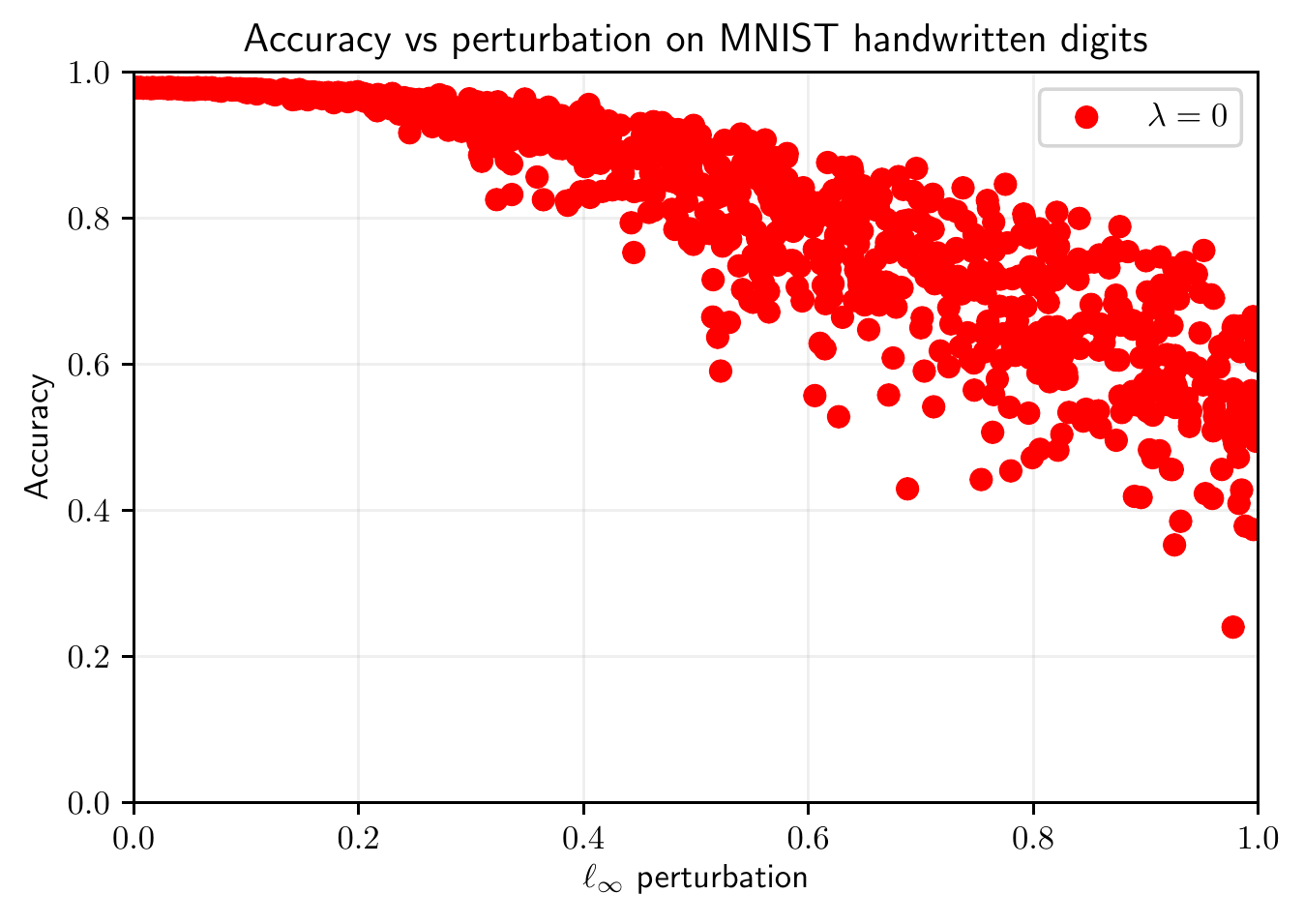} \\
		\includegraphics[width = 0.482\linewidth,clip]{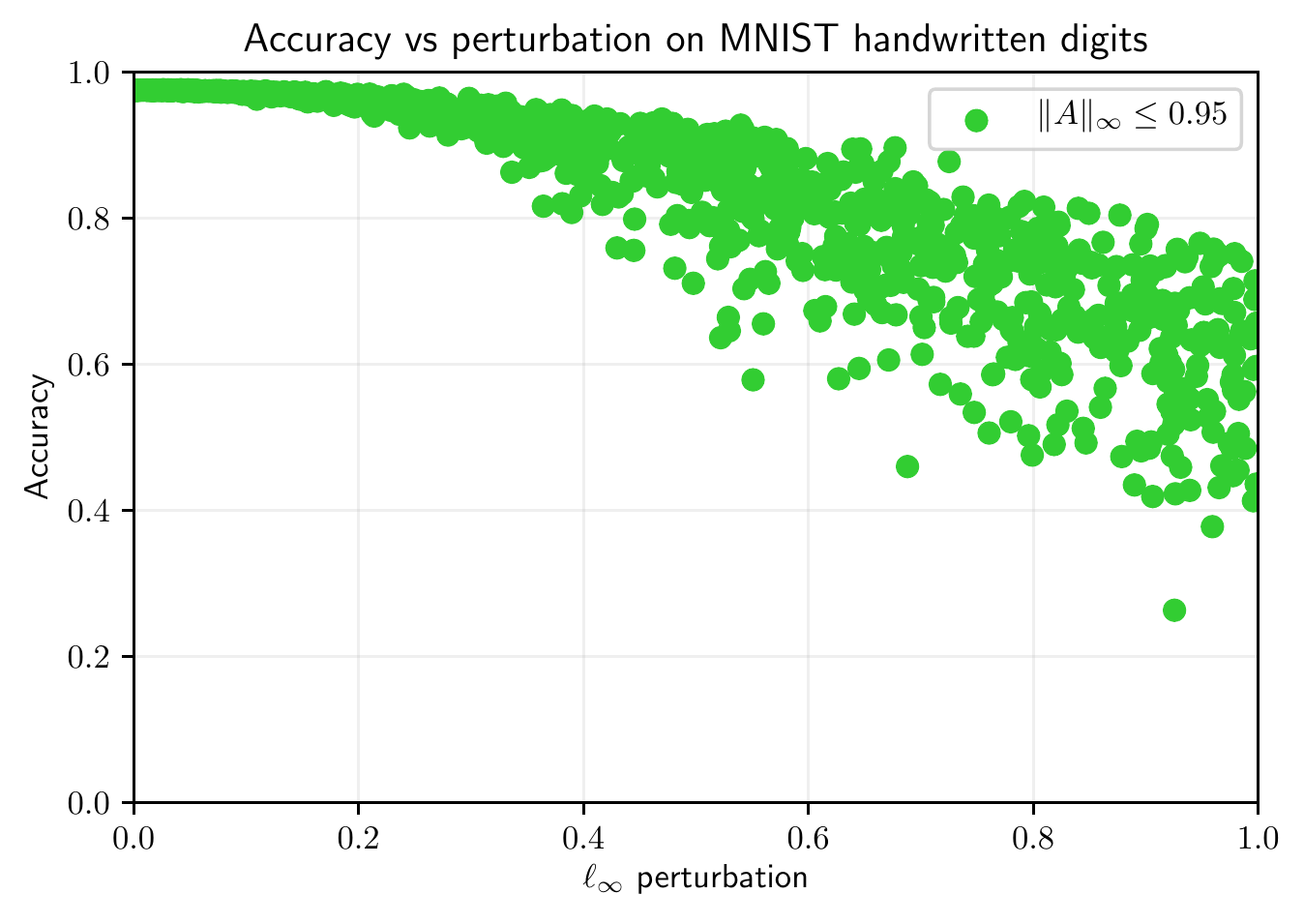}&
		\includegraphics[width = 0.482\linewidth,clip]{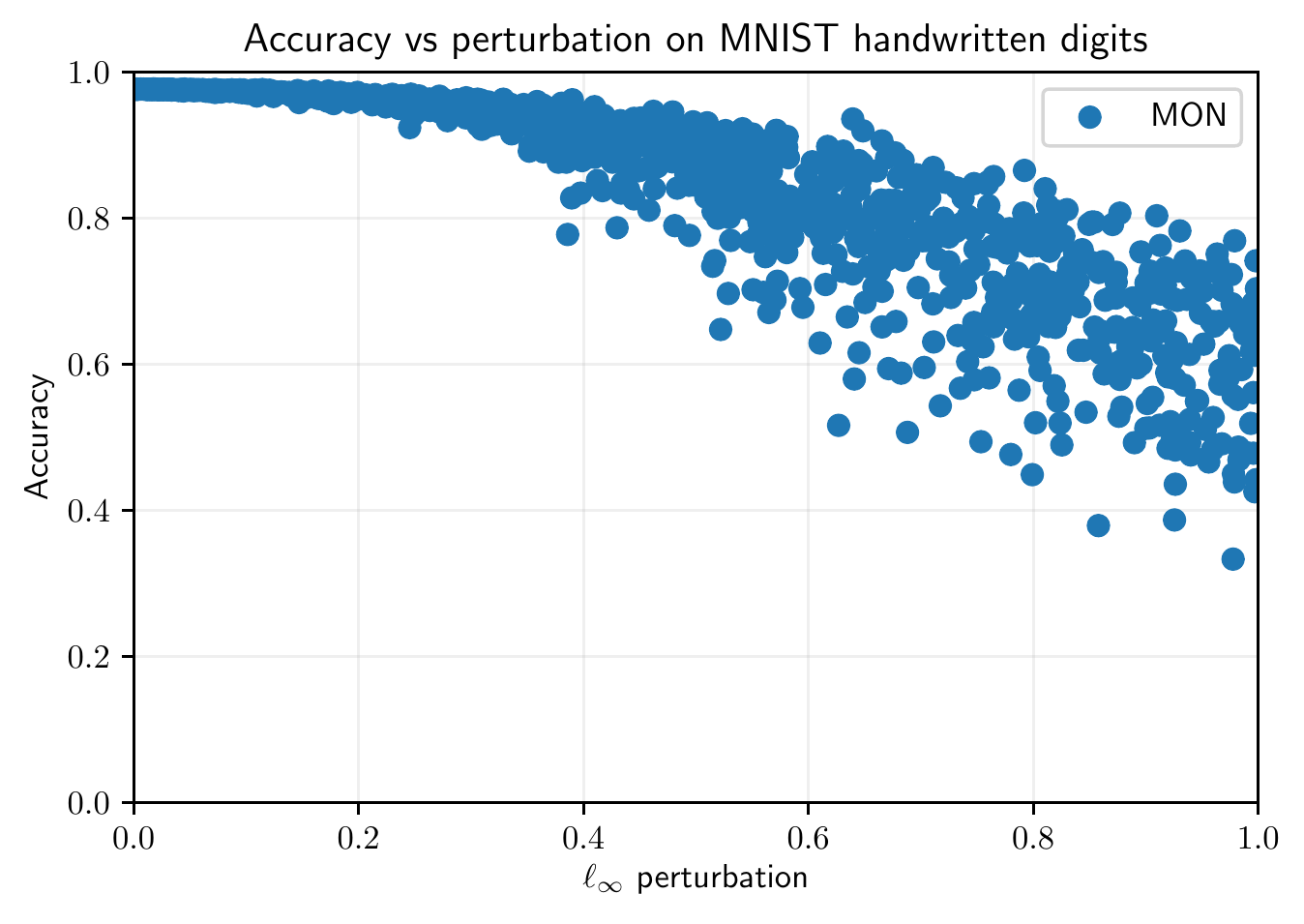}
	\end{tabular}
	\caption{Scatter plots of accuracy versus $\ell_{\infty}$
          perturbation as generated by uniform additive $\ell_\infty$
          noise over 1000 trials. Plots are shown for the NEMON model $\mu_\infty(A) \leq 0.95$ with $\lambda \in \{10^{-1}, 10^{-2.5}, 10^{-5}, 0\}$, the implicit deep learning model $\|A\|_{\infty} \leq 0.95$, and the monotone operator equilibrium network (MON) with $I_n-\frac{1}{2}(A+A^{\top})\succeq 0.05I_n$.}\label{fig:uniformNoise}
\end{figure}
For this attack, the test images are perturbed by an additive noise
with $\ell_\infty$ magnitude sampled uniformly from the interval
$[0,1]$. Plots of perturbed MNIST images under uniform additive
$\ell_\infty$-noise are shown in
Figure~\ref{fig:perturbedUniform}. Figure~\ref{fig:uniformNoise} shows
scatter plots of the accuracy of the NEMON model, the implicit deep learning
model, and the MON model over $1000$ sample attacks.

\paragraph{Fast gradient sign method.}
\begin{figure}[ht]\centering
	\begin{tabular}{c}
		\includegraphics[width = 0.9\linewidth,clip]{images/Unperturbed} \\
		\includegraphics[width = 0.9\linewidth,clip]{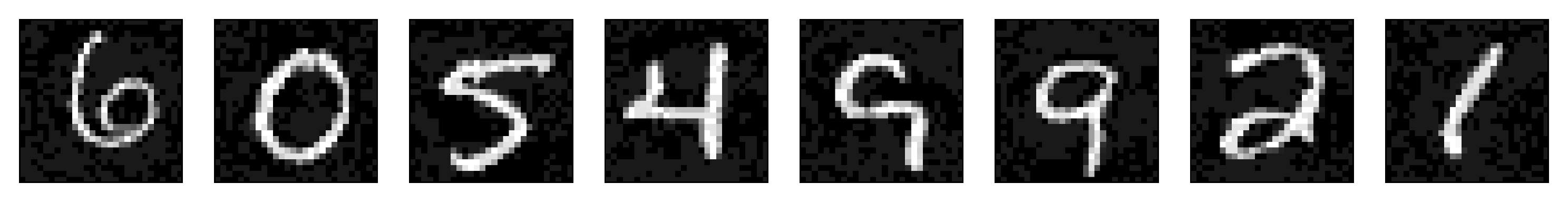} \\
		\includegraphics[width = 0.9\linewidth,clip]{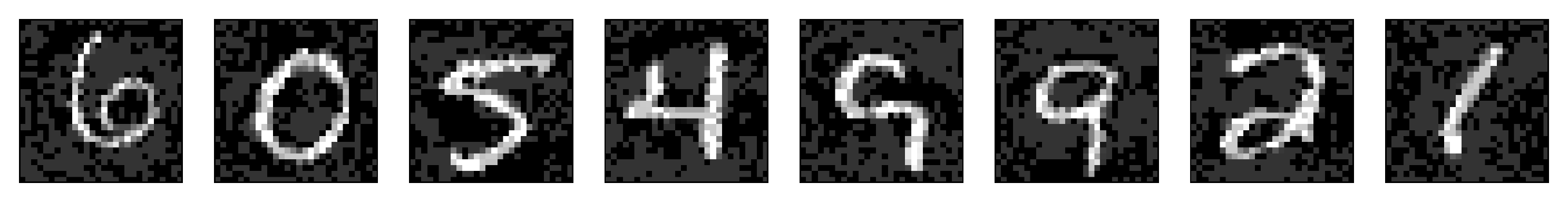} \\
		\includegraphics[width = 0.9\linewidth,clip]{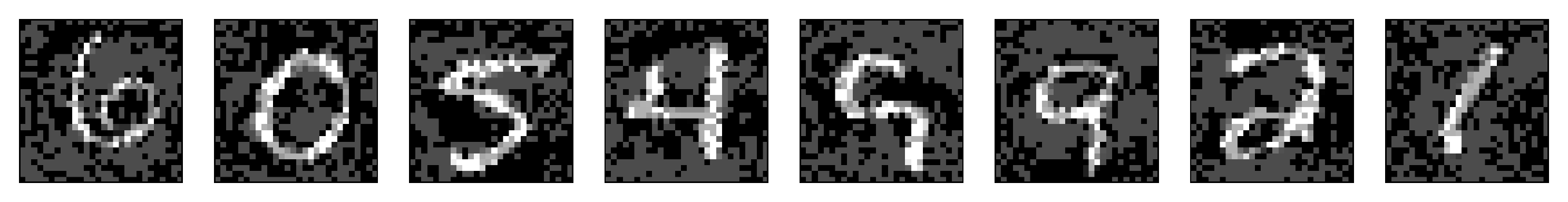} \\
		\includegraphics[width = 0.9\linewidth,clip]{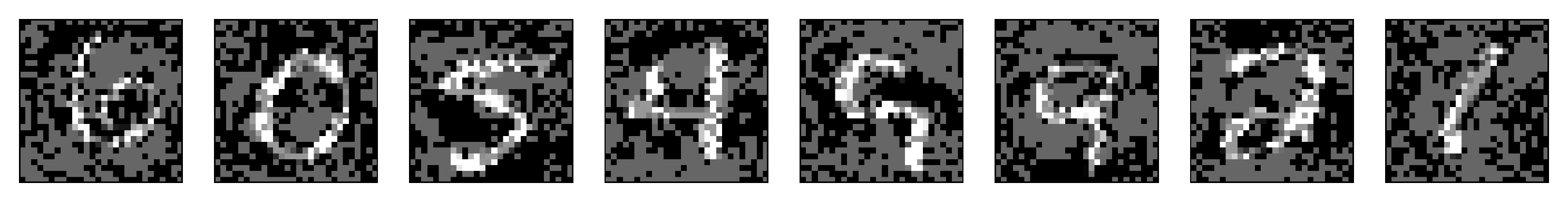} \\
	\end{tabular}
	\caption{Images of MNIST handwritten digits as perturbed by the FGSM. For $i\in
          \{1,\ldots,5\}$, row $i$ corresponds to an $\ell_{\infty}$ perturbation amplitude $\varepsilon=0.1 \times (i-1)$. In other words, the top row has unperturbed images, the second row has images that is perturbed by an $\ell_\infty$ amplitude $\varepsilon =0.1$, etc.}\label{fig:perturbedFGSM}
\end{figure}
\begin{figure}[ht]
	\centering
	\includegraphics[width=0.75\linewidth]{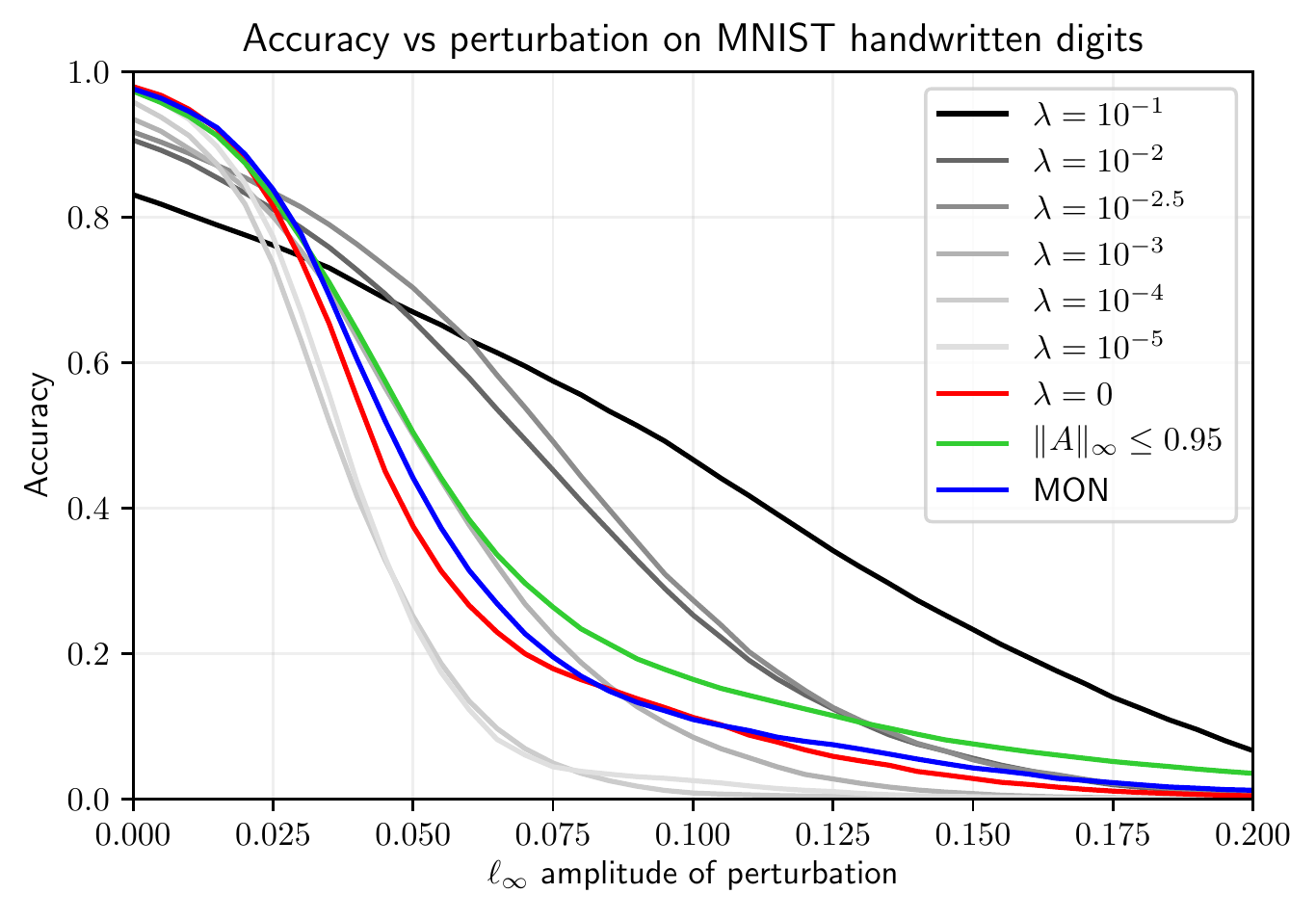}
	\caption{Plot of accuracy versus $\ell_\infty$ perturbation as
          generated by the FGSM for the NEMON model with $\mu_\infty(A) \leq
          0.95$, the implicit deep learning model with $\|A\|_\infty \leq
          0.95$, and MON with
          $I_n-\frac{1}{2}(A+A^{\top})\succeq 0.05I_n$. }\label{fig:FGSM}
\end{figure}
Given input data $U\in \real^{r\times m}$ and output labels $Y\in \real^{q\times m}$, the fast gradient sign method
(FGSM) generates adversarial inputs via the formula 
\begin{equation}
\subscr{U}{adversarial} = U + \varepsilon \sign\Big(\frac{\partial\mathcal{L}}{\partial U}(Y,CX+DU)\Big),
\end{equation}
where $\mathcal{L}$ is the loss function used to train the network and $\varepsilon$ provides the $\ell_{\infty}$ amplitude of the perturbation. Plots of perturbed MNIST images under the FGSM are shown in Figure~\ref{fig:perturbedFGSM}. Plots of accuracy versus $\ell_\infty$ perturbation under the FGSM are shown in Figure~\ref{fig:FGSM}.

\paragraph{Projected gradient descent method.}
\begin{figure}[ht]\centering
	\begin{tabular}{c}
		\includegraphics[width = 0.9\linewidth,clip]{images/Unperturbed} \\
		\includegraphics[width = 0.9\linewidth,clip]{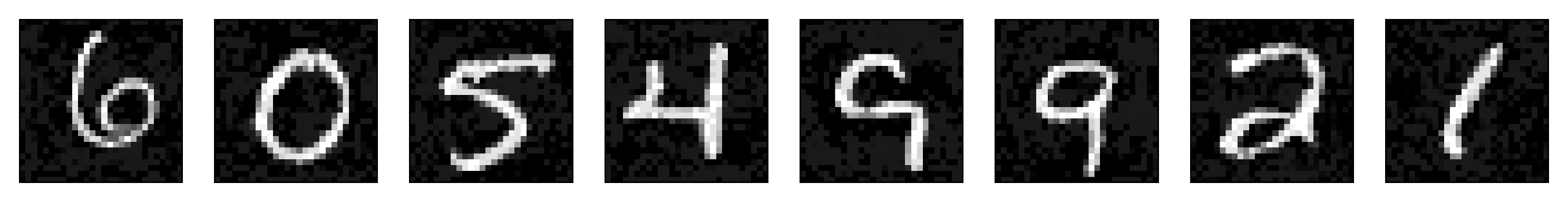} \\
		\includegraphics[width = 0.9\linewidth,clip]{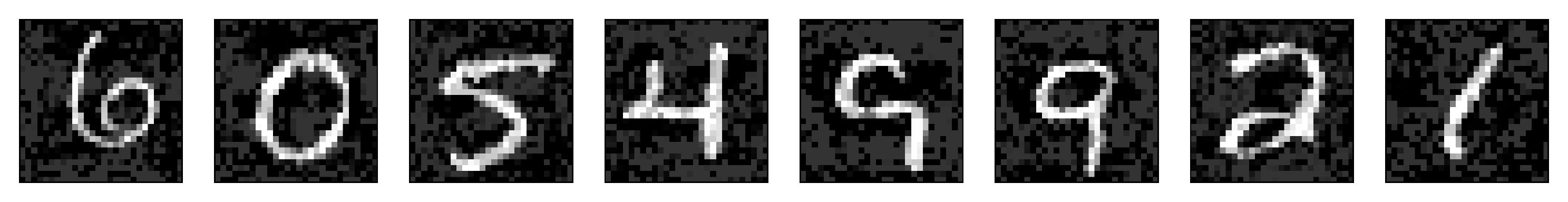} \\
		\includegraphics[width = 0.9\linewidth,clip]{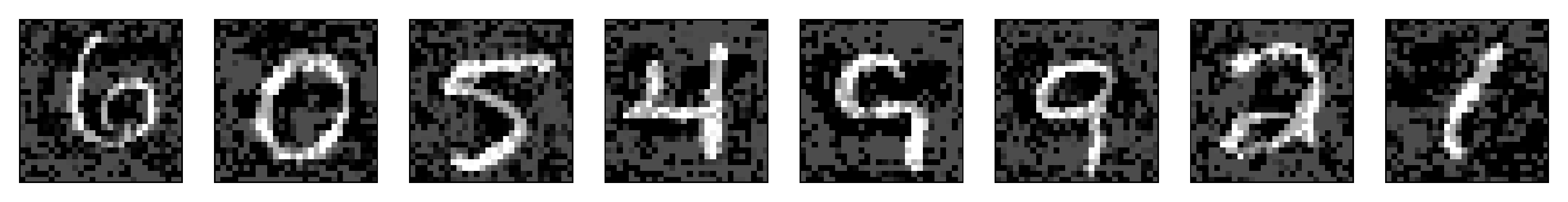} \\
		\includegraphics[width = 0.9\linewidth,clip]{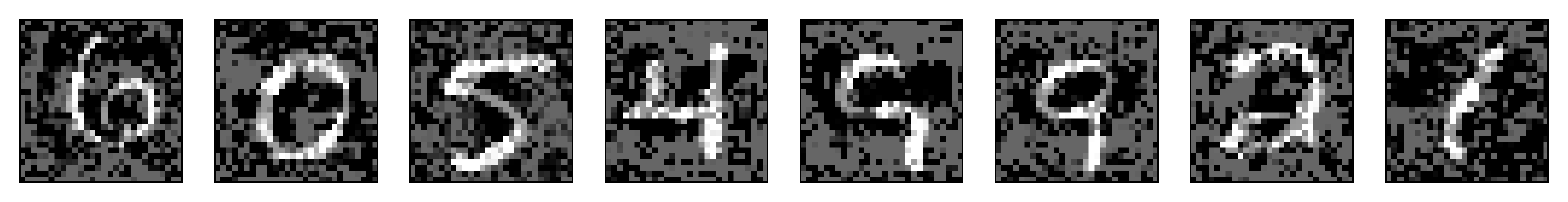} \\
	\end{tabular}
	\caption{Images of MNIST handwritten digits as perturbed by the PGDM. For $i\in
          \{1,\ldots,5\}$, row $i$ corresponds to an $\ell_{\infty}$ perturbation amplitude $\varepsilon=0.1 \times (i-1)$. In other words, the top row has unperturbed images, the second row has images that is perturbed by an $\ell_\infty$ amplitude $\varepsilon =0.1$, etc.}\label{fig:perturbedPGD}
\end{figure}
\begin{figure}[ht]
	\centering
	\includegraphics[width=0.75\linewidth]{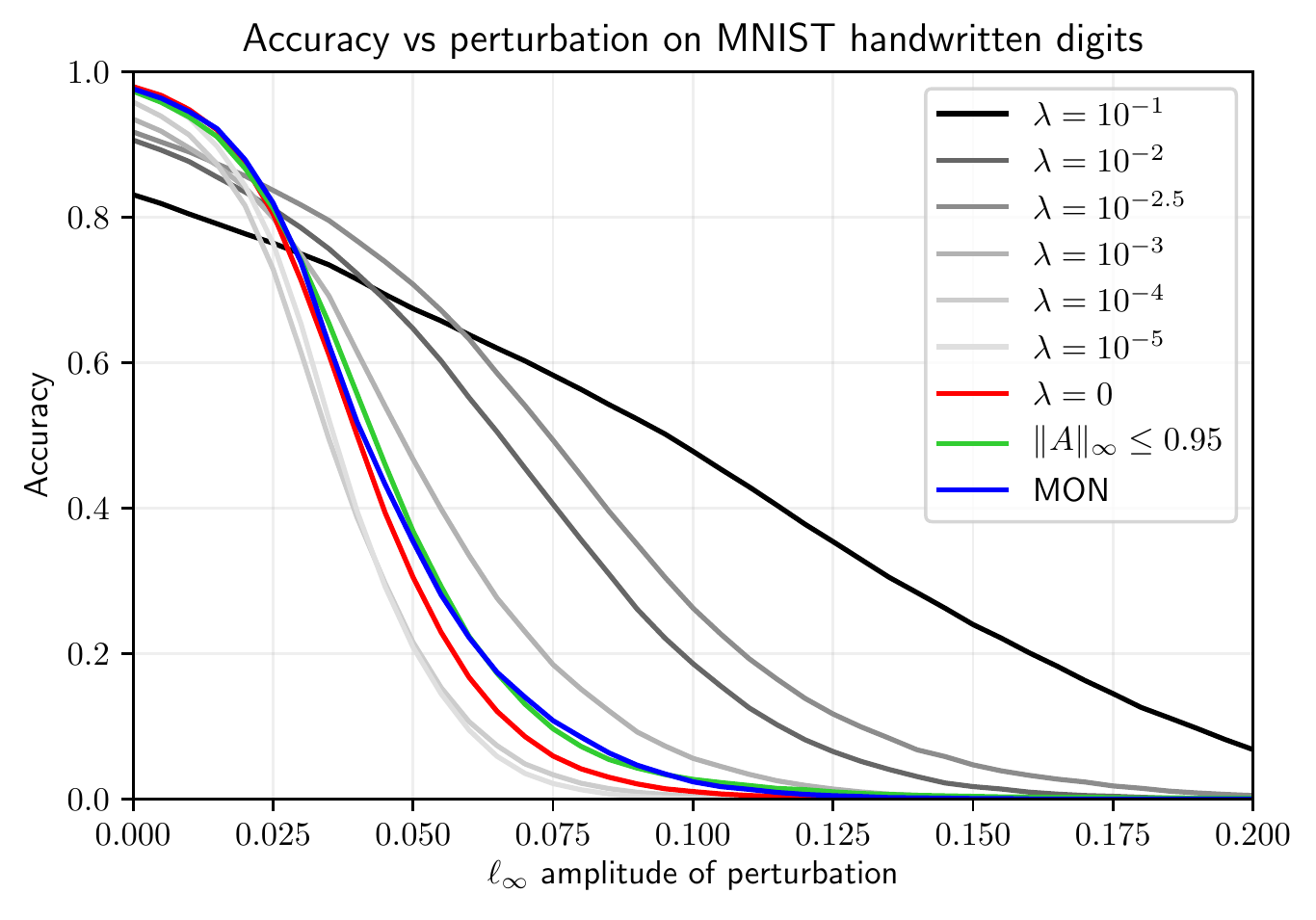}
	\caption{Plot of accuracy versus $\ell_\infty$ perturbation as
		generated by the PGDM for the NEMON model with $\mu_\infty(A) \leq
		0.95$, the implicit deep learning model with $\|A\|_\infty \leq
		0.95$, and MON with
		$I_n-\frac{1}{2}(A+A^{\top})\succeq 0.05I_n$. }\label{fig:PGDM}
\end{figure}
The projected gradient descent method (PGDM) can be thought of
as perturbing the input with several steps of the FGSM. The
PGDM attack can be defined for any norm, but for consistency, we reproduce it only for
  the $\ell_{\infty}$-norm. For the input data $U\in \real^{r\times
    m}$ and outputs $Y\in \real^{q\times m}$, PGDM defines the finite sequence of perturbations $\{\delta_k\}_{k=1}^M$ by 
\begin{equation}
\delta_{k+1} = \proj_{\overline{\mathcal{B}(\varepsilon)}}\Bigg(\delta_k + \alpha \sign\Big(\frac{\partial \mathcal{L}}{\partial U}(Y,CX + D(U + \delta_k)\Big)\Bigg),
\end{equation}
where $M$ is some prescribed maximum number of steps, $\alpha$ is a stepsize, and
$\proj_{\overline{\mathcal{B}(\varepsilon)}}$ is the $\ell_2$
  orthogonal projection operator onto the entrywise $\ell_{\infty}$ closed ball with radius
  $\varepsilon$. This projection operator corresponds to clipping each
  entry of the matrix so that it is in the range
  $[-\varepsilon,\varepsilon]$. Then, the perturbed input is simply
  \begin{equation*}
  \subscr{U}{adversarial} = U + \delta_{M}.
  \end{equation*}
  Plots of perturbed MNIST images under the PGDM are shown in Figure~\ref{fig:perturbedPGD}.
  Plots of accuracy versus $\ell_\infty$ perturbation under the PGDM are shown in Figure~\ref{fig:PGDM}.

\subsection{Other methods to decrease the $\ell_\infty$ Lipschitz constant}
Recall that the input-output Lipschitz constant of the model~\eqref{eq:inn} with both $\|\cdot\|_{\mcU}$ and $\|\cdot\|_{\mcY}$ equal to the $\ell_{\infty}$-norm is given by
\begin{equation*}
\Lip_{u \to y} = \frac{\norm{B}{(\infty,[\eta]^{-1}),(\infty)}
	\norm{C}{(\infty),(\infty,[\eta]^{-1})} }
{1-\mu_{\infty,[\eta]^{-1}}(A)_+} + \norm{D}{\infty,\infty}.
\end{equation*}
When input data, $U$, is perturbed, the perturbation is directly fed into the output $Y$ via the output equation
$Y = CX + DU$. For this reason, a simple change to attempt to minimize the effect of input perturbations on the output is to replace the $DU$ term in the output equation by a static bias, i.e.,
\begin{equation*}
Y = CX + b\vectorones[m]^\top,
\end{equation*}
where $b \in \real^q$. This simple modification to the model changes the input-output Lipschitz constant to 
\begin{equation*}
\Lip_{u \to y} = \frac{\norm{B}{(\infty,[\eta]^{-1}),(\infty)}
	\norm{C}{(\infty),(\infty,[\eta]^{-1})} }
{1-\mu_{\infty,[\eta]^{-1}}(A)_+}.
\end{equation*}

Finally, another degree of freedom is the parameter $\gamma<1$ in the
constraint $\mu_{\infty,[\eta]^{-1}}(A) \leq \gamma$. In all
previously shown experiments on MNIST, we selected $\gamma =
0.95$. From the expression for the input-output Lipschitz
constant of the network~\eqref{eq:InputOutputLipschitz},
$\mu_{\infty,[\eta]^{-1}}(A) = 0.95$ leads to a small denominator,
resulting in a relatively large input-output Lipschitz constant. A simple modification to moderate the Lipschitz constant is to impose $\mu_{\infty,[\eta]^{-1}}(A) \leq \epsilon$ for some small $\epsilon \geq 0$. This attempts to maximize the denominator in the expression for the Lipschitz constant. 

\begin{figure}[ht]
	\centering
	\includegraphics[width=0.75\linewidth]{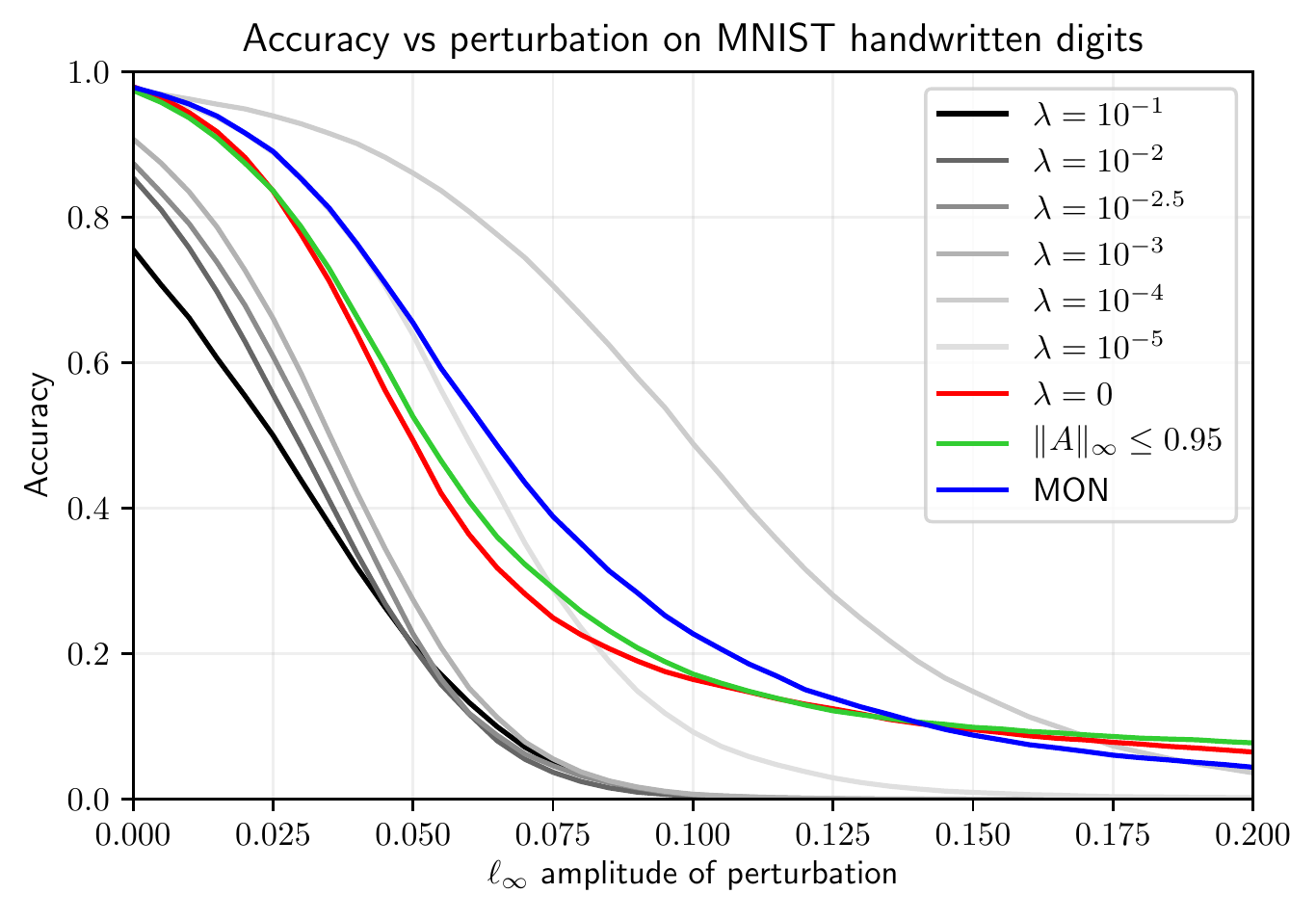}
	\caption{Plot of accuracy versus $\ell_\infty$ perturbation as
          generated by the FGSM for the NEMON model with $\mu_\infty(A) \leq
		0.05$, the implicit deep learning model with $\|A\|_\infty \leq
		0.95$, and MON with
		$I_n-\frac{1}{2}(A+A^{\top})\succeq 0.05I_n$. The output equation is $Y = CX + b\vectorones[m]^\top$.}\label{fig:bias}
\end{figure}

For these modifications to the models, plots of accuracy versus
$\ell_{\infty}$-perturbation generated by the FGSM are shown in
Figure~\ref{fig:bias}. In this figure, we set $\epsilon =0.05$ for
the NEMON models. For comparison, the well-posedness condition for MON is
set to be $\mu_2(A) \leq \epsilon$. We do not modify the condition $\|A\|_\infty \leq 0.95$ as imposing the constraint $\|A\|_\infty \leq \epsilon$ is overly restrictive and would result in a significant drop in accuracy. 


\subsection{Robustness of implicit neural networks on the MNIST dataset}

In this section, we compare the performance of the NEMON model with $\mu_\infty(A) \leq 0.95$
to the implicit deep learning model with $\|A\|_\infty \leq 0.95$ and
to the monotone operator equilibrium network (MON) with $I_n -
\frac{1}{2}(A+A^{\top}) \succeq 0.05I_n$
with respect to the attacks described in the previous subsection on
the MNIST dataset. 

For the continuous image inversion attack, Figure~\ref{fig:robustness}
shows the curves for accuracy versus $\ell_\infty$-amplitude of the
perturbation. We observe that, compared to the NEMON model, the implicit
deep learning model and the monotone operator equilibrium network (MON) have
larger drops in accuracy for small perturbations. For the NEMON model, as
$\lambda$ increases, the accuracy at zero perturbation
decreases. However, as $\lambda$ increases, the overall robustness of
NEMON improves as its accuracy does not decrease substantially
even for large amplitudes of perturbation.

For uniform additive $\ell_\infty$-noise, scatter plots with accuracy
versus $\ell_\infty$ amplitude of the perturbation are shown in
Figure~\ref{fig:uniformNoise}. We see that the NEMON model with
$\lambda = 0$, the implicit deep learning model, and the MON model all
perform comparably. The NEMON models with
$\lambda = 10^{-1}$ and $\lambda = 10^{-2.5}$ both see improved
robustness as their accuracy does not drop as noticeably with
$\ell_\infty$ amplitude of the perturbation. Surprisingly, the NEMON model
with $\lambda = 10^{-5}$ seems to be less robust than the NEMON model with
$\lambda = 0$.
 
For the FGSM, Figure~\ref{fig:FGSM} shows the curves for accuracy
versus $\ell_\infty$ amplitude of the perturbation. We see that the NEMON
models with $\lambda = 10^{-5}$ and $\lambda = 10^{-4}$ are the least
robust, followed by the NEMON model with $\lambda = 0$ and the MON. Only for
$\lambda \in \{10^{-2.5}, 10^{-2}, 10^{-1}\}$ do we see an improvement
in robustness for the NEMON model at the price of a decrease in nominal
accuracy. Note that for the FGSM, each model experiences different
perturbations.

For the PGDM, Figure~\ref{fig:PGDM} shows the curves for accuracy
versus $\ell_\infty$ amplitude of the perturbation. We see that the
results are comparable with the perturbation generated by the FGSM,
with the exception that the implicit deep learning model now performs
comparably with the monotone operator equilibrium model. Note that for
the PGDM, each model experiences different perturbations.

Finally, we compare the performance of the models with the
modification that the output equation is
$Y = CX + b\vectorones[m]^\top$. Figure~\ref{fig:bias} shows the
curves for accuracy versus $\ell_\infty$ amplitude of the FGSM
perturbation for the NEMON model with $\mu_\infty(A) \leq 0.05$, the
implicit deep learning model with $\|A\|_\infty \leq 0.95$, and the
monotone operator equilibrium model with $I_n -\frac{1}{2}(A+A^{\top})\succeq 0.05I_n$. For these modifications in the models, we see improvement in
overall accuracy compared to original models of implicit
networks~\eqref{eq:inn} shown in Figure~\ref{fig:FGSM}. Additionally, we observe comparable
performance in the NEMON model with $\lambda = 0$ and the implicit deep
learning model, with the MON performing slightly
better than both.  For the NEMON model with $\lambda = 10^{-4}$, the accuracy at zero
perturbation is comparable to the NEMON model with $\lambda = 0$ and the
overall robustness of the NEMON model to the FGSM attack is significantly 
improved. However, as $\lambda$ increases, we see that the nominal
accuracy and overall robustness of the NEMON models deteriorate.

\subsection{Robustness of implicit neural networks on the CIFAR-10
  dataset}

{\color{black}In this section, we compare the performance of the NEMON model with $\mu_\infty(A) \leq 0$
to MON with $I_n -
\frac{1}{2}(A+A^{\top}) \succeq I_n$
with respect to the FGSM attack described in the previous subsection on
the CIFAR-10 dataset.}

\begin{figure}[ht]
	\centering
	\includegraphics[width=0.75\linewidth]{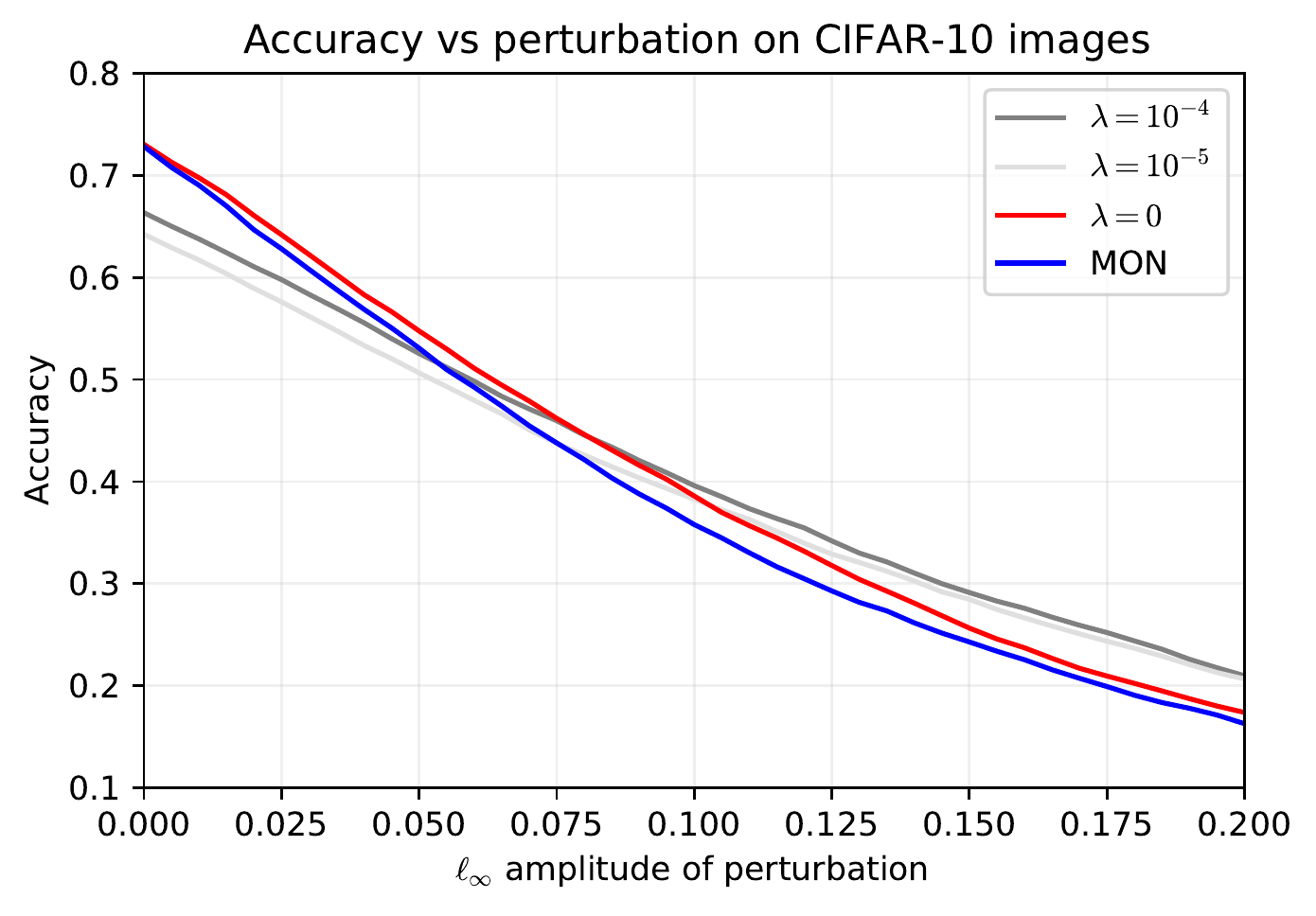}
	\caption{Plot of accuracy versus $\ell_\infty$ perturbation as
          generated by the FGSM for the NEMON model with
          $\mu_\infty(A) \leq 0$ and MON with
          $I_n-\frac{1}{2}(A+A^{\top})\succeq I_n$. }\label{fig:CIFAR-10-FGSMGS}
      \end{figure}

      {\color{black}For the FGSM attack on the CIFAR-10 dataset,
        Figure~\ref{fig:CIFAR-10-FGSMGS} shows the accuracy versus the
        $\ell_{\infty}$ amplitude of the perturbation for the regularized and
        un-regularized NEMON model and the MON model. We observe that
        un-regularized NEMON model is more accurate than MON for all amplitudes of perturbation. For
        example, at $\ell_{\infty}$-perturbation equal to $0.1$, the
        accuracy of un-regularized NEMON is $39 \%$ whereas the accuracy of MON
        at this attack amplitude is $35 \%$. Moreover, the regularized NEMON
        with the regularization parameter $\lambda =10^{-4}$ has a clean
        performance accuracy of $66 \% $ which is lower than the clean
        accuracy of both MON and the un-regularized NEMON. However,
        the regularized NEMON demonstrates a consistent improvement in
        accuracy for sizeable $\ell_{\infty}$-perturbations. For
        example, at an $\ell_{\infty}$-perturbation equal to $0.15$,
        the accuracy of the regularized NEMON model is $29 \%$ whereas the
        accuracy of MON at this attack amplitude is $24 \%$.}
            

\clearpage

\end{document}